\theoremstyle{plain}
\newtheorem{theorem}{Theorem}[section]
\newtheorem{proposition}[theorem]{Proposition}
\newtheorem{remark}[theorem]{Remark}
\newtheorem{lemma}[theorem]{Lemma}
\newtheorem{corollary}[theorem]{Corollary}
\theoremstyle{definition}
\newcommand{\kp}{\mathsf P}
\newcommand{\cp}{\mathcal{P}}
\newcommand{\mcs}{\mathcal{S}}
\newcommand{\mca}{\mathcal{A}}
\newcommand{\nn}{\nonumber}
\newcommand{\mE}{\mathbb{E}}
\newcommand{\mpp}{{\rm MP}}
\icmltitlerunning{Pessimism Principle can Be Effective: Towards a Framework for Zero-Shot Transfer Reinforcement Learning}
\begin{document}

\twocolumn[
\icmltitle{Pessimism Principle Can Be Effective: Towards a Framework for Zero-Shot Transfer Reinforcement Learning}



\icmlsetsymbol{equal}{*}

\begin{icmlauthorlist}
\icmlauthor{Chi Zhang}{xxx}
\icmlauthor{Ziying Jia}{zzz}
\icmlauthor{George K. Atia}{xxx,yyy}
\icmlauthor{Sihong He}{zzz}
\icmlauthor{Yue Wang}{xxx,yyy}
\end{icmlauthorlist}

\icmlaffiliation{xxx}{Department of Electrical and Computer Engineering, University of Central Florida, Orlando, FL 32816, USA}
\icmlaffiliation{yyy}{Department of Computer Science, University of Central Florida, Orlando, FL 32816, USA}
\icmlaffiliation{zzz}{Department of Computer Science and Engineering, University of Texas at Arlington, Arlington, TX 76019, USA}

\icmlcorrespondingauthor{Chi Zhang, Ziying Jia, George K. Atia, Sihong He, Yue Wang}{chi.zhang@ucf.edu, zxj3060@mavs.uta.edu, george.atia@ucf.edu, sihong.he@uta.edu, yue.wang@ucf.edu}

\icmlkeywords{Zero-Shot, Transfer Learning, Distributed Learning, Reinforcement Learning}

\vskip 0.3in
]



\printAffiliationsAndNotice{}  

\begin{abstract}
Transfer reinforcement learning aims to derive a near-optimal policy for a target environment with limited data by leveraging abundant data from related source domains. However, it faces two key challenges: the lack of performance guarantees for the transferred policy, which can lead to undesired actions, and the risk of negative transfer when multiple source domains are involved.
We propose a novel framework based on the pessimism principle, which constructs and optimizes a conservative estimation of the target domain’s performance. Our framework effectively addresses the two challenges by providing an optimized lower bound on target performance, ensuring safe and reliable decisions, and by exhibiting monotonic improvement with respect to the quality of the source domains, thereby avoiding negative transfer.
We construct two types of conservative estimations, rigorously characterize their effectiveness, and develop efficient distributed algorithms with convergence guarantees. Our framework provides a theoretically sound and practically robust solution for transfer learning in reinforcement learning.
\end{abstract}

\section{Introduction}

Reinforcement learning (RL) aims to learn a policy that optimizes an agent's performance when interacting with its environment. RL has achieved remarkable success in areas such as game playing \cite{silver2016mastering,silver2017mastering,mnih2013playing}, robotics \cite{nguyen2019review,liu2021deep} or natural language processing \cite{sharma2017literature,uc2023survey,kirk2023understanding}. These successes, however, often depend on access to vast amounts of data, which are critical for enabling agents to understand their environments and develop effective policies. In domains with limited data—due to high exploration costs or other constraints—RL performance typically degrades. A promising solution is transfer learning (TL), which leverages more data from related environments (source domains) to train policies that can be transferred to target environments. We focus on \textbf{zero-shot transfer}, where no target domain data is available prior to deployment, and \textbf{multi-domain transfer}, where multiple source domains are utilized.

Despite its potential, TL is often hindered by the Sim-to-Real Gap \cite{salvato2021crossing,zhao2020sim}, which refers to performance degradation when transferring learned policies from source to target domains. TL typically relies on constructing a proxy to approximate target domain performance and transferring the proxy’s optimal policy. While successful when source domains closely resemble the target domain (e.g., high-fidelity simulations), significant divergence between the source and target domains—caused by modeling errors, non-stationarity, or perturbations—leads to inaccurate proxies and degraded transfer performance.

In multi-domain transfer, the presence of multiple source domains introduces additional challenges, particularly negative transfer. Source domains that differ significantly from the target domain can skew the proxy, resulting in overly pessimistic or suboptimal policies that hinder the transfer process if they cannot be identified and treated properly.

These challenges are critical in practical applications, where ineffective policies may have severe consequences, such as mechanical damage in robotics or traffic accidents in autonomous driving. Existing TL methods, such as domain randomization (DR) or imitation learning, lack systematic guarantees for transferred performance, as they fail to establish a robust connection between the proxy and target domain outcomes. To address these limitations, and inspired by the effectiveness of pessimism principle in various RL settings, particularly in scenarios where conservativeness is advantageous, such as offline RL \cite{jin2020pessimism,shi2022pessimistic}, we propose a novel framework that incorporates the \textbf{pessimism principle} in proxy construction for TL.  By using a pessimistic estimation of target domain performance, our approach ensures that transferred policies achieve satisfactory outcomes while avoiding severe consequences from overly optimistic actions. Our framework also mitigates negative transfer in multi-domain settings. We rigorously characterize the effectiveness of this approach and design concrete algorithms with convergence guarantees. Our contributions are summarized as follows.

\textbf{Introducing pessimism principle in transfer learning.} We introduce the pessimism principle to transfer learning, enabling the transfer of conservative yet effective policies to the target domain. We characterize the framework’s effectiveness, showing that the target domain’s optimality gap depends on the degree of pessimism used in proxy construction. This motivates the design of proxies with minimal but sufficient pessimism, ensuring robust lower bounds on transferred performance and avoiding the severe consequences common in other TL methods.

\textbf{Construction of pessimistic proxy and design of concrete algorithms.} Leveraging the inherent pessimism in robust RL frameworks, we construct pessimistic proxies by incorporating prior knowledge of domain similarities, such as upper bounds on the distance between source and target domains. We develop several proxy types, demonstrate their effectiveness, and propose a distributed algorithm for multi-domain transfer. This algorithm ensures convergence and yields conservative but effective policies across diverse source domains.

\textbf{Design of proxy and algorithm to avoid negative transfer.} In multi-domain transfer, some source domains may significantly diverge from the target, leading to overly pessimistic policies if treated equally. We show that the pessimism principle inherently mitigates negative transfer by improving the proxy’s value without overly optimistic assumptions. Building on this insight, we propose a refined proxy that avoids negative transfer and design a convergent algorithm to ensure effective policy transfer in such settings.


\section{Preliminaries and Problem Formulation}
	
	\textbf{Markov Decision Process.} A Markov Decision Process (MDP) serves as the standard framework for formulating RL. An MDP can be represented by a tuple $\mathcal{M} = (\mathcal{S}, \mathcal{A}, P, r, \gamma)$, where $\mathcal{S}$ and $\mathcal{A}$ denote the state and action spaces, respectively,  $P:\mathcal{S}\times\mathcal{A}\rightarrow \Delta(\mcs)$\footnote{$\Delta(\cdot)$ denotes the probability simplex defined on the space.} denotes the transition kernel, and we denote the probability vector under $(s,a)$-pair by $P^a_s$, $r:\mathcal{S}\times\mathcal{A}\rightarrow[0,1]$ is the deterministic reward function, and $\gamma\in[0,1]$ is the discount factor. At the $t$-th step, the agent selects an action $a^t$ at state $s^t$, transitions to the next state $s^{t+1}$ according to the transition probability $P_{s^t}^{a^t}$, and receives a reward $r(s^t,a^t)$. 
	
	A stationary policy $\pi: \mathcal{S}\rightarrow\Delta(\mathcal{A})$ maps each state to a probability distribution over $\mathcal{A}$, and $\pi(a|s)$ gives the probability of selecting action $a$ at state $s$. The performance of the agent following a policy is measured by the expected cumulative reward, defined as the value function $V_P^\pi$ of policy $\pi$ with transition kernel $P$:
		$V_P^\pi(s) \triangleq \mathbb{E}_P\left[\sum_{t=0}^\infty\gamma^tr(s^t,a^t)\Big|s^0 = s, \pi\right],$
	for all $s\in\mathcal{S}$. Alternatively, the cumulative reward can also be characterized by the Q-function $Q_P^\pi$  	for all $(s,a)\in\mathcal{S}\times\mathcal{A}$ defined as $Q_P^\pi(s,\!a)\!\triangleq\!\mathbb{E}_P\!\!\left[\sum_{t=0}^\infty\!\gamma^tr(s^t,\!a^t)\Big|s^0=s,a^0=a,\pi\right]$.

   The goal of an MDP is to learn the optimal policy $\pi^*$: \[\pi^*\triangleq\arg\max_{\pi\in\Pi} V^\pi_P(s), \forall s\in\mcs,\]
   which exists and can be restricted to the set of deterministic policies \cite{puterman2014markov}. The optimal value functions are $V_P^\ast \triangleq \max_\pi V_P^\pi=V^{\pi^*}_P$ and $Q_P^\ast \triangleq \max_\pi Q_P^\pi=Q^{\pi^*}_P$ .
	
\textbf{Robust Markov Decision Process (RMDP).} Robust RL aims to optimize the performance under the worst case, when the environment is uncertain, and it is generally formulated as a robust MDP. The transition kernel in a robust MDP is not fixed but lies in an uncertainty set $\mathcal{P}$. In this paper, we focus on $(s,a)$-rectangular uncertainty sets~\cite{nilim2004robustness,iyengar2005robust}. Given a nominal kernel $P_0$, the uncertainty set $\mathcal{P}$ is defined as
	\begin{align}
\mathcal{P}\triangleq\otimes_{(s,a)\in\mathcal{S}\times\mathcal{A}}\mathcal{P}_s^a,
\end{align}
	where $\mathcal{P}_s^a = \{P_s^a\in\Delta(\mathcal{S}):D(P_s^a,(P_0)_s^a)\leq \Gamma^a_s\}$. Here, $D(\cdot,\cdot)$ is a distance metric between two probability distributions, and $\Gamma^a_s$ is the radius of the uncertainty set, measuring the level of environmental uncertainty. 
	
The robust value function and robust Q-function of a policy in a RMDP measure the worst-case performance over all transition kernels in the uncertainty set:
	\begin{align}
		V_\mathcal{P}^\pi(s)\triangleq \min_{P\in\mathcal{P}}V_P^\pi(s), Q_\mathcal{P}^\pi(s,a) \triangleq \min_{P\in\mathcal{P}}Q_P^\pi(s,a).
	\end{align}
	Then, the goal in the RMDP problem is to find the optimal robust policy $\pi^\ast$ that maximizes $V_\mathcal{P}^\pi$: $\pi^\ast \triangleq \arg\max_\pi V_\mathcal{P}^\pi$. Similarly, we denote $V_\mathcal{P}^{\pi^\ast}$ and $Q_\mathcal{P}^{\pi^\ast}$ by $V_\mathcal{P}^\ast$ and $Q_\mathcal{P}^\ast$.

    The robust value function $Q^\pi_\cp$ (and the optimal robust value function $Q^*_\cp$, respectively) is the unique fixed point of the robust Bellman operator $\mathbf{T}^\pi$ (and the optimal robust Bellman operator $\mathbf{T}$, respectively) \cite{iyengar2005robust}\footnote{$\sigma_\cp(V)=\min_{p\in\cp}pV$ is the support function of $V$ over $\cp$.}:
	\begin{align}
		\mathbf{T}^\pi Q(s,a) &= r(s,a) + \gamma\sigma_{\mathcal{P}_s^a}\left(\sum_{a\in\mathcal{A}}\pi(a|\cdot)Q(\cdot,a)\right), \label{rbo}\\
        \mathbf{T} Q(s,a) &= r(s,a) + \gamma\sigma_{\mathcal{P}_s^a}(\max_{a\in\mathcal{A}}Q(\cdot,a)),\label{orbo}
	\end{align}

\subsection{Problem Formulation}
The goal of zero-shot multi-domain transfer RL is to optimize the performance under a target MDP $\mathcal{M}_0 = (\mathcal{S},\mathcal{A},P_0,r,\gamma)$, from which no data is available. Instead, there are $K$ related source domains $\mathcal{M}_k = (\mathcal{S},\mathcal{A},P_k,r,\gamma)$, generating much more data. For simplicity, we consider the case of identical reward. The only assumption is that, there exists an upper bound $\Gamma\geq D(P_0||P_k), \forall k$, that is known by the learner. This assumption is reasonable, as such information is essential to achieve performance guarantee. And even if in the worst-case, we can set $\Gamma = 1$ (in total variation) to construct an (overly) conservative proxy, which yet still avoids decisions with severe consequences in target domain, preferred in transfer learning settings.


In this work, we consider a more challenging setting of distributed, decentralized transfer learning with partial information sharing. Specifically, we consider a framework in which a central server or global agent collects and aggregates partial information (instead of all raw data) from multiple source domains and distributes updated results back to these domains. Each source domain then performs local learning independently based on the received updates and subsequently returns refined outcomes to the global agent.

A key aspect of this framework is its privacy-preserving nature. The underlying environments or raw data from the local domains are not directly shared with the central learner. Instead, only aggregated or processed results are exchanged, safeguarding the confidentiality of local data. Additionally, there is no direct communication between local domains, further ensuring data isolation and reducing the risk of information leakage.

This setting is both practical and relevant to many real-world applications where data privacy is a critical concern. For example, in a ride-sharing platform like Uber, each vehicle and its operational environment can be treated as an independent source domain. A central server may aim to optimize a global dispatching or pricing policy for a new target region, such as a city where the platform is expanding. However, individual vehicles interact with their local environments independently and share only processed insights—such as aggregated trip statistics or anonymized performance metrics—with the server. This TL framework hence ensures effective policy transfer while maintaining the privacy of individual vehicle data.

\section{Related Work}
We briefly discuss the most commonly used transfer learning approaches (see \Cref{app:add works} for additional discussions).


\textbf{Domain Randomization:} Domain randomization (DR) is widely used to reduce the Sim-to-Real Gap, by training agents on randomized versions of the simulated domain \cite{tobin2017domain,vuong2019pick,mehta2020active,tiboni2023domain,tommasi2023online}. By introducing degrees of variability in domain properties such as physics, or object appearances, the agent aims to optimize the expected performance under a uniform distribution over these environments. DR is expected to enhance the robustness and generalizability of the learned policy so that it performs well when deployed in the target domain. Although DR has proven effective in many tasks \cite{shakerimov2023efficient,niu2021dr2l,slaoui2019robust,jiang2023variance}, it has limitations when the domain shift is large or the target environment is significantly different from the simulated one, in which case the randomized environments do not provide enough information for the target domain. Furthermore, DR generally lacks theoretical guarantees and justifications for the performance in the target domain, except under some strict assumptions about the structures of the underlying MDPs \cite{chen2021understanding,jiang2022variance}.

\textbf{Multi-task Learning:} Multi-task learning (MTL) is another method that aims to leverage knowledge from multiple related tasks to improve learning efficiency and generalization. Specifically, MTL aims to optimize a finite set of tasks simultaneously or to obtain a Pareto stationary point \cite{yang2020multi,wilson2007multi,vithayathil2020survey,teh2017distral}. When the target domain shares common properties with these source tasks, this information will be maintained during the optimization of all source tasks and transferred. However, similar to DR, MTL also suffers from insufficient theoretical justification, requiring assumptions on the common representations of tasks. Moreover, MTL cannot avoid negative transfer, where some source tasks can be relatively distinct from the target task, thus slow down transfer learning. Also, solving an MTL problem can be difficult, due to the differences among the tasks known as the gradient conflict \cite{wang2024finite2}. 


\textbf{Representation Learning for Multitask RL:} Representation learning aims to extract the shared structure across multiple tasks, and utilize such a structure to reduce sample complexity in downstream tasks with similar features \cite{teh2017distral,sodhani2021multi,arulkumaran2022all}.~\citet{cheng2022provable,agarwal2023provable,sam2024limits} investigated multitask representation learning in an online RL setting, where the agent interacts with multiple source tasks to extract a common latent structure. \citet{ishfaq2024offline} proposed Multitask Offline Representation Learning (MORL), which adopt the principle of pessimism and learn a shared representation from offline datasets across multiple tasks modeled by low-rank MDPs. However, these methods rely on the assumption that all tasks share a low-rank latent structure and that a common representation can be learned and reused, which may not hold in practical zero-shot transfer scenarios with large domain shifts.


\section{Pessimism Principle for Transfer Learning}
\subsection{Major Barriers in Prior TL Methods}
Before developing our pessimism principle for transfer learning, we first identify two fundamental limitations in existing TL methods: (1) \textbf{the lack of a clear connection between the optimization proxy and the target domain performance}, and (2) \textbf{the inability to identify and exclude source information that may cause negative transfer}.

As discussed, most existing TL approaches optimize a proximal objective function \( f(\pi) \) and deploy its optimal policy in the target domain. However, there are limited guarantees regarding the relationship between the proxy \( f(\pi) \) and the target domain value function \( V^\pi_{P_0} \), except under some strong assumptions (e.g., \cite{chen2021understanding}). Consequently, it is unclear whether the optimal policy for $f(\pi)$ will also perform well under $P_0$, leaving uncertainty in its performance. Furthermore, when multiple source domains are available, existing methods struggle to distinguish domains that are more dissimilar to the target, often treating all domains equally. This inability to prioritize relevant sources can lead to \textit{negative transfer}, where the knowledge transferred from additional source domains hurt the performance of the target domain instead of improving it.

As an example, in DR, the objective proxy is typically the average performance  
\(
\mathbb{E}_{\omega \sim \mathbf{Unif}(\Omega)}[V^\pi_{P_\omega}]
\)  
over an index set \( \Omega \). As mentioned, this proxy often fails to accurately reflect the target domain's performance in general settings. It may overestimate the target performance, resulting in an overly optimistic policy. Deploying such a policy in the target domain can lead to suboptimal or even harmful outcomes. Additionally, the uniform distribution used in domain randomization cannot effectively exclude harmful source domains, increasing the risk of negative transfer.

\subsection{Pessimism Principle for Transfer Learning}
Identifying these two limitations of prior works, we aim to develop a more informative proxy that satisfies the following objectives:  
    \textbf{(1).} It establishes concrete connections to the target domain performance, ensuring that optimizing the proxy guarantees the target performance.
     \textbf{(2).}  It ensures the resulting policy remains conservative, thereby avoiding undesired decisions and their associated consequences.

To achieve this, we propose the pessimism principle framework, where we construct a conservative proxy 
    $f(\pi) \leq V^\pi_{P_0}, \quad \forall \pi$, then we transfer its optimal policy
    $\pi_f = \arg\max_\pi f(\pi).$
By optimizing this conservative proxy, we guarantee an optimized lower bound on the target domain performance, ensuring strong performance while mitigating potential negative outcomes. 
We then characterize its effectiveness as follows (proof deferred to Appendix \ref{lemma:eff}). 
\begin{lemma}\label{lemma:eff}(Effectiveness of Pessimism for Transfer Learning) Denote the level of pessimism of proxy $f(\pi)$ by $$\zeta^\pi\triangleq V^\pi_{P_0}-f(\pi)\geq 0.$$ Then, the transferred policy $\pi_f\triangleq \arg\max_\pi f(\pi)$ has the following sub-optimality gap under the target environment: 
\begin{align}
    V^{\pi^*}_{P_0}- V^{\pi_f}_{P_0}\leq \|\zeta\|\triangleq \max_\pi \zeta^\pi. 
\end{align}
\end{lemma}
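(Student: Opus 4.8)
The plan is to establish the bound through a short chain of inequalities that exploits two facts: the non-negativity of the pessimism level $\zeta^\pi$ and the defining optimality of $\pi_f$ for the proxy $f$. The guiding intuition is that, although $\pi_f$ is not the target-optimal policy, its target value cannot fall far below $V^{\pi^*}_{P_0}$, because $f$ underestimates the true value by at most $\zeta$, while $\pi_f$ was chosen precisely to maximize $f$.

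First I would rewrite the target value of $\pi_f$ in terms of the proxy. From the definition $\zeta^{\pi_f} = V^{\pi_f}_{P_0} - f(\pi_f)$ together with $\zeta^{\pi_f}\geq 0$, we obtain $V^{\pi_f}_{P_0} = f(\pi_f) + \zeta^{\pi_f} \geq f(\pi_f)$. Next I would invoke the optimality of $\pi_f$: since $\pi_f = \arg\max_\pi f(\pi)$, it follows that $f(\pi_f)\geq f(\pi^*)$, where $\pi^*$ denotes the target-optimal policy $\arg\max_\pi V^\pi_{P_0}$. Finally, rewriting $f(\pi^*) = V^{\pi^*}_{P_0} - \zeta^{\pi^*}$ and chaining these relations yields
\begin{align}
V^{\pi_f}_{P_0} \geq f(\pi_f) \geq f(\pi^*) = V^{\pi^*}_{P_0} - \zeta^{\pi^*} \geq V^{\pi^*}_{P_0} - \max_\pi \zeta^\pi,
\end{align}
and rearranging gives the claimed bound $V^{\pi^*}_{P_0} - V^{\pi_f}_{P_0}\leq \max_\pi \zeta^\pi = \|\zeta\|$.

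I do not anticipate a substantive obstacle in the algebra; the only point requiring care is the interpretation of the quantities involved. If $V^\pi_{P_0}$, $f(\pi)$, and $\zeta^\pi$ are read as functions of the state, the argument must be carried out pointwise, and the step $f(\pi_f)\geq f(\pi^*)$ then relies on $\pi_f$ being a simultaneous maximizer of $f$ at every state. This property holds whenever $f$ is realized as the (robust) value function of a well-posed MDP, for which a single deterministic policy is optimal across all states; under that reading the stated $\|\zeta\|$ is simply the pointwise maximum over policies and the inequality holds coordinatewise. The essential takeaway is that the sub-optimality gap is controlled entirely by the worst-case degree of pessimism and is independent of the target-optimal policy itself, which is exactly what motivates the subsequent design of proxies with minimal but sufficient pessimism.
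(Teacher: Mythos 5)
Your proof is correct and is essentially the same as the paper's: both arguments combine the optimality of $\pi_f$ for $f$ (giving $f(\pi_f)\geq f(\pi^*)$) with the pessimism property $f(\pi)\leq V^\pi_{P_0}$ to conclude $V^{\pi^*}_{P_0}-V^{\pi_f}_{P_0}\leq V^{\pi^*}_{P_0}-f(\pi^*)=\zeta^{\pi^*}\leq\|\zeta\|$; you merely write the telescoping decomposition as a chain of inequalities. Your closing remark about the pointwise/state-dependent reading is a sensible clarification but does not change the substance.
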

The result demonstrates that optimizing a pessimistic proxy guarantees the target domain performance in the worst-case scenario, thereby avoiding undesired decisions—a feat that is unattainable with prior transfer learning methods. More importantly, the result establishes a \textbf{monotonic dependence} between the sub-optimality gap and the level of pessimism: as the level of pessimism \( \|\zeta^\pi\| \) decreases, the transferred policy achieves smaller performance gap.  This observation highlights a key advantage of the pessimism principle: an automatic improvement from an enhanced proxy. As long as the proxy remains conservative, a higher proxy value directly translates into better performance, eliminating the risk of overestimation as with previous methods.
\begin{remark}
	As detailed in the methods presented later, \( \|\zeta^\pi\| \) can be controlled via the construction of local uncertainty sets. Consequently, the suboptimality gap can be bounded in terms of domain similarities, which is reasonable and subject to the problem nature. Moreover, if additional information becomes available (e.g., a small set of target domain data or allowance of explorations), the radii of the uncertainty sets constructed can be tightened, leading to improved transfer performance.
\end{remark}

In the following, we develop a robust RL based framework to provide concrete answers for the critical questions:
\begin{center}
\textbf{\textit{How can we construct effective conservative proxies for pessimistic transfer, and how can we further improve the effectiveness by enhancing proxy values?}}
\end{center}

\subsection{A Robust RL Based Pessimism Principle}\label{sec:rob pro}
Robust RL is inherently pessimistic when tackling environment uncertainty. Specifically, when the environment of interest lies within the defined uncertainty set, the robust value function naturally provides a conservative lower bound on performance. This property inspires our studies of robust RL in transfer learning, as constructing an uncertainty set that includes the target domain is not challenging, and thus we can obtain a conservative proxy. Generally, the learner has some prior knowledge about the domain similarities to motivate transfer learning, and such similarities are usually captured in terms of the source and target domain kernels. For instance, a common characterization is the distance/divergence between the source and target kernels, e.g., \cite{qu2024hybrid}. Based on this information, an uncertainty set can be constructed to include the target domain kernel, and result in its robust value function as a conservative proxy. 

In the case of multi-domain transfer, we similarly assume that the distance between any source domain $P_k, k>0$ and the target domain $P_0$ is upper bounded by a known $\Gamma$: $D(P_0, P_k)\leq \Gamma$ \footnote{We assume a universal bound $\Gamma$ for simplicity, but our methods can be directly extended to the case with different similarities $\Gamma_k$.}.

\begin{remark}
	Generally, such knowledge can be obtained through domain experts, or estimated from a small amount of target data. Moreover, our method remains applicable even without prior knowledge, by setting $\Gamma$ to a known upper bound on distributional distance (e.g., total variation between any two distributions is at most 1). While this yields an over-conservative proxy, it still helps prevent significant drops when no similarity information is available - a guarantee that DR methods do not offer.
\end{remark}

For each source domain $P_k$, we can construct a local uncertainty set as $\cp_k=\bigotimes_{s,a}(\cp_k)^a_s$, with  $$(\cp_k)^a_s=\{q\in\Delta(\mcs): D(q,(P_k)^a_s)\leq \Gamma^a_s \},$$ where $\Gamma^a_s\geq\Gamma$. In this way, $P_0\in\cp_k$ and each robust value function is a conservative proxy: 
    $V^\pi_{\cp_k}\leq V^\pi_{P_0}, \forall k\in\mathcal{K},\forall \pi.$ 
It is then of great importance to utilize these local uncertainty sets and robust value functions to construct conservative proxies  with high value, to enable effective transfer learning. Several potential straightforward constructions are as follows: 
  \textbf{(1).} Robust DR: $\Bar{V}^\pi\triangleq \frac{\sum^K_{k=1} V^\pi_{\cp_k}}{K};$ 
   \textbf{(2).} Proximal robust DR: $V^\pi_{\Bar{\cp}},$ where $(\Bar{\cp})^a_s=\left\{ q\in\Delta(\mcs): D\left(q,\Bar{P}^a_s=\frac{\sum_k (P_k)^a_s}{K}\right)\leq \Gamma^a_s\right\};$
 \textbf{(3).}  Maximal robust value function:  $\max_k V^\pi_{\cp_k};$
 \textbf{(4).} Robust value function of the intersected uncertainty set $\cap_k (\cp_k)^a_s.$
 
It can be shown that all of these proxies are conservative estimations of \( V^\pi_{P_0} \), however, none of these proxies can be efficiently solved in a distributed setting. Specifically, (robust) DR is generally intractable and is approximated by a proximal DR objective \cite{jin2022federated}. Meanwhile, solving other proxies may require sharing local source domain models or data, which is often impractical or undesirable in a distributed setting. Therefore, we propose two conservative proxies based on robust RL, and show that they can be efficiently optimized under the distributed setting. 
\begin{remark}
	Note that in certain cases, such as when the prior knowledge used to constructed the uncertainty set is inaccurate, our method may lead to conservative solutions that could be outperformed by DR. Nonetheless, our main contribution is to provide robust performance across all scenarios. Considering the inherent necessaries of conservativeness of robust transfer learning, such a guarantee is particularly valuable in safety-critical or high-stakes applications.
\end{remark}

\section{Averaged Operator Based Proxy}

As discussed, while robust RL-based conservative proxies are straightforward to construct, it is crucial to develop ones that can be efficiently optimized in a distributed setting, where each source domain avoids sharing its underlying MDPs or raw data. Constructing proxies that rely on the underlying environments, as discussed earlier, may be impractical due to data privacy concerns in distributed frameworks.

Inspired by distributed and federated learning, we propose an operator-based proxy that enables the design of distributed algorithms where each local domain shares only updated Q-tables instead of raw data. Specifically, we introduce an averaged operator-based proxy, which extends the design of vanilla federated learning to our pessimism transfer learning framework.

\subsection{Averaged Operator Based proxy}\label{subsec: theo-avg}
By definition~\eqref{rbo}, we denote the robust Bellman operators for the $k$-th source domain by $\mathbf{T}_k^\pi$. Based on these, we construct an averaged operator and our conservative proxy. 
\begin{lemma}\label{lm: contraction-avg}
Define the averaged robust Bellman operator as 
    $\mathbf{T}^\pi_{\rm AO} Q(s,a) = \frac{1}{K}\sum_{k=1}^K\mathbf{T}_k^\pi Q(s,a).$ 
Then, $\mathbf{T}^\pi_{\rm AO}$ has a  unique fixed point $Q^\pi_{\rm AO}$.
\end{lemma}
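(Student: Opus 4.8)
The plan is to show that $\mathbf{T}^\pi_{\rm AO}$ is a $\gamma$-contraction on the space of bounded $Q$-functions $\{Q:\mcs\times\mca\to\mathbb{R}\}$ equipped with the supremum norm $\|\cdot\|_\infty$, and then to invoke the Banach fixed-point theorem; I assume $\gamma<1$ throughout, as is standard for such fixed-point arguments. Since this space is complete under $\|\cdot\|_\infty$, the only substantive content is the contraction estimate, which I would reduce to the corresponding property of each single-domain operator $\mathbf{T}_k^\pi$.

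First I would recall that each robust Bellman operator $\mathbf{T}_k^\pi$ is itself a $\gamma$-contraction in $\|\cdot\|_\infty$ — precisely the property underlying the fixed-point result of \citet{iyengar2005robust} quoted earlier. The one ingredient to verify is that the support function $\sigma_{(\cp_k)^a_s}(V)=\min_{p\in(\cp_k)^a_s}p^\top V$ is non-expansive in $V$: taking $p^\star$ to be a minimizer for $V_2$, one has $\sigma_{(\cp_k)^a_s}(V_1)-\sigma_{(\cp_k)^a_s}(V_2)\le(p^\star)^\top(V_1-V_2)\le\|V_1-V_2\|_\infty$ since $p^\star\in\Delta(\mcs)$, and a symmetric bound gives $|\sigma_{(\cp_k)^a_s}(V_1)-\sigma_{(\cp_k)^a_s}(V_2)|\le\|V_1-V_2\|_\infty$. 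Applying this with $V_i(\cdot)=\sum_{a}\pi(a|\cdot)Q_i(\cdot,a)$, and using that $V_i$ depends non-expansively on $Q_i$, yields $\|\mathbf{T}_k^\pi Q_1-\mathbf{T}_k^\pi Q_2\|_\infty\le\gamma\|Q_1-Q_2\|_\infty$ for every $k$.

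The averaging step is then routine. By the triangle inequality,
\begin{align}
\|\mathbf{T}^\pi_{\rm AO}Q_1-\mathbf{T}^\pi_{\rm AO}Q_2\|_\infty
&=\Big\|\tfrac{1}{K}\sum_{k=1}^K\big(\mathbf{T}_k^\pi Q_1-\mathbf{T}_k^\pi Q_2\big)\Big\|_\infty\nonumber\\
&\le\tfrac{1}{K}\sum_{k=1}^K\|\mathbf{T}_k^\pi Q_1-\mathbf{T}_k^\pi Q_2\|_\infty\nonumber\\
&\le\gamma\|Q_1-Q_2\|_\infty,
\end{align}
so the averaged operator retains the same contraction modulus $\gamma<1$. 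The Banach fixed-point theorem then delivers a unique $Q^\pi_{\rm AO}$ satisfying $\mathbf{T}^\pi_{\rm AO}Q^\pi_{\rm AO}=Q^\pi_{\rm AO}$.

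I do not expect a genuine obstacle: the entire argument collapses to the non-expansiveness of the support function, inherited from single-domain robust dynamic programming, together with the elementary fact that convex combinations preserve contraction moduli. As an independent cross-check on existence, one can observe that $\tfrac{1}{K}\sum_k\sigma_{(\cp_k)^a_s}(V)=\sigma_{\bar{\cp}^a_s}(V)$, where $\bar{\cp}^a_s=\tfrac{1}{K}\sum_k(\cp_k)^a_s$ is the Minkowski average of the local uncertainty sets (a compact convex subset of $\Delta(\mcs)$); hence $\mathbf{T}^\pi_{\rm AO}$ in fact coincides with the robust Bellman operator of an $(s,a)$-rectangular set $\bar{\cp}$, and uniqueness of its fixed point also follows from \citet{iyengar2005robust}. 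I would nonetheless present the contraction proof as the main route, as it is fully self-contained and transfers immediately to the distributed algorithmic analysis that follows.
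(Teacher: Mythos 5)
Your proof is correct and follows essentially the same route as the paper's: both establish that $\mathbf{T}^\pi_{\rm AO}$ inherits the $\gamma$-contraction property of each $\mathbf{T}^\pi_k$ via the triangle inequality over the average, and then conclude uniqueness of the fixed point from the Banach fixed-point theorem. The extra details you supply (non-expansiveness of the support function, the Minkowski-average reinterpretation) are sound but not needed beyond what the paper already does by citing the single-domain contraction property.
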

We use its fixed point to construct a conservative proxy, named the averaged operator based proxy for any given policy $\pi$:
    $V^\pi_{\rm AO}(s) \triangleq \sum_{a\in\mathcal{A}}\pi(a|s)Q^\pi_{\rm AO}(s,a).$
We first show this proxy is indeed conservative.  

\begin{theorem}\label{theo: lowerbound-avg}(Pessimism of the proxy)
The averaged operator based proxy is conservative: 
    $V^\pi_{\rm AO}\leq V_{P_0}^\pi, \forall \pi. $
	\end{theorem}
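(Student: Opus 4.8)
The plan is to show that the fixed point $Q^\pi_{\rm AO}$ of the averaged robust Bellman operator is pointwise dominated by $Q^\pi_{P_0}$, the (non-robust) Q-function under the target kernel $P_0$. The key structural fact to exploit is that each source uncertainty set $\cp_k$ is constructed so that $P_0\in\cp_k$, which by definition of the support function $\sigma$ means that for every $(s,a)$ and every value vector $V$, we have $\sigma_{(\cp_k)^a_s}(V)\leq (P_0)^a_s V$. Consequently each individual robust operator $\mathbf{T}_k^\pi$ is dominated by the ordinary (non-robust) Bellman operator $\bt^\pi_{P_0}$ for the target domain, and averaging preserves this domination, so $\mathbf{T}^\pi_{\rm AO} Q \leq \bt^\pi_{P_0} Q$ pointwise for every $Q$.

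First I would make precise the monotonicity and domination properties. I would recall that $\bt^\pi_{P_0}$ is monotone (if $Q\leq Q'$ pointwise then $\bt^\pi_{P_0}Q\leq\bt^\pi_{P_0}Q'$), and likewise that $\mathbf{T}^\pi_{\rm AO}$ is monotone, which follows since each $\mathbf{T}_k^\pi$ is monotone (the support function is monotone in $V$) and a nonnegative average of monotone operators is monotone. Second, I would establish the operator-domination inequality $\mathbf{T}^\pi_{\rm AO} Q \leq \bt^\pi_{P_0} Q$ for all $Q$, using $P_0\in\cp_k$ together with the definition in~\eqref{rbo} and the fact that $\sigma_{(\cp_k)^a_s}\big(\sum_a \pi(a|\cdot)Q(\cdot,a)\big)\leq (P_0)^a_s\big(\sum_a\pi(a|\cdot)Q(\cdot,a)\big)$ for each $k$, then averaging over $k$.

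Third, I would combine domination with monotonicity to transfer the inequality to the fixed points. Starting from any initial $Q_0$ (say $Q_0=Q^\pi_{P_0}$, the fixed point of $\bt^\pi_{P_0}$), I iterate $\mathbf{T}^\pi_{\rm AO}$: by the domination inequality $\mathbf{T}^\pi_{\rm AO}Q^\pi_{P_0}\leq\bt^\pi_{P_0}Q^\pi_{P_0}=Q^\pi_{P_0}$, and then by induction using monotonicity $(\mathbf{T}^\pi_{\rm AO})^{n+1}Q^\pi_{P_0}\leq(\mathbf{T}^\pi_{\rm AO})^n Q^\pi_{P_0}\leq\cdots\leq Q^\pi_{P_0}$. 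Since $\mathbf{T}^\pi_{\rm AO}$ is a contraction by Lemma~\ref{lm: contraction-avg}, the iterates converge to $Q^\pi_{\rm AO}$, and passing to the limit preserves the inequality, giving $Q^\pi_{\rm AO}\leq Q^\pi_{P_0}$ pointwise. Finally, I would contract with the policy $\pi$: multiplying by $\pi(a|s)\geq 0$ and summing over $a$ yields $V^\pi_{\rm AO}(s)=\sum_a\pi(a|s)Q^\pi_{\rm AO}(s,a)\leq\sum_a\pi(a|s)Q^\pi_{P_0}(s,a)=V^\pi_{P_0}(s)$, which is exactly the claim.

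The main obstacle I anticipate is the fixed-point transfer step: the operator-domination inequality $\mathbf{T}^\pi_{\rm AO}Q\leq\bt^\pi_{P_0}Q$ holds for a common argument $Q$, but the two operators have different fixed points, so I must be careful to route the argument through monotonicity and the contraction property rather than naively comparing fixed points. Anchoring the iteration at $Q^\pi_{P_0}$ (which is itself a fixed point of the dominating operator) is what makes the induction clean; the only subtlety is verifying that the monotone decreasing sequence of iterates is still the one converging to $Q^\pi_{\rm AO}$, which is guaranteed by uniqueness of the fixed point from Lemma~\ref{lm: contraction-avg}. The support-function monotonicity and the verification $P_0\in\cp_k\Rightarrow\sigma_{(\cp_k)^a_s}(V)\leq (P_0)^a_s V$ are routine but should be stated explicitly since the whole argument rests on them.
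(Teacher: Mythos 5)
Your proof is correct, and it rests on exactly the same key fact as the paper's: since $P_0\in\cp_k$, the support function satisfies $\sigma_{(\cp_k)^a_s}(V)=\min_{p\in(\cp_k)^a_s}pV\leq (P_0)^a_s V$, so the averaged robust operator is dominated by the non-robust target-domain operator. Where you differ is in how you transfer this operator-level inequality to the fixed points. The paper subtracts the two fixed-point equations directly, obtains the recursive bound $V^\pi_{\rm AO}-V^\pi_{P_0}\leq\gamma\mathbf{P}_0(V^\pi_{\rm AO}-V^\pi_{P_0})$, and concludes by noting that $(I-\gamma\mathbf{P}_0)^{-1}=\sum_{n\geq 0}\gamma^n\mathbf{P}_0^n$ has nonnegative entries. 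You instead invoke monotonicity of $\mathbf{T}^\pi_{\rm AO}$, seed the iteration at $Q^\pi_{P_0}$ (a fixed point of the dominating operator), and let the contraction property carry the decreasing sequence of iterates down to $Q^\pi_{\rm AO}$. These are two standard incarnations of the same comparison principle --- the Neumann-series positivity argument is essentially the unrolled form of your monotone iteration --- so neither buys substantially more than the other here; your version avoids explicit matrix inversion and makes the role of monotonicity explicit (which the paper uses only implicitly through positivity of $(I-\gamma\mathbf{P}_0)^{-1}$), while the paper's version is shorter and reuses a template that it applies verbatim in several other proofs (e.g., \Cref{theo: ave optimal} and \Cref{theo: lowerbound-max}). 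Your anticipated obstacle is handled correctly: anchoring at $Q^\pi_{P_0}$ and appealing to uniqueness of the fixed point from \Cref{lm: contraction-avg} is precisely what makes the limit argument sound.
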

The result indicates that the averaged operator based proxy effectively adopts the pessimism principle and provide a lower bound for the target domain performance. 

\begin{remark}
    The construction of the proxy is not an direct extension of DR or multi-task RL. As mentioned, the DR method optimizes $V^\pi_{\Bar{P}}$ under the averaged environment $\Bar{P}=\frac{\sum_{k=1}^K P_k}{K}$. In the robust setting, its direct extension becomes $V^\pi_{\Bar{\cp}}$, the proximal robust DR proxy in \Cref{sec:rob pro}. Due to the non-linearity of robust value functions w.r.t. the nominal kernel, $V^\pi_{\Bar{P}}$ does not coincide with the fixed point of the averaged robust Bellman operator, as in the non-robust case. However, we can show that our averaged operator based proxy is more effective than the robust DR.
\end{remark}
We further provide the following result, indicating that for the two most commonly used distance measures, $V^\pi_{\rm AO}$ is more effective than proximal robust DR.
\begin{proposition}\label{prop: lesscon-avg}
If uncertainty sets are defined through total variation or Wasserstein distance\footnote{See \Cref{sec:D3} for definitions.}, then
$    V_{\bar{\mathcal{P}}}^\pi\leq V^\pi_{\rm AO}.$
\end{proposition}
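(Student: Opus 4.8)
The plan is to compare the two proxies through their defining Bellman operators rather than directly. Writing $V=\sum_{a}\pi(a|\cdot)Q(\cdot,a)$ with nonnegative weights, both $V^\pi_{\bar{\cp}}$ and $V^\pi_{\rm AO}$ are read off from the unique fixed points of the robust Bellman operator $\mathbf{T}^\pi_{\bar{\cp}}$ (whose support function is taken over the ball $(\bar{\cp})^a_s$ centered at $\bar P^a_s=\frac1K\sum_k (P_k)^a_s$) and the averaged operator $\mathbf{T}^\pi_{\rm AO}$ of \Cref{lm: contraction-avg}, respectively. Both operators are monotone and $\gamma$-contractions, so it suffices to establish the pointwise operator inequality $\mathbf{T}^\pi_{\bar{\cp}}Q\leq\mathbf{T}^\pi_{\rm AO}Q$ for every $Q$. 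A standard monotone fixed-point argument then finishes this reduction: starting the iteration of the dominating operator $\mathbf{T}^\pi_{\rm AO}$ at the smaller fixed point $Q^\pi_{\bar{\cp}}$ and using monotonicity, the iterates increase and converge to $Q^\pi_{\rm AO}$, yielding $Q^\pi_{\bar{\cp}}\leq Q^\pi_{\rm AO}$ and hence $V^\pi_{\bar{\cp}}\leq V^\pi_{\rm AO}$.

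Since the two operators share the same reward $r$ and discount $\gamma$, the operator inequality reduces to the support-function inequality
\[
\sigma_{(\bar{\cp})^a_s}(V)\ \leq\ \frac{1}{K}\sum_{k=1}^K\sigma_{(\cp_k)^a_s}(V),
\]
for every $V$ and every $(s,a)$. I would prove this by exhibiting a point feasible for the left-hand ball that matches the right-hand average. Let $q_k^\ast$ attain $\sigma_{(\cp_k)^a_s}(V)=\min_{q\in(\cp_k)^a_s}q^\top V$ (the minimum is attained since $(\cp_k)^a_s$ is compact and $V$ is fixed), so that $\frac1K\sum_k\sigma_{(\cp_k)^a_s}(V)=\bar q^\top V$ with $\bar q=\frac1K\sum_k q_k^\ast$. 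If $\bar q\in(\bar{\cp})^a_s$, i.e. $D(\bar q,\bar P^a_s)\leq\Gamma^a_s$, then $\sigma_{(\bar{\cp})^a_s}(V)\leq\bar q^\top V$ and the displayed inequality follows immediately. Everything therefore hinges on the single feasibility claim $D(\bar q,\bar P^a_s)\leq\Gamma^a_s$.

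This feasibility claim is exactly joint convexity of the distance $D$: given $D\bigl(q_k^\ast,(P_k)^a_s\bigr)\leq\Gamma^a_s$ for all $k$, I need
\[
D\Bigl(\tfrac{1}{K}\sum_{k}q_k^\ast,\ \tfrac{1}{K}\sum_{k}(P_k)^a_s\Bigr)\ \leq\ \frac{1}{K}\sum_{k}D\bigl(q_k^\ast,(P_k)^a_s\bigr)\ \leq\ \Gamma^a_s .
\]
This is the main (and essentially the only) real obstacle, and it is precisely where the restriction to total variation and Wasserstein enters. For total variation, $D_{\rm TV}(p,q)=\tfrac12\|p-q\|_1$ is a norm applied to $p-q$, so joint convexity is immediate from the triangle inequality and convexity of $\|\cdot\|_1$. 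For the Wasserstein distance $W$, I would invoke its coupling characterization: letting $\mu_k$ be an optimal coupling of $q_k^\ast$ and $(P_k)^a_s$, the average $\frac1K\sum_k\mu_k$ is itself a valid coupling of $\bar q$ and $\bar P^a_s$, and its transport cost equals $\frac1K\sum_k W\bigl(q_k^\ast,(P_k)^a_s\bigr)$ by linearity of the cost in the coupling; minimality of $W(\bar q,\bar P^a_s)$ over all couplings then gives the required bound. Both cases confirm $\bar q\in(\bar{\cp})^a_s$, closing the chain of reductions and completing the proof.
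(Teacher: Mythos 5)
Your proposal is correct, and it reaches the key inequality $\sigma_{(\bar{\cp})^a_s}(V)\leq \frac{1}{K}\sum_{k}\sigma_{(\cp_k)^a_s}(V)$ by a genuinely different route from the paper. The paper proves this inequality in the dual: it substitutes the known closed-form dual representations of the support function for total variation ($\max_\alpha\{P v_\alpha-\Gamma\,\mathrm{Sp}(v_\alpha)\}$) and Wasserstein (the Kantorovich-type $\sup_{\lambda\ge 0}$ form) and uses that an average of maxima dominates the maximum of the average; the restriction to TV and Wasserstein enters only through the availability of these dual formulas. You instead argue in the primal: average the per-domain minimizers $q_k^\ast$ and certify feasibility of $\bar q$ for the ball around $\bar P^a_s$ via joint convexity of $D$ (triangle inequality for TV, averaging of couplings for Wasserstein). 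Your version is arguably more transparent about \emph{why} these two metrics work and extends verbatim to any distance whose joint sublevel sets are convex (e.g., $f$-divergences), whereas the paper's remark extends the dual argument to $\ell_p$-ball sets instead. For the passage from the operator inequality to the fixed-point inequality you use a monotone iteration started at $Q^\pi_{\bar\cp}$, while the paper telescopes the difference of fixed points and invokes positivity of $(I-\gamma\mathbf{P})^{-1}$; these are equivalent. Two cosmetic points: you should note that the minima defining $q_k^\ast$ are attained (compactness of the uncertainty sets over a finite state space), and for $W_t$ with $t>1$ the averaged coupling's cost equals $\frac{1}{K}\sum_k W_t(q_k^\ast,(P_k)^a_s)^t$ rather than the average of the distances themselves — the feasibility bound $W_t(\bar q,\bar P^a_s)\le\Gamma^a_s$ still follows by taking the $t$-th root, so this does not affect correctness.
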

\begin{remark}
	Notice that according to the proof of~\Cref{prop: lesscon-avg}, the result holds provided that $\sigma_{\mathcal{P}_s^a} - \frac{1}{K}\sum_{k=1}^{K}\sigma_{(\mathcal{P})_s^a}(V)\leq 0$. In Addition, for $l_p$-norm, the support function has a duality: $\sigma(V) = \max_\alpha\{PV_\alpha + f(\Gamma, V_\alpha)\}$ for some function $f$~\cite{clavier2024near}. Thus $\sigma_{\mathcal{P}_s^a} - \frac{1}{K}\sum_{k=1}^{K}\sigma_{(\mathcal{P})_s^a}(V) = \max_\alpha\{\bar{P}V_\alpha + f(\Gamma, V_\alpha)\} - \frac{1}{K}\sum_{k=1}^K\max_\alpha\{P_kV_\alpha + f(\Gamma, V_\alpha)\}\leq 0$, and the similar result still holds.
\end{remark}
The result shows that for the most widely used distance measures, the averaged operator based proxy is less conservative than the proximal robust DR, hence is more effective and is more likely 
to result in a better policy, due to the monotonic improvement property. It is then of interest to design a concrete algorithm to optimize it, i.e., to obtain the policy $\pi_{\rm AO}\triangleq \arg\max_\pi V^\pi_{\rm AO}$. 
\subsection{Efficient and Distributed Algorithm Design}
Although $\mathbf{T}^\pi_{\rm AO}$ is a contraction and $V^\pi_{\rm AO}$ can be estimated exponentially fast, it can be impractical to estimate and compare  $V^\pi_{\rm AO}$ over all policies to find the $\rm AO$-optimal policy~$\pi_{\rm AO} = \arg\max_\pi V_{\rm AO}^\pi$. To  efficiently optimize it, we construct an averaged optimal Bellman operator as follows, and develop fundamental characterizations for it and $\pi_{\rm AO}$. 
\begin{theorem}\label{theo: ave optimal}
Recall $\mathbf{T}_k$ being the optimal robust Bellman operator for the $k$-th source domain as \Cref{orbo}. Define the averaged optimal Bellman operator as 
    \begin{align}
    \mathbf{T}_{\rm AO} Q(s,a) \triangleq \frac{1}{K}\sum_{k=1}^K\mathbf{T}_k Q(s,a).
\end{align}
Then: (1).  $\mathbf{T}_{\rm AO}$ is a contraction and has a unique fixed point $Q_{\rm AO}$; (2). Let $\pi_*(s)\triangleq \arg\max_{a\in\mca} Q_{\rm AO}(s,a)$, then $\pi_*=\arg\max_\pi V^\pi_{\rm AO} $. 
	\end{theorem}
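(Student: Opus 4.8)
The plan is to treat the two claims separately: the contraction property in (1) is routine and follows from known facts about robust Bellman operators, so I would spend most of the effort on the optimality characterization in (2), which is the averaged-robust analog of the classical result that the greedy policy with respect to the optimal value function is optimal.

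For claim (1), I would invoke the standard fact \cite{iyengar2005robust} that each optimal robust Bellman operator $\mathbf{T}_k$ is a $\gamma$-contraction in the supremum norm. Since $\mathbf{T}_{\rm AO}$ is the uniform average of the $K$ operators, the triangle inequality gives, for any $Q_1,Q_2$,
$$\|\mathbf{T}_{\rm AO}Q_1 - \mathbf{T}_{\rm AO}Q_2\|_\infty \leq \frac{1}{K}\sum_{k=1}^K\|\mathbf{T}_k Q_1 - \mathbf{T}_k Q_2\|_\infty \leq \gamma\|Q_1-Q_2\|_\infty,$$
so $\mathbf{T}_{\rm AO}$ is itself a $\gamma$-contraction, and the Banach fixed-point theorem yields the unique fixed point $Q_{\rm AO}$.

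For claim (2), the key lemma I would establish first is the pointwise operator inequality $\mathbf{T}^\pi_{\rm AO} Q \leq \mathbf{T}_{\rm AO} Q$ for every policy $\pi$ and every $Q$, with exact equality when $\pi$ is greedy with respect to $Q$. This reduces to two elementary facts applied inside each summand: for any fixed next state $s'$, the convex combination $\sum_{a'}\pi(a'|s')Q(s',a')$ is dominated by $\max_{a'}Q(s',a')$, and the support function $\sigma_{(\mathcal{P}_k)_s^a}(\cdot)=\min_{p\in(\mathcal{P}_k)_s^a}p(\cdot)$ is monotone in its vector argument (taking the minimizer $p^\star$ for the larger vector, $\sigma(V_2)=p^\star V_2\geq p^\star V_1\geq \sigma(V_1)$ since $p^\star\geq 0$). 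Averaging over $k$ preserves the inequality. Equality holds precisely when $\pi$ concentrates on the maximizing action, since then the arguments inside every $\sigma_{(\mathcal{P}_k)_s^a}$ coincide for the two operators. In particular, with $\pi_*(s)=\arg\max_a Q_{\rm AO}(s,a)$ we get $\mathbf{T}^{\pi_*}_{\rm AO}Q_{\rm AO} = \mathbf{T}_{\rm AO}Q_{\rm AO} = Q_{\rm AO}$, so $Q_{\rm AO}$ is a fixed point of $\mathbf{T}^{\pi_*}_{\rm AO}$; by the uniqueness established in \Cref{lm: contraction-avg}, $Q_{\rm AO}=Q^{\pi_*}_{\rm AO}$.

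It then remains to show $Q^\pi_{\rm AO}\leq Q_{\rm AO}$ for every $\pi$. Starting from $\mathbf{T}^\pi_{\rm AO}Q_{\rm AO}\leq \mathbf{T}_{\rm AO}Q_{\rm AO}=Q_{\rm AO}$ and using that $\mathbf{T}^\pi_{\rm AO}$ is monotone (the same ingredients: a positively weighted average of monotone support functions), I would iterate to obtain $(\mathbf{T}^\pi_{\rm AO})^nQ_{\rm AO}\leq Q_{\rm AO}$ for all $n$ and let $n\to\infty$; since $\mathbf{T}^\pi_{\rm AO}$ is a contraction with fixed point $Q^\pi_{\rm AO}$, the left side converges to $Q^\pi_{\rm AO}$, giving $Q^\pi_{\rm AO}\leq Q_{\rm AO}=Q^{\pi_*}_{\rm AO}$. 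Finally I translate this to the proxy values: for any $s$,
$$V^\pi_{\rm AO}(s)=\sum_a\pi(a|s)Q^\pi_{\rm AO}(s,a)\leq \sum_a\pi(a|s)Q^{\pi_*}_{\rm AO}(s,a)\leq \max_a Q^{\pi_*}_{\rm AO}(s,a)=V^{\pi_*}_{\rm AO}(s),$$
so $\pi_*$ maximizes $V^\pi_{\rm AO}$. I expect the main obstacle to lie in claim (2) rather than (1): one must carefully verify that monotonicity and the average-at-most-max inequality survive both the inner $\min$ defining each support function and the averaging across the $K$ heterogeneous uncertainty sets, and that the greedy policy simultaneously attains equality in every summand. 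Once the operator inequality $\mathbf{T}^\pi_{\rm AO}\leq\mathbf{T}_{\rm AO}$ with greedy equality is in hand, the fixed-point identification and the monotone iteration argument proceed exactly as in the non-robust optimality proof.
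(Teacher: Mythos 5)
Your proof is correct, and part (1) coincides with the paper's argument (triangle inequality over the $K$ $\gamma$-contractions, then Banach). For part (2), however, you take a genuinely different route. The paper also first verifies $\mathbf{T}_{\rm AO}^{\pi_*}Q_{\rm AO}=\mathbf{T}_{\rm AO}Q_{\rm AO}=Q_{\rm AO}$ to identify $Q_{\rm AO}=Q_{\rm AO}^{\pi_*}$, but then proves $V^\pi_{\rm AO}\le V_{\rm AO}$ for arbitrary $\pi$ by extracting, for each $k$, a worst-case kernel $P_k'$ attaining $\sigma_{(\cp_k)^a_s}(V_{\rm AO})$, bounding the difference of support functions by $(P_k')^a_s(V^\pi_{\rm AO}-V_{\rm AO})$, assembling a stochastic matrix $\tilde{P}$, and concluding from the nonnegativity of $(I-\gamma\tilde{P})^{-1}$. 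You instead prove the operator dominance $\mathbf{T}^\pi_{\rm AO}Q\le\mathbf{T}_{\rm AO}Q$ (from average-at-most-max plus monotonicity of the support function), and then run the classical monotone iteration $(\mathbf{T}^\pi_{\rm AO})^nQ_{\rm AO}\le Q_{\rm AO}$, letting $n\to\infty$ to get $Q^\pi_{\rm AO}\le Q_{\rm AO}$. Both arguments are sound; yours is arguably cleaner in that it never needs to select minimizing kernels or invoke the Neumann-series positivity of $(I-\gamma\tilde{P})^{-1}$ (a step the paper states somewhat loosely, writing a definition "$\triangleq\gamma\tilde{P}_s(V^\pi_{\rm AO}-V_{\rm AO})$" where an inequality via the maximizing index is meant), while the paper's kernel-based bound is the template it reuses verbatim for the conservativeness proofs (Theorems \ref{theo: lowerbound-avg} and \ref{theo: lowerbound-max}), so it keeps the whole appendix stylistically uniform. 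Your final translation from $Q^\pi_{\rm AO}\le Q_{\rm AO}=Q^{\pi_*}_{\rm AO}$ to $V^\pi_{\rm AO}\le V^{\pi_*}_{\rm AO}$ is also correct and matches the paper's conclusion.
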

The results imply that the optimal policy $\pi_{\rm AO}$ can be obtained from the unique fixed point of the averaged optimal Bellman operator, hence it suffices to learn $Q_{\rm AO}$ by recursively applying  $\mathbf{T}_{\rm AO}$. More importantly, due to the average structure over every source domain, the global learner does not require knowledge of each source domain, but only the update from each source domain, and hence can be trained in a distributed and efficient scheme. Specifically, each local source domain maintains a local $Q$-table to capture its Bellman operator, and sends the updated table to the global learner. After an average aggregation over all domains, a global $Q$-table is then sent back to each local domain. 


In practice, the exact source domains may be unknown, but extensive samples from them are available, as the source domains are more free to explore. We hence assume that each local source domain can obtain an unbiased estimate of its robust Bellman operator: for any vector $Q$, an unbiased estimate $\hat{\mathbf{T}}_kQ$ is available to each local agent.
This  is a mild assumption, as there are different ways to construct an unbiased estimator of the robust Bellman operator, e.g.,  \cite{wang2023model,kumar2023efficient,yang2023robust,liu2022distributionally}. 
We also note that due to the linear structure of $\mathbf{T}_{\rm AO}$, its unbiased estimation can be directly constructed as the average of all local estimations as $\hat{\mathbf{T}}_{\rm AO}=\frac{\sum^K_{k=1}\hat{\mathbf{T}}_k}{K}.$

On the other hand, to further avoid extensive communication, the frequency of aggregation can be reduced. Specifically, we update the local estimation of each source domain individually for $E$ steps, and then aggregate all the domains once. Such a learning scheme is also used in federated or distributed learning, e.g., \cite{jin2022federated,wang2024convergence,woo2023blessing}. 

We then present our Averaged Operator based multi-domain transfer (MDTL-Avg) algorithm as follows. The algorithm consists of two parts: local update to the local optimum, and global aggregation. Each local agent updates its own Q table with its data, and after $E$ steps, a global aggregation unifies all local Q tables. The convergence proof can be similarly decomposed in to the local (controlled by the local Bellman operator) and the global parts (controlled by aggregation).
	\begin{algorithm}[!tbh]
		\caption{MDTL-Avg and MDTL-Max Algorithms}
		\begin{algorithmic}[1]
			\STATE {\bfseries Initialization:} $0\leq Q_k\leq \frac{1}{1-\gamma}, \forall k = 1, ..., K$
			\FOR {$t=0,...,T-1$}
			\FOR {$k=1,...,K$}
			\STATE $Q_k(s,a)\leftarrow (1-\lambda)Q_k(s,a)+\lambda\hat{\mathbf{T}}_k(Q_k)(s,a)$, $ \forall (s,a)\in\mathcal{S}\times\mathcal{A}$ \hfill{\verb+/Local Update/+}
			\ENDFOR
			\IF {$t\equiv 0 ({\rm mod}\ E)$}
            \FOR{$s\in\mcs,a\in\mca$}
            \STATE {\verb+For MDTL-Avg:+}
			\STATE {${Q}(s,a)\leftarrow \frac{1}{K}\sum_{k=1}^K Q_k(s,a)$} 
                        \STATE {\verb+For MDTL-Max:+}
            \STATE {${Q}(s,a)\leftarrow \textbf{Max-Aggregation of } \{Q_k(s,a)\}$}   
			\STATE $Q_k(s,a)\leftarrow {Q}(s,a), \forall k$ \hfill{\verb+/Synchronize/+}
            \ENDFOR
			\ENDIF
			\ENDFOR
		\end{algorithmic}
		\label{alg:MDTL-Avg with E}
	\end{algorithm}
As mentioned, our algorithm does not require the knowledge of each source domain, but only an unbiased estimator of the updated $Q$-table, which is more applicable under large-scale environments and better preserve privacy. Moreover, we reduce the aggregation frequency by $E$ times, greatly enhancing the communication efficiency.


We then develop the convergence analysis of \cref{alg:MDTL-Avg with E}.
\begin{theorem}
\label{theo: rate2}
Let $E-1 \le \min \frac{1}{\lambda}\{\frac{\gamma}{1-\gamma}, \frac{1}{K}\}$, and $\lambda = \frac{4\log^2(TK)}{T(1-\gamma)}$. Run  \cref{alg:MDTL-Avg with E} for $T$ steps. If $\mathbb{E}[\hat{\mathbf{T}}_k] = \mathbf{T}_k$, then it holds that
\begin{align}
\left\|\mathbb{E}\left[Q_{\rm AO}\!-\!\frac{\sum^K_{k=1}Q_{k}}{K}\right]\right\|
&\!\leq\! \tilde{\mathcal{O}}\left(  \frac{1}{TK}  \!+\!  \frac{(E\!-\!1)\Gamma}{T} \right).
\end{align}
\end{theorem}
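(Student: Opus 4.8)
The plan is to track the expected error $e_t \triangleq \mathbb{E}[Q_{\rm AO} - \frac{1}{K}\sum_k Q_k^t]$ across the two phases of the algorithm and establish a one-step (or one-round) recursion whose stable point, together with the chosen step size $\lambda$ and inner-loop length $E$, yields the stated $\tilde{\mathcal{O}}(1/(TK) + (E-1)\Gamma/T)$ bound. First I would set up notation: let $Q_k^t$ denote the local table of domain $k$ at iteration $t$, and decompose the analysis into \emph{local drift} (the $E-1$ steps between synchronizations, where tables evolve independently under $\hat{\mathbf{T}}_k$) and \emph{global contraction} (the aggregation step, where averaging $\frac{1}{K}\sum_k Q_k$ interacts with the contraction property of $\mathbf{T}_{\rm AO}$ from \Cref{theo: ave optimal}). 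Because $\mathbb{E}[\hat{\mathbf{T}}_k] = \mathbf{T}_k$ and the averaged operator is linear in the per-domain operators ($\hat{\mathbf{T}}_{\rm AO} = \frac{1}{K}\sum_k \hat{\mathbf{T}}_k$), taking expectations commutes nicely with averaging, which is what makes the distributed scheme analyzable.

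The core recursion I would aim to derive has the form $\|e_{t+1}\| \le (1 - c\lambda(1-\gamma))\|e_t\| + (\text{bias from local drift})$. The contraction factor comes from $\mathbf{T}_{\rm AO}$ being a $\gamma$-contraction: applying the averaged update and using $Q_{\rm AO} = \mathbf{T}_{\rm AO} Q_{\rm AO}$, the convex combination $(1-\lambda)Q + \lambda \mathbf{T}_{\rm AO} Q$ contracts the distance to the fixed point by roughly $(1 - \lambda(1-\gamma))$ per step. The additional additive term captures the discrepancy introduced during the $E-1$ local-only steps: between aggregations, each $Q_k$ drifts toward its \emph{own} domain's fixed point $Q_{\mathcal{P}_k}$ rather than the global target, and these local fixed points differ from one another by an amount controlled by the uncertainty-set radius $\Gamma$ (since the source kernels lie within $\Gamma$ of one another through the shared target). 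I would quantify this drift by bounding $\|Q_k^t - \frac{1}{K}\sum_j Q_j^t\|$ over a local round, using the step-size constraint $E-1 \le \frac{1}{\lambda}\min\{\frac{\gamma}{1-\gamma}, \frac{1}{K}\}$ to keep the accumulated drift proportional to $(E-1)\lambda$ and hence, after unrolling over $T/E$ rounds with $\lambda = \tilde{\Theta}(1/(T(1-\gamma)))$, proportional to $(E-1)\Gamma/T$.

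Concretely, the key steps in order are: (i) write the synchronized and local iterates in expectation and show $\mathbb{E}[\frac{1}{K}\sum_k Q_k^{t+1}]$ equals a damped application of $\mathbf{T}_{\rm AO}$ to $\mathbb{E}[\frac{1}{K}\sum_k Q_k^t]$ plus a heterogeneity term; (ii) bound the per-domain deviation from the average to control that heterogeneity term by $\Gamma$, invoking the boundedness $0 \le Q_k \le \frac{1}{1-\gamma}$ and the $\gamma$-contraction/non-expansiveness of each $\mathbf{T}_k$; (iii) combine (i) and (ii) into the scalar recursion above and unroll it over $T$ steps; (iv) substitute the prescribed $\lambda = \frac{4\log^2(TK)}{T(1-\gamma)}$ and the bound on $E$ to collapse the geometric sum into the two advertised terms, with the $\log^2(TK)$ factor absorbed into the $\tilde{\mathcal{O}}$. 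The $1/(TK)$ term should emerge from the variance/averaging benefit of pooling $K$ independent unbiased estimators (the factor $1/K$ reduction in the effective noise when $E$ is small), while the $(E-1)\Gamma/T$ term is precisely the price of infrequent communication.

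The main obstacle I anticipate is step (ii): cleanly bounding the inter-domain drift of the local iterates by the uncertainty radius $\Gamma$ rather than by a trivial $\frac{1}{1-\gamma}$. This requires showing that although the $Q_k$ relax toward distinct local fixed points during the $E-1$ free steps, those fixed points $Q_{\mathcal{P}_k}$ are mutually close because all uncertainty sets $\mathcal{P}_k$ share the common target $P_0$ within radius $\Gamma$; one must translate a $\Gamma$-bound on kernel distance into a $\Gamma$-bound on the gap between robust value functions via the support-function/Bellman-operator sensitivity, and then argue the finite-horizon drift over $E-1 = \mathcal{O}(1/\lambda)$ steps does not blow this up. Managing the interaction between the step-size cap on $E-1$ and the accumulation of this bias, so that it multiplies out to exactly $(E-1)\Gamma/T$ and not something larger, is the delicate part of the argument.
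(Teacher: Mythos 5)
Your overall road map coincides with the paper's: the paper likewise defines the averaged error $\Delta^{t}=Q_{\rm AO}-\frac{1}{K}\sum_k Q_k^{t}$, unrolls a recursion driven by the $(1-\lambda)$-damped $\gamma$-contraction of $\mathbf{T}_{\rm AO}$, bounds the heterogeneity accumulated during the $E-1$ local-only steps by a term of order $\lambda(E-1)\Gamma/(1-\gamma)$ (via the gap $|\sigma_{(\mathcal{P}_k)^a_s}(V_{\rm AO})-\frac{1}{K}\sum_j\sigma_{(\mathcal{P}_j)^a_s}(V_{\rm AO})|\leq\Gamma/(1-\gamma)$, which is exactly your ``local fixed points are mutually $\Gamma$-close'' idea), and then substitutes the prescribed $\lambda$ and the cap on $E-1$. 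Your steps (i)--(iii) and the anticipated difficulty in (ii) are faithful to what the paper actually does.

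There are, however, two concrete missteps. First, you attribute the $1/(TK)$ term to ``the variance/averaging benefit of pooling $K$ independent unbiased estimators.'' That is not where it comes from, and a variance argument would not produce it: the theorem bounds $\|\mathbb{E}[\cdot]\|$ with the expectation \emph{inside} the norm, so all stochastic fluctuation cancels in expectation and no concentration or second-moment analysis is available or needed (a genuine variance argument would at best yield $1/\sqrt{TK}$). In the paper the $1/(TK)$ term is purely the decayed initial-condition term: the deterministic recursion gives $\tilde{\mathcal{O}}\bigl(\exp\{-\tfrac{1}{2}\sqrt{(1-\gamma)\lambda T}\}+\lambda(E-1)\Gamma\bigr)$, and the choice $\lambda=\frac{4\log^2(TK)}{T(1-\gamma)}$ makes the exponential equal $\exp\{-\log(TK)\}=1/(TK)$; the ``partial linear speedup'' is engineered by the step size, not by noise averaging. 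If you go hunting for a variance mechanism you will not find one and may convince yourself the claim is unprovable. Second, and related, you miss the observation that constitutes the paper's entire proof of this particular theorem: since $\mathbb{E}[\hat{\mathbf{T}}_k]=\mathbf{T}_k$ and the aggregation is linear, $\mathbb{E}[\hat{\mathbf{T}}_{\rm AO}]=\mathbf{T}_{\rm AO}$, so taking expectations throughout reduces the stochastic statement verbatim to the exact-operator convergence result (the paper's Theorem~\ref{theo: ce-avg}); the real work lives there, and your plan should be aimed at that deterministic lemma rather than at a separate stochastic analysis.
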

As the result shows, our algorithm enjoys a partial linear speedup, implying the efficiency of our algorithm for multi-domain transfer. The second term measures the trade-off between the convergence rate and communication cost, which is also observed in federated learning (FL) with heterogeneous environments \cite{wang2024convergence}. We note that the convergence rate of our stochastic algorithm still enjoys a partial linear speedup, which matches the minimax optimal robust up to polylog factors \cite{vershynin2018high}. It hence illustrates the efficiency of our algorithm, in terms of convergence rate and communication cost. 

Our algorithm design and convergence result thus provide the first effective, efficient multi-domain transfer method with a conservative lower-bound guarantee.

\section{Avoiding Negative Transfer: the Minimal Pessimism Principle}


While our averaged operator-based method provides a conservative yet effective proxy for target domain performance, it remains vulnerable to a key challenge in multi-domain transfer: negative transfer. Similar to existing TL methods, the averaged operator-based proxy treats all source domains equally and cannot identify or down-weight the less relevant ones. Inspired by recent advances in personalized FL \cite{arivazhagan2019federated,fallah2020personalized,deng2020adaptive,tan2022towards}, the global learner should assign higher weights to source domains that are more similar to the target domain. However, identifying these similar domains without additional information remains challenging.

Our pessimism principle offers a solution to this problem. As shown in \Cref{lemma:eff}, a less conservative proxy leads to a better-performing policy. Intuitively, more distinct source domains yield smaller robust value functions, allowing us to assign higher weights to domains with higher robust values to prioritize the more relevant ones. A straightforward proxy would be $\max_{k\in\mathcal{K}} V^\pi_{\cp_k}$, which uses the maximum robust value function as the proxy. However, this approach may not be computationally efficient in a distributed setting.
In this section, we propose a refined proxy that addresses these limitations, achieving both higher effectiveness and computational efficiency.

We define the minimal pessimism operator and the optimal minimal pessimism operator as 
    $\mathbf{T}^\pi_{\mpp} Q(s,a) \triangleq \max_{k\in\mathcal{K}}\mathbf{T}^\pi_k Q(s,a),$  and 
    $\mathbf{T}_{\mpp} Q(s,a) \triangleq \max_{k\in\mathcal{K}}\mathbf{T}_k Q(s,a)$, based on which we define and characterize the minimal pessimism proxy. 
\begin{theorem}\label{theo: lowerbound-max}
The following results hold: 

	 (1) Both ${\mathbf{T}}^\pi_{\mpp}$ and ${\mathbf{T}}_{\mpp}$ are $\gamma$-contractions, and have unique fixed points, $Q^\pi_{\mpp}$ and $Q_{\mpp}$, respectively.	

  (2) For any policy $\pi$, let $V^\pi_{\mpp}(s) = \sum_{a\in\mathcal{A}}\pi(a|s)Q^\pi_{\mpp}(s,a)$, then
            $V^\pi_{\cp_k}\leq V^\pi_{\mpp} \leq V^\pi_{P_0}$.
  Moreover,      
           $ V^\pi_{\rm AO}\leq V^\pi_{\mpp}$.
           
 (3) Let $\pi_\ast(s) = \arg\max_{a\in\mathcal{A}}Q_{\mpp}(s,a)$, then $Q_{\mpp}=Q^{\pi_\ast}_{\mpp}$ and $\pi_\ast=\arg\max_\pi Q_{\mpp}^\pi$.
	\end{theorem}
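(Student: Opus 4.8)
The plan is to prove each of the three parts largely in parallel with the corresponding facts already established for the averaged operator (\Cref{theo: ave optimal}) and the general pessimism framework, substituting the pointwise maximum for the average where needed.

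\textbf{Part (1): contraction.} First I would show that both $\bt^\pi_{\mpp}$ and $\bt_{\mpp}$ are $\gamma$-contractions in the sup-norm. The key observation is that each $\bt^\pi_k$ (resp.\ $\bt_k$) is a $\gamma$-contraction by \cite{iyengar2005robust}, and the pointwise maximum preserves this property: for any two $Q$-tables $Q, Q'$ and any fixed $(s,a)$, I would use the elementary inequality $\left|\max_k x_k - \max_k y_k\right| \le \max_k |x_k - y_k|$ applied to $x_k = \bt^\pi_k Q(s,a)$ and $y_k = \bt^\pi_k Q'(s,a)$. This gives $\left|\bt^\pi_{\mpp} Q(s,a) - \bt^\pi_{\mpp} Q'(s,a)\right| \le \max_k \left|\bt^\pi_k Q(s,a) - \bt^\pi_k Q'(s,a)\right| \le \gamma \|Q - Q'\|$, and taking the sup over $(s,a)$ yields the contraction. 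Banach's fixed point theorem then delivers the unique fixed points $Q^\pi_{\mpp}$ and $Q_{\mpp}$. The identical argument works for $\bt_{\mpp}$ since it too is a max of $\gamma$-contractions.

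\textbf{Part (2): the sandwich bound.} For the left inequality $V^\pi_{\cp_k} \le V^\pi_{\mpp}$, I would argue at the level of $Q$-functions: since $\bt^\pi_{\mpp} Q \ge \bt^\pi_j Q$ pointwise for every fixed $j$ (the max dominates each term), and both operators are monotone, iterating from a common initialization and passing to the limit gives $Q^\pi_{\mpp} \ge Q^\pi_{\cp_j}$ for each $j$; weighting by $\pi(a|s)$ and summing over actions yields the value-function inequality. For the right inequality $V^\pi_{\mpp} \le V^\pi_{P_0}$, the cleanest route is to note that for each $k$ we have $P_0 \in \cp_k$ (by construction, since $\Gamma^a_s \ge \Gamma \ge D(P_0,P_k)$), so $\sigma_{(\cp_k)^a_s}(V) \le (P_0)^a_s V$ for any $V$; hence each $\bt^\pi_k Q(s,a) \le \bt^\pi_{P_0} Q(s,a)$, where $\bt^\pi_{P_0}$ is the ordinary (non-robust) Bellman operator under $P_0$. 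Taking the max over $k$ preserves this (the max of quantities each dominated by $\bt^\pi_{P_0}Q(s,a)$ is still dominated), so $\bt^\pi_{\mpp} Q \le \bt^\pi_{P_0} Q$ pointwise; monotonicity and iteration then give $Q^\pi_{\mpp} \le Q^\pi_{P_0}$, and summing against $\pi$ gives the value bound. Finally, $V^\pi_{\rm AO} \le V^\pi_{\mpp}$ follows because the average is dominated by the max pointwise, $\bt^\pi_{\rm AO} Q = \frac1K\sum_k \bt^\pi_k Q \le \max_k \bt^\pi_k Q = \bt^\pi_{\mpp} Q$, and the same monotone-iteration comparison applies to the two fixed points.

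\textbf{Part (3): optimality of the greedy policy.} This is the analogue of part (2) of \Cref{theo: ave optimal}, and I expect it to be the main obstacle, because the proof there presumably relies on an interchange between the max over actions and the averaging operation that does not transfer verbatim to a max-of-operators. The plan is to first verify that the greedy policy $\pi_\ast$ of $Q_{\mpp}$ is a fixed point in the sense that $\bt^{\pi_\ast}_{\mpp} Q_{\mpp} = \bt_{\mpp} Q_{\mpp} = Q_{\mpp}$; the subtlety is that $\bt_{\mpp}$ uses $\max_a$ inside each $\bt_k$ before taking $\max_k$, whereas $\bt^{\pi_\ast}_{\mpp}$ plugs the single deterministic action $\pi_\ast$ into each $\bt^{\pi_\ast}_k$ before the outer max, so I must check these coincide at $Q_{\mpp}$. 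I would establish this by showing $\pi_\ast$ is simultaneously greedy for the inner term that attains the outer maximum, using that $\max_a \sum \pi(a|\cdot)(\cdot) = \max_a (\cdot)$ is achieved by the deterministic argmax; then $Q^{\pi_\ast}_{\mpp} = Q_{\mpp}$ by uniqueness of the fixed point of $\bt^{\pi_\ast}_{\mpp}$. For optimality $\pi_\ast = \arg\max_\pi Q^\pi_{\mpp}$, I would show $\bt^\pi_{\mpp} Q_{\mpp} \le \bt_{\mpp} Q_{\mpp} = Q_{\mpp}$ for every $\pi$ (since averaging each inner term against $\pi$ cannot exceed the max over actions, and this is preserved under the outer max), then invoke monotonicity to conclude $Q^\pi_{\mpp} \le Q_{\mpp} = Q^{\pi_\ast}_{\mpp}$ for all $\pi$. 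The delicate point to handle carefully is whether the outer $\max_k$ can be exchanged with the greedy operation consistently across the two operator forms, since the index $k$ attaining the maximum may differ from state to state; I would argue this does not break the comparison because all inequalities are established pointwise in $(s,a)$ before any aggregation.
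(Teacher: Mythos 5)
Your proposal is correct, and its skeleton matches the paper's: part (1) via the elementary inequality $|\max_k x_k-\max_k y_k|\le\max_k|x_k-y_k|$ applied to the $\gamma$-contractions $\mathbf{T}^\pi_k$ (the paper isolates exactly this as Lemma~\ref{lm: ineqs-max}), part (2) via $P_0\in\cp_k$ so that $\sigma_{(\cp_k)^a_s}(V)\le(P_0)^a_sV$ for every $k$, and part (3) by checking $\mathbf{T}^{\pi_\ast}_{\mpp}Q_{\mpp}=\mathbf{T}_{\mpp}Q_{\mpp}=Q_{\mpp}$ and invoking uniqueness. Where you genuinely diverge is in the machinery used to compare fixed points: you dominate one operator by another pointwise and then run a monotone iteration from the known fixed point, whereas the paper instead subtracts the two fixed-point equations to obtain a linear recursion $V_1-V_2\le\gamma\mathbf{P}(V_1-V_2)$ for a suitable stochastic matrix $\mathbf{P}$ (built from worst-case kernels $P_k'$ and a state-wise $\arg\max_k$ selection $\tilde P_s$ in the optimality step) and concludes by multiplying with the entrywise-nonnegative resolvent $(I-\gamma\mathbf{P})^{-1}$. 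The two mechanisms are equivalent here; yours is arguably cleaner for the optimality claim in (3) since it needs only the monotonicity of $\sigma$ and of the pointwise maximum and avoids constructing the auxiliary kernels, while the paper's resolvent argument has the advantage of yielding the inequality in one algebraic step without appealing to convergence of the iterates. Your worry about the $\arg\max_k$ index varying across states is correctly resolved exactly as you say — all dominations are established pointwise in $(s,a)$ before any aggregation, and for the fixed-point identity the equality $\sum_a\pi_\ast(a|s')Q_{\mpp}(s',a)=\max_aQ_{\mpp}(s',a)$ holds for every $k$ simultaneously, so no interchange issue arises.
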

Similar to the previous section, our construction is based on a minimal pessimism principle  operator, which is shown to be a contraction and admits a unique fixed point. Moreover, our results imply that the proxy constructed is conservative, and more importantly, it is better than the robust value function of any source domain and the averaged operator based proxy. This hence avoids negative transfer by ruling out the more conservative source domains. As discussed, this less conservative proxy can result in a better estimate and is more likely to yield a
better transferred policy. 

The third result implies a practical approach to optimize $Q_{\rm MP}^\pi$, through obtaining the fixed point $Q_{\rm MP}$ of the optimal operator $\mathbf{T}_{\rm MP}$. As it is a contraction, recursively applying it will converge to $Q^\pi_{\rm MP}$. Similarly, considering the partial information from each source domain, we design a distributed algorithm to achieve the fixed point as \Cref{alg:MDTL-Avg with E}. 

Notably, in MDTL-Max as \Cref{alg:MDTL-Avg with E}, the aggregation is obtained through some function \textbf{Max-Aggregation}. When each local Bellman operator is exact, i.e., $\hat{\mathbf{T}}_k=\mathbf{T}_k$, we simply set \textbf{Max-Aggregation of} $\{Q_k\}_{k=1}^K$ to be $\max_k \{Q_k\}$, and the resulting algorithm will converge to $Q_{\text{MP}}$ (see \Cref{theo: conv-max}). However, when each local operator is stochastic, such an aggregation may result in biased estimation of $\mathbf{T}_{\rm MP}$. Specifically, even if \( \hat{\mathbf{T}}_k \) are unbiased estimations, the straightforward application of the maximum function results in a biased estimation:  
\(
\mathbb{E}[\max_{k\in\mathcal{K}}  Q_k] \neq \max_{k\in\mathcal{K}}  \mathbb{E}[Q_k].
\)

To address this issue, it is essential to design an alternative maximal aggregation scheme that ensures convergence despite the presence of estimation errors. We propose an aggregation strategy to overcome this challenge.  Specifically, we adopt the multi-level Monte-Carlo method \cite{blanchet2015unbiased,blanchet2019unbiased,wang2022unbiased}  to construct an operator $\hat{\mathbf{M}}_{\rm MLMC}$ of the maximal aggregation based on estimated local operators. We defer the detailed construction to \Cref{app:mlmc}.
 

\begin{lemma}\label{lm: mlmc}
 $ \mE[\hat{\mathbf{M}}_{\rm MLMC} (\{Q_k\}^K_{k=1})]=\max_{k\in\mathcal{K}} \mathbb{E}[Q_k]$.
\end{lemma}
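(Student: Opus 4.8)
The plan is to establish unbiasedness by unwinding the multi-level Monte-Carlo (MLMC) construction of $\hat{\mathbf{M}}_{\rm MLMC}$ and reducing the claim to a telescoping identity together with a law-of-large-numbers limit. Throughout, write $g:\mathbb{R}^K\to\mathbb{R}$ for the coordinatewise map $g(x_1,\dots,x_K)=\max_k x_k$ and $\mu=(\mE[Q_1],\dots,\mE[Q_K])$, so that the target quantity is $g(\mu)=\max_{k\in\mathcal{K}}\mE[Q_k]$. Recall from \Cref{app:mlmc} that the estimator first draws a random level $N$ from a geometric law with $\mP(N=n)=p_n$, then draws $2^{N+1}$ independent copies of each $Q_k$, splits them into two antithetic halves of size $2^N$ (the even- and odd-indexed samples), forms the empirical means $\hat\mu^{F}$ (all $2^{N+1}$ samples) and $\hat\mu^{E},\hat\mu^{O}$ (each half), and outputs the base term $g(\hat\mu^{0})$ plus the reweighted level difference $D_N/p_N$, where $D_N=g(\hat\mu^{F})-\tfrac12\big(g(\hat\mu^{E})+g(\hat\mu^{O})\big)$.

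First I would take expectations by the tower property, conditioning on $N$. Since $\hat\mu^{E}$ and $\hat\mu^{O}$ are each averages of $2^N$ i.i.d.\ copies, each shares the law of the level-$N$ empirical mean, so $\mE[\tfrac12(g(\hat\mu^E)+g(\hat\mu^O))]=\mE[g(\hat\mu_{2^N})]$, while $\mE[g(\hat\mu^F)]=\mE[g(\hat\mu_{2^{N+1}})]$. Hence $\mE[D_N\mid N=n]=\mE[g(\hat\mu_{2^{n+1}})]-\mE[g(\hat\mu_{2^{n}})]$. The $1/p_N$ reweighting exactly cancels the sampling probability, giving $\mE[D_N/p_N]=\sum_{n\ge 0}\big(\mE[g(\hat\mu_{2^{n+1}})]-\mE[g(\hat\mu_{2^{n}})]\big)$, a telescoping series whose partial sums collapse to $\lim_{n\to\infty}\mE[g(\hat\mu_{2^{n}})]-\mE[g(\hat\mu_{1})]$. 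Adding the base term $\mE[g(\hat\mu^0)]=\mE[g(\hat\mu_1)]$ leaves exactly $\lim_{n\to\infty}\mE[g(\hat\mu_{2^n})]$.

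It then remains to identify this limit with $g(\mu)$. By the strong law of large numbers $\hat\mu_{2^n}\to\mu$ almost surely, and since $g=\max$ is $1$-Lipschitz in the $\ell_\infty$ norm we obtain $g(\hat\mu_{2^n})\to g(\mu)$ almost surely; because each $Q_k$ lies in $[0,\tfrac{1}{1-\gamma}]$ the quantities $g(\hat\mu_{2^n})$ are uniformly bounded, so dominated convergence passes the limit through the expectation, yielding $\lim_n\mE[g(\hat\mu_{2^n})]=g(\mu)=\max_{k\in\mathcal{K}}\mE[Q_k]$, as claimed.

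The main obstacle is justifying the interchange of expectation with the conditional infinite sum over levels, i.e.\ applying Fubini--Tonelli to $\mE[D_N/p_N]=\sum_n \mE[D_N\mid N=n]$, which requires absolute summability $\sum_n \mE[\,|D_N|\mid N=n]<\infty$. Here the non-smoothness of $\max$ is the delicate point: I would exploit its $1$-Lipschitz property to bound $|D_N|\le \tfrac12\|\hat\mu^F-\hat\mu^E\|_\infty + \tfrac12\|\hat\mu^F-\hat\mu^O\|_\infty$, then use the antithetic variance decay of empirical means $\mE[\|\hat\mu^F-\hat\mu^{E}\|^2]=O(2^{-N})$ to obtain $\mE[|D_N|]=O(2^{-N/2})$ via Cauchy--Schwarz. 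Summability of $\sum_n 2^{-n/2}$ then legitimizes the telescoping and establishes unbiasedness; the same level-difference bound, combined with the geometric choice of $p_n$, is what controls the estimator's variance in the accompanying convergence analysis.
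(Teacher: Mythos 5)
Your argument is the standard Blanchet--Glynn unbiasedness proof for MLMC estimation of $g(\mE[X])$ with $g=\max$: condition on the level $N$, let the $1/p_N$ weight cancel the sampling probability, telescope the level differences, identify the limit via the strong law together with the $1$-Lipschitz property of $\max$ and bounded convergence, and justify the interchange of sum and expectation through the $O(2^{-n/2})$ decay of $\mE[\,|D_N|\mid N=n]$. The paper does not actually write out a proof of this lemma --- \Cref{app:mlmc} only gives the construction and defers to the cited MLMC references --- so there is no in-paper argument to compare against; your proof supplies exactly the argument being implicitly invoked, and every step, including the Fubini justification (the only genuinely delicate one, which you correctly isolate), is sound.

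The one point you must pin down is the meaning of the inner aggregation. You read $\hat{\mathbf{T}}^{A}$, $\hat{\mathbf{T}}^{E}$, $\hat{\mathbf{T}}^{O}$ as ``empirical mean over the sample index $i$, then $\max_k$,'' which is the reading under which the telescoping limit is $\max_k\mE[Q_k]$ and the lemma holds. The formulas displayed in \Cref{app:mlmc}, however, literally take $\max_{k,i}$ over the individual samples; under that reading each level difference has conditional expectation $\mE[\max_{i\le 2^{n+1}}Y_i]-\mE[\max_{i\le 2^{n}}Y_i]$ with $Y_i=\max_k\hat{\mathbf{T}}^i_kQ$, so the telescoping series converges to the essential supremum of $\max_k\hat{\mathbf{T}}_kQ$ rather than to $\max_k\mE[\hat{\mathbf{T}}_kQ]$, and the lemma fails whenever the local estimators are non-degenerate. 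So either state explicitly that the within-group aggregation over $i$ is an average (as in the references the paper cites), or acknowledge that you are proving the statement for a corrected version of the estimator rather than the one literally defined. Two cosmetic points: your notation drifts between $\hat\mu^{0}$ and $\hat\mu^{F}$ versus the paper's $\hat{\mathbf{T}}^{1}$ and $\hat{\mathbf{T}}^{A}$, and the $\ell_\infty$-Lipschitz bound you invoke for $\max$ is exactly part (2) of \Cref{lm: ineqs-max}, which you could cite directly.
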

We hence constructed an unbiased estimation of the global minimal pessimism operator, which can then be adopted to implement MDTL-Max as \Cref{alg:MDTL-Avg with E}. 
\begin{remark}
	The MLMC module for MDTL-Max will increase the sample/computation complexity, which can be viewed as the price of improving effectiveness. Nevertheless, it can be reduced along two potential directions: One is to control the level number through techniques like threshold-MLMC~\cite{wang2024modelfree}. Although it results in a biased estimation, the bias can be controlled and hence still implies convergence (see~\Cref{sec:biased aggregation}). Another one is to relax the uncertainty set constraint. For example, for total variation, the relaxation results in the solution $P_0V-\Gamma\text{Span}(V)\leq \sigma(V)$~\cite{kumar2023policy}, which is much easier to compute and remains conservative. Reducing the aggregation frequency is an alternative way to reduce the computational cost, but also introduces a trade-off in the convergence rate, as we will show in the following results.
\end{remark}

We can then incorporate such a multi-level Monte-Carlo construction into \Cref{alg:MDTL-Avg with E} as an unbiased update step. We further characterize its convergence.  
\begin{theorem}\label{theo: rate 2-m}
    Let $E-1 \le \min \frac{1}{\lambda}\{\frac{\gamma}{1-\gamma}, \frac{1}{K}\}$, $\lambda = \frac{4\log^2(TK)}{T(1-\gamma)}$, and adopt MLMC-{Max-Aggregation}. Then after $T$ steps,
		\begin{align}
			\left\|\mathbb{E}\left[Q_{\mpp}-\max_{k\in\mathcal{K}} Q_k\right]\right\|\leq\tilde{\mathcal{O}}\left( \frac{1}{TK}\!+\!\frac{(E-1)\Gamma}{T}\right).
		\end{align}
\end{theorem}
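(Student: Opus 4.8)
The plan is to mirror the convergence proof of \Cref{theo: rate2} for the averaged operator, replacing the trivially unbiased average aggregation by the MLMC max-aggregation and using the $\gamma$-contraction of $\mathbf{T}_{\mpp}$ from \Cref{theo: lowerbound-max}(1). Because the target is the norm of the \emph{expected} error, I would track the expected local tables $\bar q_k^{(t)}\triangleq\mE[Q_k^{(t)}]$. Taking conditional expectations in the local update and invoking $\mE[\hat{\mathbf{T}}_k Q\mid Q]=\mathbf{T}_k Q$ gives $\bar q_k^{(t+1)}=(1-\lambda)\bar q_k^{(t)}+\lambda\,\mE[\mathbf{T}_k Q_k^{(t)}]$, while \Cref{lm: mlmc} collapses the aggregation, in expectation, to the deterministic map $\mE[\hat{\mathbf{M}}_{\rm MLMC}(\{Q_k^{(t)}\})]=\max_k \bar q_k^{(t)}$. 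This is the key point that lets the nonlinear max-aggregation be handled with the same machinery as the average: in expectation the synchronization is exactly the nonexpansive (in sup-norm) map $\{\bar q_k\}\mapsto\max_k\bar q_k$, and since $\max_k\mathbf{T}_k=\mathbf{T}_{\mpp}$ is a $\gamma$-contraction with fixed point $Q_{\mpp}$, a round of $E$ local steps followed by a max-synchronization behaves like an inexact application of $\mathbf{T}_{\mpp}$.

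I would then split the $T$ iterations into $T/E$ rounds and derive a one-round recursion for the synchronization-time error $\|\mE[Q_{\mpp}-Q^{(mE)}]\|$, where $Q^{(mE)}$ is the common table right after the $m$-th synchronization. The recursion exposes three contributions. First, the initialization error is contracted by $(1-\lambda(1-\gamma))$ per local step; accumulated over all $T$ steps with $\lambda=\tfrac{4\log^2(TK)}{T(1-\gamma)}$ it decays like $\exp(-4\log^2(TK))$, which is super-polynomially small and absorbed into $\tilde{\mathcal O}$. Second, the \textbf{local drift}: during the $E-1$ purely local steps within a round the domains start synchronized but evolve under their own $\mathbf{T}_k$; bounding the pairwise operator discrepancy $\|\mathbf{T}_j Q-\mathbf{T}_k Q\|$ by the heterogeneity of the robust Bellman operators, which the triangle inequality ties to the uncertainty radius $\Gamma$ (since $D(P_j,P_k)\le 2\Gamma$), and using the constraint $E-1\le\tfrac1\lambda\min\{\tfrac{\gamma}{1-\gamma},\tfrac1K\}$ to prevent this discrepancy from compounding across the round, yields the $\tfrac{(E-1)\Gamma}{T}$ term. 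Third, the \textbf{stochastic bias} coming from the concavity/nonlinearity of $\mathbf{T}_k$ (so that $\mE[\mathbf{T}_kQ_k^{(t)}]\neq\mathbf{T}_k\bar q_k^{(t)}$) and of the max, a Jensen-type gap governed by the variance of the iterates, which must be shown to scale as $\tilde{\mathcal O}(1/(TK))$.

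The main obstacle is this third term and, specifically, recovering the $1/K$ factor for the \emph{max} aggregation. For the average, the $1/K$ speedup is automatic because $\mathrm{Var}[\tfrac1K\sum_k Q_k]=\tfrac1K\mathrm{Var}[Q_1]$ under domain independence, which shrinks the Jensen gap injected at each step by $1/K$; the maximum enjoys no such variance averaging, so the speedup cannot be read off from the aggregation directly. The delicate step is therefore a variance bound on $\hat{\mathbf{M}}_{\rm MLMC}(\{Q_k\})$ drawn from the geometric level distribution of the multi-level construction in \Cref{app:mlmc}, combined with the $\gamma$-Lipschitzness of $\mathbf{T}_k$, to certify that the per-step bias still scales like $\lambda/K$ and that the iterate variance contracts at rate $1-\lambda(1-\gamma)$ with $O(\lambda)$ per-step injection; the sup-norm nonexpansiveness of $\max$ is what prevents the per-domain noise from amplifying through the synchronization. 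Finally, unrolling the one-round recursion over $T/E$ rounds and substituting $\lambda=\tfrac{4\log^2(TK)}{T(1-\gamma)}$ collapses the geometric sums to the stated $\tilde{\mathcal O}(\tfrac{1}{TK}+\tfrac{(E-1)\Gamma}{T})$, exactly paralleling \Cref{theo: rate2}.
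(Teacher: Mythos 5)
Your overall route is the paper's route: establish unbiasedness of the aggregation via \Cref{lm: mlmc} together with $\mE[\hat{\mathbf{T}}_k]=\mathbf{T}_k$, pass to expected iterates (legitimate because the theorem bounds $\|\mE[\cdot]\|$ rather than $\mE[\|\cdot\|]$), and then rerun the deterministic max-aggregation analysis — which in the paper is \Cref{theo: conv-max}, itself a copy of the averaged-operator proof of \Cref{theo: ce-avg} with the maximum handled by the elementary inequalities of \Cref{lm: ineqs-max}. Your first two error contributions (geometric decay of the initialization error under the contraction, and a local-drift term of order $\lambda(E-1)\Gamma$ coming from the heterogeneity between each local operator and the aggregate operator) are exactly the paper's.

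Where you diverge, and where your proposal is incomplete, is your third term. You assert that the $1/(TK)$ rate must arise from a variance-averaging effect, observe correctly that the maximum enjoys no such averaging, and then declare the ``main obstacle'' to be a variance bound on the MLMC draws that would restore a per-step bias of order $\lambda/K$. That step is left entirely unproven — and it is also not how the paper obtains the rate. In both \Cref{theo: rate2} and \Cref{theo: rate 2-m} the $1/(TK)$ term is produced purely by the step-size choice: with $\lambda=\frac{4\log^2(TK)}{T(1-\gamma)}$ the contraction factor $\exp\{-\frac{1}{2}\sqrt{(1-\gamma)\lambda T}\}$ equals $(TK)^{-1}$ (which is why the speedup is only ``partial''), while the constraint $E-1\le \frac{1}{\lambda K}$ keeps the drift terms at $\tilde{\mathcal{O}}((E-1)\Gamma/T)$; no variance of any aggregation appears anywhere in the argument, for the average or for the max. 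So the obstacle you construct dissolves once you track where $K$ actually enters. The one real subtlety you brush against — that $\mE[\mathbf{T}_kQ_k]\ne \mathbf{T}_k\,\mE[Q_k]$ because $\sigma_{\cp_k}$ and the inner maximization are nonlinear, so ``replacing operators by their expectation versions'' is not a literal substitution — is genuine, and the paper's two-line reduction glosses over it as well; but addressing it does not require the MLMC variance machinery you propose, and as written your proof neither resolves that point nor reaches the claimed bound without it.
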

The result implies the convergence of our algorithm in a more practical setting. More importantly, our algorithm converges to a more effective proxy than the averaged one, which potentially results in a better transferred policy due to our pessimism principle. Also, our algorithm enjoys a nearly optimal convergence rate, similar to \Cref{alg:MDTL-Avg with E}. 

\section{Additional Discussion}
In the previous sections, we demonstrated that our pessimism principle-based framework offers two key advantages: (1) it guarantees a well-performing lower bound on the target domain performance, avoiding severe undesired outcomes, and (2) it establishes a monotonic relationship between the pessimism level and transfer effectiveness, allowing us to improve transfer performance and mitigate negative transfer. In this section, we highlight two additional advantages: improved robustness and scalability.

\textbf{Robustness.} By optimizing a conservative estimation of the target domain's performance, our framework inherently enhances the robustness of the transferred policy: it ensures that the policy performs well even under model uncertainties within the target domain, such as non-stationary parameters. We show the following result to justify its robustness:
\begin{proposition}\label{prop:robustness}
    There exists some connected uncertainty set $\tilde{\cp}$ such that $P_0\in\tilde{\cp}$, and $V^\pi_{P_0}\geq V^\pi_{\tilde{\cp}}\geq V^\pi_{\mpp}\geq V^\pi_{\rm AO}$. 
\end{proposition}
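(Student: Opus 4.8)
The plan is to exhibit $\tilde{\cp}$ explicitly as the (rectangular) \emph{intersected} uncertainty set $\tilde{\cp}^a_s \triangleq \bigcap_{k\in\mathcal{K}}(\cp_k)^a_s$, and then verify the three inequalities of the chain in turn. First I would check admissibility and containment of the target: since $\Gamma^a_s \geq \Gamma \geq D(P_0,P_k)$ for every $k$, we have $(P_0)^a_s \in (\cp_k)^a_s$ for all $k$, so $(P_0)^a_s \in \tilde{\cp}^a_s$. In particular the intersection is nonempty, and being a finite intersection of closed balls inside the simplex it is compact, so $\tilde{\cp}$ is a valid rectangular uncertainty set with $P_0 \in \tilde{\cp}$. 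The containment $P_0 \in \tilde{\cp}$ immediately gives $V^\pi_{\tilde{\cp}} = \min_{P\in\tilde{\cp}} V^\pi_P \leq V^\pi_{P_0}$, the leftmost inequality. For connectedness I would use that for the distances considered (total variation, Wasserstein, $\ell_p$) each ball $(\cp_k)^a_s$ is convex; a finite intersection of convex sets is convex hence connected, and the product over $(s,a)$ of connected sets is connected, so $\tilde{\cp}$ is connected.

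The core of the argument is the middle inequality $V^\pi_{\tilde{\cp}} \geq V^\pi_{\mpp}$, which I would obtain through a support-function domination followed by a monotone fixed-point comparison. Because $\tilde{\cp}^a_s \subseteq (\cp_k)^a_s$ for every $k$, minimizing over a smaller feasible set can only raise the worst-case value, giving $\sigma_{\tilde{\cp}^a_s}(V) = \min_{q\in\tilde{\cp}^a_s} q^\top V \geq \max_{k}\min_{q\in(\cp_k)^a_s} q^\top V = \max_k \sigma_{(\cp_k)^a_s}(V)$ for every $V$. Substituting $V_Q(\cdot)=\sum_a \pi(a|\cdot)Q(\cdot,a)$ into the robust Bellman operators, this is exactly the pointwise operator inequality $\mathbf{T}^\pi_{\tilde{\cp}} Q \geq \mathbf{T}^\pi_{\mpp} Q$ for all $Q$.

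Both operators are monotone $\gamma$-contractions (the former by the standard theory for a valid rectangular set, the latter by \Cref{theo: lowerbound-max}(1)), so I would then invoke the standard comparison argument: starting the iteration $(\mathbf{T}^\pi_{\tilde{\cp}})^n$ from the fixed point $Q^\pi_{\mpp}$, the inequality $\mathbf{T}^\pi_{\tilde{\cp}} Q^\pi_{\mpp} \geq \mathbf{T}^\pi_{\mpp} Q^\pi_{\mpp} = Q^\pi_{\mpp}$ together with monotonicity of $\mathbf{T}^\pi_{\tilde{\cp}}$ forces the iterates to increase monotonically to the limit $Q^\pi_{\tilde{\cp}}$, whence $Q^\pi_{\tilde{\cp}} \geq Q^\pi_{\mpp}$ and therefore $V^\pi_{\tilde{\cp}} \geq V^\pi_{\mpp}$ (the policy weights being nonnegative). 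Finally, the rightmost inequality $V^\pi_{\mpp} \geq V^\pi_{\rm AO}$ is already established in \Cref{theo: lowerbound-max}(2), completing the chain.

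I expect the main obstacle to be the domination-to-fixed-point transfer: the pointwise operator inequality only constrains the operators, and one must verify that monotonicity of the robust Bellman operator — which rests on monotonicity of the support function in $V$, using $q \geq 0$ — together with the contraction property legitimately carries the ordering over to the fixed points. The remaining steps — nonemptiness of $\tilde{\cp}$ via $P_0$, convexity and hence connectedness of the metric balls, and the support-function inequality from set inclusion — are routine, though the connectedness conclusion does rely on restricting attention to distances whose balls are convex.
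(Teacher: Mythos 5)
Your proposal is correct and follows essentially the same route as the paper: both take $\tilde{\cp}=\bigcap_k\cp_k$ as the witness, derive $\sigma_{\tilde{\cp}^a_s}(V)\geq\max_k\sigma_{(\cp_k)^a_s}(V)$ from set inclusion, and transfer this operator domination to the fixed points (the paper via positivity of $(I-\gamma P')^{-1}$ for a worst-case kernel, you via the equivalent monotone-iteration comparison starting from $Q^\pi_{\mpp}$). Your additional checks of nonemptiness via $P_0\in\tilde{\cp}$ and of connectedness via convexity of the metric balls are sound and in fact address parts of the statement the paper's proof leaves implicit.
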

This result implies that the proxies we construct are also conservative bounds for the robust value function with respect to some uncertainty set around $P_0$. As a result, the transferred policy is guaranteed to perform well despite potential uncertainties in the target domain, thereby enhancing its generalizability against continuous perturbations of the environment. Specifically, the policy will also perform effectively in environments similar to $P_0$. We further validate this robustness numerically in our experiments.

\textbf{Scalability.} Our proposed algorithms are scalable, enabling effective transfer in large-scale problems. Firstly, our algorithms can be implemented in a model-free manner to enhance computational and memory efficiency. In particular, any model-free algorithm for robust RL can be integrated into the local update step in \Cref{alg:MDTL-Avg with E}. To demonstrate this, we design a model-free variant of \Cref{alg:MDTL-Avg with E} in \Cref{alg:model-free}, accompanied by a convergence guarantee:
\begin{theorem}(Informal)
Denote the output of the algorithms with suitable step sizes $\lambda$ and total steps $T$ by $Q^T$. Let $\Delta_{T} := Q_{\rm AO} - Q^T$ and $\Delta_{T} := Q_{\rm MP} - Q^T$ for MDTL-Avg and MDTL-Max, respectively. With probability at least $1-\delta$, we have 
\begin{align}
\|{\Delta_{T}}\|
&\leq \tilde{\mathcal{O}}\left( \frac{(E-1)\Gamma}{T(1-\gamma)^3} +\frac{\log\frac{SATK}{\delta}}{(1-\gamma)^3\sqrt{TK}}\right) . 
\end{align}
\end{theorem}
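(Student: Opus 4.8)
The plan is to treat \Cref{alg:model-free} as a stochastic approximation recursion and to bound the deviation of its aggregated iterate from the fixed point by splitting the error into a deterministic drift term and a stochastic concentration term, then combining the two. I would present the argument for MDTL-Avg (fixed point $Q_{\rm AO}$, contraction $\mathbf{T}_{\rm AO}$); the MDTL-Max case is structurally identical with $Q_{\rm MP}$, $\mathbf{T}_{\rm MP}$, and the unbiased MLMC aggregation of \Cref{lm: mlmc}. First I would write each local update as $Q_k^{t+1}=(1-\lambda)Q_k^t+\lambda\mathbf{T}_k Q_k^t+\lambda\,\xi_k^t$, where $\xi_k^t\triangleq(\hat{\mathbf{T}}_k-\mathbf{T}_k)Q_k^t$ is a zero-mean noise term by the unbiasedness assumption $\mathbb{E}[\hat{\mathbf{T}}_k]=\mathbf{T}_k$, and I would verify that the convex-combination update keeps every iterate in $[0,\tfrac{1}{1-\gamma}]$, so that $\|\xi_k^t\|\le\tfrac{2}{1-\gamma}$. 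Writing the output as the aggregated iterate $Q^t=\frac1K\sum_k Q_k^t$ and the error as $\Delta_t=Q_{\rm AO}-Q^t$, I would then decompose $\Delta_t=\mathbb{E}[\Delta_t]+(\Delta_t-\mathbb{E}[\Delta_t])$ and bound the bias and the fluctuation separately.

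For the bias (deterministic drift) I would reuse the in-expectation analysis of \Cref{theo: rate2} and \Cref{theo: rate 2-m} verbatim. Since $\mathbf{T}_{\rm AO}=\frac1K\sum_k\mathbf{T}_k$ is a $\gamma$-contraction by \Cref{theo: ave optimal}, the averaged mean update contracts $\mathbb{E}[\Delta_t]$ toward $0$ geometrically. The only obstruction to a clean contraction is that between two aggregation rounds the $K$ local iterates evolve independently and drift apart; bounding this \emph{client drift} by summing the per-step movement over the $E-1$ unsynchronized steps, and controlling each step by $\lambda$ together with the heterogeneity radius $\Gamma$ (which caps how far the local operators $\mathbf{T}_k$ disagree), yields exactly the $\frac{(E-1)\Gamma}{T(1-\gamma)^3}$ contribution once the recursion is unrolled with $\lambda=\frac{4\log^2(TK)}{T(1-\gamma)}$.

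For the fluctuation (stochastic concentration) I would collect the accumulated noise as a weighted sum $\sum_{\tau\le t}\lambda(1-\lambda)^{t-\tau}\cdot\frac1K\sum_k\xi_k^\tau$, which, for each fixed $(s,a)$, is a martingale-difference sum with respect to the natural filtration. Because the increments are averaged over $K$ independent source domains, their conditional variance is reduced by a factor $K$; applying a Freedman-type (or Azuma--Hoeffding) inequality coordinatewise and taking a union bound over the $S\times A$ entries, together with $T$ and $K$ to absorb the step-dependent factors, produces the statistical term $\frac{\log\frac{SATK}{\delta}}{(1-\gamma)^3\sqrt{TK}}$, with the $\frac{1}{(1-\gamma)^3}$ arising from the noise range $\frac{1}{1-\gamma}$, the effective averaging window $\frac{1}{\lambda(1-\gamma)}$, and the step-size rescaling. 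Adding the bias and fluctuation bounds and folding polylogarithmic factors into $\tilde{\mathcal{O}}$ gives the claim.

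The hard part will be the MDTL-Max case, where the max aggregation is realized through the MLMC operator $\hat{\mathbf{M}}_{\rm MLMC}$. Although \Cref{lm: mlmc} guarantees unbiasedness, MLMC estimators are notorious for heavy-tailed and potentially large variance, so the increments $\xi_k^t$ there need not be bounded by $\frac{2}{1-\gamma}$ and a plain Azuma--Hoeffding step can fail. The main obstacle is therefore to establish a sub-exponential (or at least bounded-second-moment) tail for $\hat{\mathbf{M}}_{\rm MLMC}$ and to feed it into a Bernstein/Freedman argument; I expect this is precisely where the threshold-MLMC truncation alluded to in the preceding remark is needed, trading a controllable bias for bounded increments so that the same concentration machinery applies and the final rate is preserved up to polylog factors.
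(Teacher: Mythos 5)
Your overall architecture matches the paper's: the paper also writes the model-free update of \Cref{alg:model-free} as a stochastic-approximation recursion, derives a pathwise error iteration (\Cref{lm: error iteration2}) containing a deterministic drift term $\sigma^k(V^*)-\sigma^k(V^k_i)$ plus a sampling-noise term $P^kV^k_i-P^k_iV^k_i$, bounds the client drift over the $E-1$ unsynchronized steps via the heterogeneity radius $\Gamma$ exactly as you describe, and controls the noise by Hoeffding-type concentration with a union bound over $S,A,T,K$, arriving at \Cref{cor: rate2}. The differences are worth noting. First, you split $\Delta_t=\mathbb{E}[\Delta_t]+(\Delta_t-\mathbb{E}[\Delta_t])$ and propose to ``reuse the in-expectation analysis verbatim'' for the bias; this step is not rigorous as stated, because the update is nonlinear in $Q$ (the max over actions inside $V_k^t$ and the support functions $\sigma_{\cp_k}$ do not commute with expectation), so $\mathbb{E}[\Delta_t]$ does not obey the deterministic recursion and the Jensen gap would have to be absorbed somewhere. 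The paper sidesteps this entirely by keeping the recursion pathwise (as a one-sided inequality) and inserting the high-probability noise bounds directly into the unrolled recursion; you would need to do the same, which is a repair of bookkeeping rather than of the core idea. Second, your concentration step is slightly more careful than the paper's: the paper applies Hoeffding to terms of the form $P^kV^*-P^k_{j-1}V^*$ (a fixed test vector $V^*$), while the martingale/Freedman treatment you sketch is what is actually needed for the terms involving the random iterates $V^k_i$, and it delivers the same $1/\sqrt{TK}$ rate.

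Your flagged ``hard part'' --- tail control of the MLMC aggregation for MDTL-Max --- is a legitimate concern, but you should know the paper does not resolve it either: the detailed model-free high-probability analysis (\Cref{thm: sample complexity: synchronous2}, \Cref{cor: rate2}) is carried out only for MDTL-Avg, with the Max variant dismissed as ``designed similarly,'' and the bounded-increment assumption implicit in the Hoeffding step is never verified for $\hat{\mathbf{M}}_{\rm MLMC}$ (whose increments scale like $1/p_N$ and are unbounded). So your proposal is, if anything, more honest about where the argument is incomplete; completing it along the truncated/threshold-MLMC route you suggest would strengthen the result beyond what the paper establishes.
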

Notably, while previous methods, such as DR, lack concrete algorithms, our approach is both effective and scalable, offering a significant advantage.

Secondly, our method is not restricted to tabular setting, and can be integrated with function approximation or policy gradient. For example, when combine function approximation for robust RL~\cite{zhou2023natural} or robust policy gradient~\cite{wang2022policy}, the global step becomes parameter aggregation as in~\cite{jin2022federated}.


\section{Experimental Results}
In this section, we aim to verify the effectiveness of our pessimism principle, and its ability to avoid negative transfer. More details and additional experiments results are provided in \Cref{sec:add exp}.

\subsection{Effectiveness of Pessimism Principle for Transfer Learning}
We develop our simulation on the recycling robot problem \cite{sutton2018reinforcement,wang2023model}. In this problem, a mobile robot powered by a rechargeable battery is tasked with collecting empty soda cans. The robot operates with two battery levels: low and high. It has two possible actions: (1) search for empty cans; (2) remain stationary and wait for someone to bring it a can. When the robot’s battery is low (high), it has a probability of $\alpha$($\beta$) of finding an empty can and maintaining its current battery level, receiving some high reward. If the robot searches but does not find any, it will deplete its battery completely and receive a large penalty. If the robot chooses to wait, it will remain at the same battery level and receive a relatively small reward. 

In our experiments, we first compare our methods with the non-robust corresponding methods, including \textbf{proximal non-robust DR} and \textbf{maximal aggregation algorithm} (see \Cref{alg:baseline}). We run 5 trials with different random seeds for each method and report means and standard deviations of values over training steps in \cref{fig:robot_mean}. We use a constant step size $= 0.1$, and total training steps $T=5000$. We set the target environment as $\alpha =\beta = 0.1$, pre-specified radius of the uncertainty set $R = 0.8$. Then, we randomly generate $K=7$ distinct source domains with $\alpha_k, \beta_k \in [0.85, 0.9]$ to construct the uncertainty set. Other parameters $E=1$, and $\gamma=0.95$. As the results show in \cref{fig:robot_mean}-left, both of the non-robust baselines achieve a low reward in the target domain, as they are overly optimistic and decide to search under both battery levels, whereas our pessimism principle methods tend to wait conservatively, avoiding undesired decisions and severe consequences. 
\begin{figure}[H]
\vspace{-0.1in}
    \centering
    \includegraphics[width=0.48\textwidth]{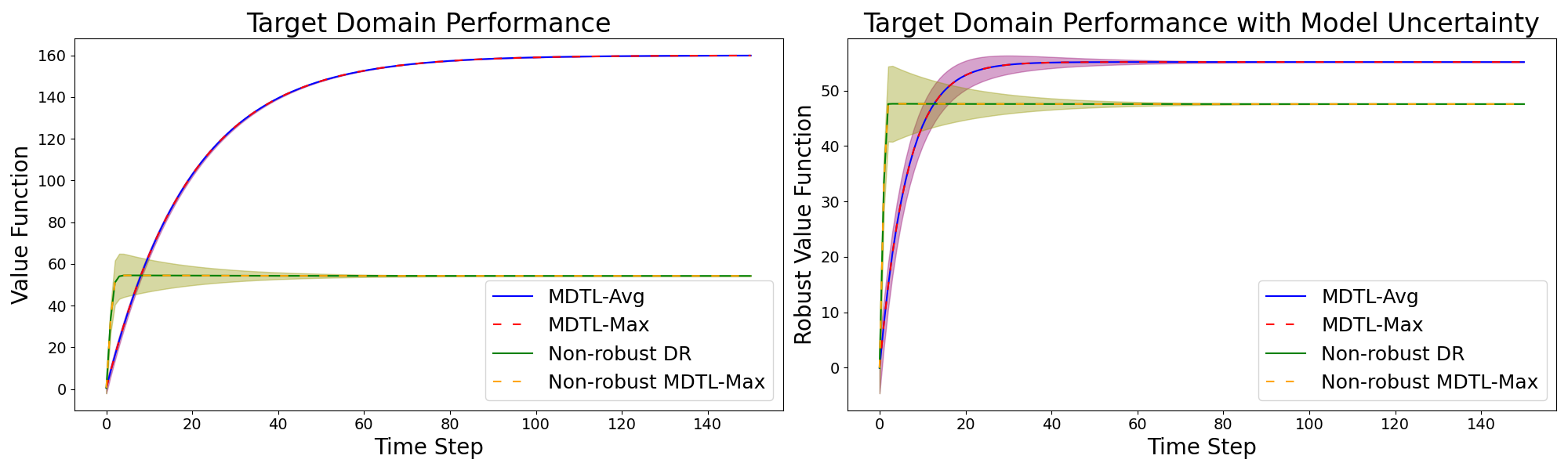}
    \vspace{-0.3in}
    \caption{Recycling Robot Problem}
    \label{fig:robot_mean}
\end{figure}
\vspace{-0.2in}
To illustrate the robustness and generalizability of our methods against target model uncertainty, we evaluate the robust value functions of the learned policies against an uncertainty set centered at the target domain, which represents the worst-case performance under the model uncertainty. \Cref{fig:robot_mean}-right shows that both of our pessimistic methods outperform the non-robust baseline, indicating that our approach can maintain a high performance, even with uncertainty or perturbations in the target domain. We also conduct an ablation study to compare the robust value functions under different target domain uncertainty levels in \Cref{app:exp}, to further illustrate the enhanced robustness of our approaches. 

In summary, our method outperforms non-robust baselines, achieving a \textbf{195.03\%} performance improvement in the target domain and maintaining a \textbf{15.95\%} improvement even under uncertainty. The experiment results hence verify that pessimism principle is more effective, especially when the source domains are relatively distinct from the target domain, or when model uncertainty exists in the target domain.

\subsection{Experiments on Negative Transfer}\label{sec:negative transfer}
We further tested our MDTL-Avg and MDTL-Max algorithms on FrozenLake Gym environments, aiming to validate whether MDTL-Max can effectively mitigate negative transfer. The FrozenLake Gym environment provides an explicitly known transition kernel. This allows us to precisely control the distance between source domains and the target domain, thereby offering more rigorous and interpretable results. In this example, we manually perturb the agent using total variation distances from the default model: $D = [0.01,0.02,0.3]$. The source domain with $D=0.3$ is intentionally designed to represent a domain that differs significantly to the target, potentially causing negative transfer. For each algorithm, we run experiments across 10 random seeds. For each seed, the policy is evaluated over 10 independent episodes to compute the average return. As shown in Figure~\ref{fig:net_test_fl}, MDTL-Max effectively leverages the most informative source domain, and hence avoids negative transfer.
\begin{figure}[H]
\vspace{-0.2in}
	\centering
	\includegraphics[width=0.48\textwidth]{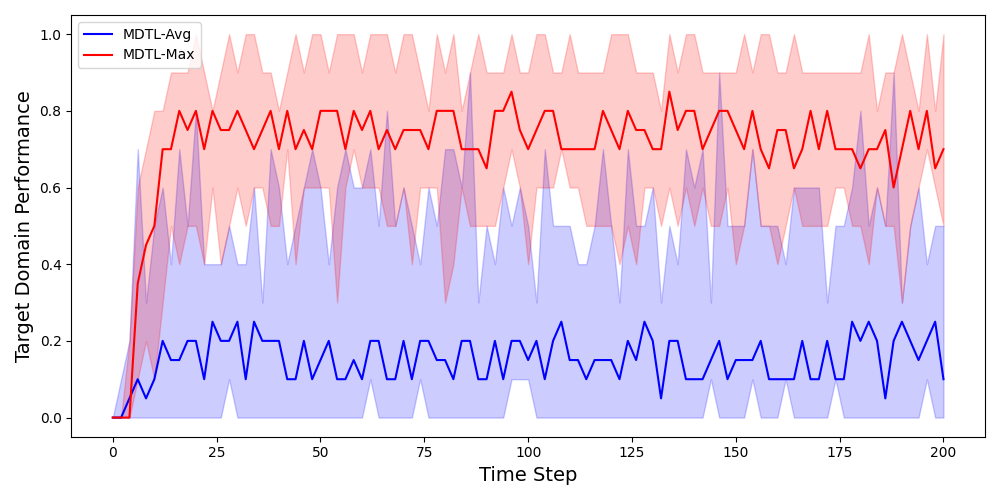}
	\vspace{-0.2in}
	\caption{Negative Transfer under FrozenLake Gym environment}
	\label{fig:net_test_fl}
\end{figure}

\section{Conclusion}
In this paper, we studied zero-shot transfer reinforcement learning  and identified two critical limitations of existing methods: the lack of guarantees for the safety and performance of transferred policies and the inability to mitigate negative transfer when multiple source domains are involved. To overcome these challenges, we incorporate a pessimism principle into TL to conservatively estimate the target domain's performance, and mitigate risk of undesired decisions. Our framework establishes a monotonic dependence between the level of pessimism and target performance, effectively addressing the two identified issues by constructing less conservative estimations. We proposed and analyzed two types of conservative estimations, rigorously characterizing their effectiveness, and developed distributed, convergent algorithms to optimize them. These methods are well-suited for distributed transfer learning settings, offering performance guarantees and privacy protection. Our framework represents a foundational step toward zero-shot transfer RL with theoretical performance guarantees, providing a robust and practical solution for real-world applications.

\section*{Acknowledgment}
This work was supported by DARPA under Agreement No. HR0011-24-9-0427 and NSF under Award CCF-2106339. The authors thank the anonymous reviewers whose constructive comments led to substantial improvement to the paper.

\section*{Impact Statement}
This paper presents work whose goal is to advance the field of Machine Learning. There are many potential societal consequences of our work, none which we feel must be specifically highlighted here.

	\bibliography{ref}
	\bibliographystyle{icml2025}

	\newpage
	\appendix
	\onecolumn
	\section{Additional Related Works}\label{app:add works}

\textbf{Imitation Learning and Policy Distillation:} Imitation learning (IL) and policy distillation are extensively explored TL techniques that enable an agent to learn from demonstrations or from simpler policies. In IL, an agent mimics expert behavior through supervised learning, often using behavioral cloning or inverse RL \cite{hua2021learning,desai2020imitation,kim2023training,zare2024survey}. Policy distillation transfers knowledge from a teacher policy to a student policy by matching the teacher’s action distribution \cite{rusu2015policy,yim2017gift,yin2017knowledge,traore2019continual,traore2019discorl}. The success of these approaches, however, rely on the assumption that the source domain’s expert or teacher policy is highly aligned with the target domain, which may not always be the case.

\textbf{Contextual Reinforcement Learning:}
In contextual reinforcement learning (cRL), the transition dynamics are assumed to be governed by an underlying context \citep{hallak2015contextual,modi2018markov}. This context can represent physical properties—such as wind conditions \citep{koppejan2009neuroevolutionary} or the length of a pole in a balancing task \citep{seo2020trajectory,kaddour2020probabilistic,benjamins2022contextualize}—or more abstract features that characterize environment dynamics \citep{biedenkapp2020dynamic}.
\citet{kirk2023survey} identify cRL as particularly relevant for studying zero-shot generalization in RL agents. The cRL framework allows for a systematic and principled analysis of how agents adapt to environmental changes by explicitly defining inter- and extrapolation distributions. Following the evaluation protocol introduced by \citet{kirk2023survey}, \citet{benjamins2022contextualize} examined the generalization capabilities of various model-free RL agents on a benchmark that incorporates different physical properties as context information. Their approach assumed a naive utilization of context by directly concatenating it with observations. In contrast, \citet{beukman2024dynamics} proposed a hypernetwork-based method to enable adaptable RL agents. However, these approaches still fail to address the fundamental challenges discussed earlier.

\textbf{Federated Learning:} FL is a decentralized RL approach that aims to optimize the overall performance of K agents~\cite{mcmahan2017communication}. In this setting, each agent performs local updates based on its own environment and periodically communicates with a central server to aggregate the local models, without sharing raw trajectories with each other. While FL has been extensively studied~\cite{jin2022federated,wang2024convergence,woo2023blessing,khodadadian2022federated}, existing work focuses on optimizing the average performance across local environments, rather than addressing the more challenging problem of multi-domain transfer. A fundamental challenge in applying FL to transfer learning lies in the nature of update rules: standard FL methods rely on linear updates (non-robust operators), whereas our proposed methods involve non-linear robust updates account for uncertainties.

\textbf{Robust RL:} Robust RL \cite{iyengar2005robust,nilim2004robustness} or DRO formulation \cite{wiesemann2014distributionally}  offer a pessimism principle and a robust framework for addressing environmental uncertainty, which optimizes the worst-case performance within an uncertainty set of environments and can offer a theoretically sound lower bound on target domain performance when the target domain lies within this set. However, its application in transfer learning has been under explored, as robust RL often exhibits excessive pessimism, resulting in overly conservative policies for the target domain. Recent advances have extensively studied robust RL as a standalone problem \cite{wang2021online,wang2022policy,wang2023policy,badrinath2021robust,dong2022online,lu2024distributionally,liu2024distributionally,yang2021towards,xu2023improved,panaganti2022sample,shi2023curious,wang2024sample,panaganti2022robust,yang2022rorl,liu2023micro,wang2024unified,wang2024robust,zhang2025model}, but its integration into transfer learning remains largely untapped. Leveraging robust RL principles in transfer learning offers an opportunity to develop methods that are not only theoretically grounded but also resilient to uncertainties in target domains. This proposal seeks to explore research directions that harness the strengths of robust RL to create more effective, efficient, and reliable transfer learning frameworks.

\section{Details on Experiments and Additional Experiments}\label{sec:add exp}
All experiments are conducted on a MacBook Pro configured with an Apple M3 Pro chip, featuring a 11-core CPU and 18GB of unified memory, running macOS Sequoia 15.2. The experiments are performed using Python 3.12 in a Conda environment. Major libraries used include NumPy 2.2.1. 

The baseline algorithms we compared are presented as follows. Here, $\hat{\mathbf{T}}_k$ is some unbiased estimation of the non-robust Bellman operator $\mathbf{T}(Q)=r_k+P_kQ$. 
	\begin{algorithm}[!tbh]
		\caption{Proximal Non-Robust DR and Non-Robust MDTL-Max Algorithms}
		\begin{algorithmic}[1]
			\STATE {\bfseries Initialization:} $0\leq Q_k\leq \frac{1}{1-\gamma}, \forall k = 1, ..., K$
			\FOR {$t=0,...,T-1$}
			\FOR {$k=1,...,K$}
			\STATE {$V_k(s)\leftarrow \max_{a\in\mathcal{A}}Q_k(s,a), \forall s\in\mathcal{S}$}
			\STATE $Q_k(s,a)\leftarrow (1-\lambda)Q_k(s,a)+\lambda\hat{\mathbf{T}}_k(Q_k)(s,a)$, $ \forall (s,a)\in\mathcal{S}\times\mathcal{A}$ \hfill{\verb+/Local Update/+}
			\ENDFOR
			\IF {$t\equiv 0 ({\rm mod}\ E)$}
            \FOR{$s\in\mcs,a\in\mca$}
            \STATE {\verb+For Proximal DR:+}
			\STATE {${Q}(s,a)\leftarrow \frac{1}{K}\sum_{k=1}^K Q_k(s,a)$} 
                        \STATE {\verb+For Maximal Aggregation:+}
            \STATE {${Q}(s,a)\leftarrow \textbf{Max-Aggregation of } \{Q_k(s,a)\}$}   
			\STATE $Q_k(s,a)\leftarrow {Q}(s,a), \forall k$ \hfill{\verb+/Synchronize/+}
            \ENDFOR
			\ENDIF
			\ENDFOR
		\end{algorithmic}
		\label{alg:baseline}
	\end{algorithm}

\subsection{Ablation Study: experiments on different target domain uncertainty levels}\label{app:exp}
To further evaluate the robustness of our methods, we conduct an ablation study analyzing the impact of different levels of uncertainty. In this experiment, we systematically vary the test uncertainty set radius \( R_{\text{test}} \) and measure the optimal performance in the worst case over these uncertainty sets for the learned policies. Four different methods (MDTL-Avg, MDTL-Max, Non-robust DR, and Non-robust MDTL-Max) are evaluated across five levels of uncertainty, defined by the uncertainty set radius \(\{0.01, 0.03, 0.05, 0.07, 0.1\}\). All testing experiments are conducted for 3 times. We also provide an optimal policy as a benchmark, which is trained on the target domain using non-robust vanilla value iteration \cite{sutton2018reinforcement}. The means and standard deviations of robust values of each method are reported over training time steps.

Let's conclude the ablation study before further analyzing the figures and tables: our ablation study provides empirical evidence that our robust learning methods effectively mitigate the impact of target domain uncertainty. Specifically, our approach achieves at least \textbf{15.95\%} and at most \textbf{151.36\%} performance dominance over non-robust baselines in the target domain under model uncertainty. Compared to the optimal policy, our method achieves at least \textbf{81.32\%} and at most \textbf{90.66\%} of the optimal performance under model uncertainty. These findings validate our pessimism principle, demonstrating its effectiveness in handling domain shifts and model uncertainty.

\begin{table}[H]
\centering
\begin{tabular}{l|c|c|c|c|c}
\hline
Method                           & $R_{test}=$ 0.01     & $R_{test}=$ 0.03     & $R_{test}=$ 0.05    & $R_{test}=$ 0.07    & $R_{test}=$ 0.1     \\ \hline
MDTL-Avg                         & \textbf{134.45}      & \textbf{101.91}      & \textbf{82.05}      & \textbf{68.67}      & \textbf{55.17}      \\ \hline
MDTL-Max                         & \textbf{134.45}      & \textbf{101.91}      & \textbf{82.05}      & \textbf{68.67}      & \textbf{55.17}      \\ \hline
Non-robust DR                    & 53.49                & 52.06                & 50.70               & 49.41               & 47.58               \\ \hline
Non-robust MDTL-Max              & 53.49                & 52.06                & 50.70               & 49.41               & 47.58               \\ \hline
Non-robust Single-learn Nominal  & 148.30               & 115.48               & 95.35               & 81.71               & 67.84               \\ \hline
\end{tabular}
\caption{Robot: Values of 5 Methods under Different Target Domain Uncertainty Levels}
\label{tab:robot}
\end{table}
\cref{tab:robot} provides a numerical comparison of the 4 methods across different uncertainty levels. Our methods consistently outperform non-robust baselines across all uncertainty levels.

\begin{figure}[H]
    \centering
    \includegraphics[width=0.8\textwidth]{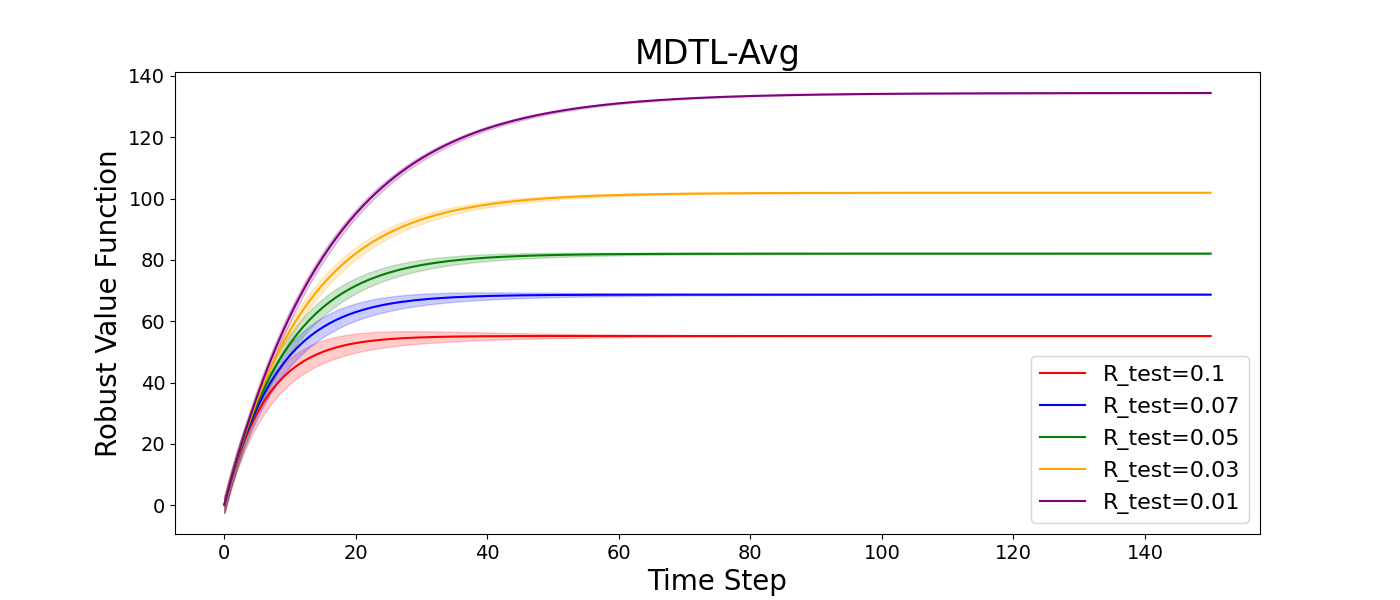}
    \caption{Robot: Values of MDTL-Avg under Different Uncertainty Levels}
    \label{fig:robot_plot_avg}
\end{figure}

\cref{fig:robot_plot_avg} shows the evolution of the robust value function for the MDTL-Avg method as the uncertainty level increases. The results indicate that when the target domain uncertainty is low ($R_{test}=0.01$), the learned policy achieves high rewards. However, as the uncertainty radius increases, the value function decreases, illustrating the inherent difficulty in handling a more uncertain environment. Despite this, our method still maintains a significantly higher value compared to non-robust baselines, confirming its adaptability to uncertainty. 
 
\begin{figure}[H]
    \centering
    \includegraphics[width=0.8\textwidth]{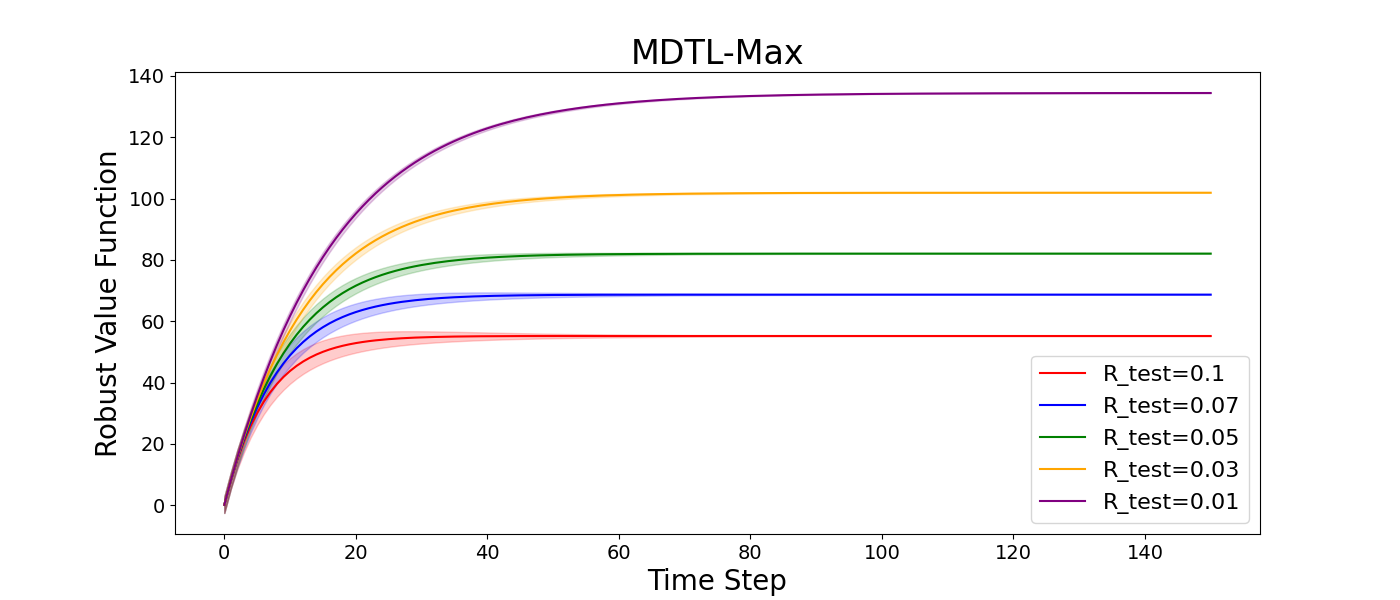}
    \caption{Robot: Values of MDTL-Max under Different Uncertainty Levels}
    \label{fig:robot_plot_max}
\end{figure}

\cref{fig:robot_plot_max} presents the results for the MDTL-Max method, demonstrating a similar trend. While performance declines as uncertainty increases, the robust approach still consistently outperforms the non-robust baselines. Notably, both robust methods maintain high robustness under uncertainty, reinforcing the effectiveness of our pessimism principle.

\subsection{Experiments on the HPC Cluster Management Problem}\label{sec:data center}
We also validate our methods in an HPC cluster management problem. Each time a new task is submitted, the HPC cluster manager must decide whether to allocate it immediately or enqueue it for later processing. The HPC cluster operates in one of three states: normal, overloaded, or fully occupied, depending on the number of active tasks. Under normal conditions, allocating a task has a probability $p$ of transitioning to the overloaded state while yielding a large reward. When the cluster is overloaded, allocating a task has a probability $q$ of pushing the system into the fully occupied state, where only a small reward is received. In the fully occupied state, all new tasks are automatically enqueued, and no further rewards are obtained.

In our experiments, we set the target environment parameters to $p = q = 0.9$. To model distinct domains, we randomly generate $K = 7$ source environments with $p_k,q_k \in [0.1, 0.15]$, while keeping all other parameters the same as in the above recycling robot problem. Specifically, we conduct 5 trials for each method using different random seeds, with a fixed step size of $0.1$, a total of $5000$ training steps, a pre-defined uncertainty set radius of $R = 0.8$, $E = 1$, and $\gamma = 0.95$. The results, presented in \Cref{fig:data_mean}, where our pessimistic methods consistently outperform the non-robust baselines in target domains, both with and without model uncertainty. In summary, our method outperforms non-robust baselines, achieving a \textbf{183.28\%} performance improvement in the target domain and maintaining a \textbf{11.05\%} improvement even under uncertainty. This further confirms the effectiveness of our approach. 




\begin{figure}[H]
    \centering
    \includegraphics[width=\textwidth]{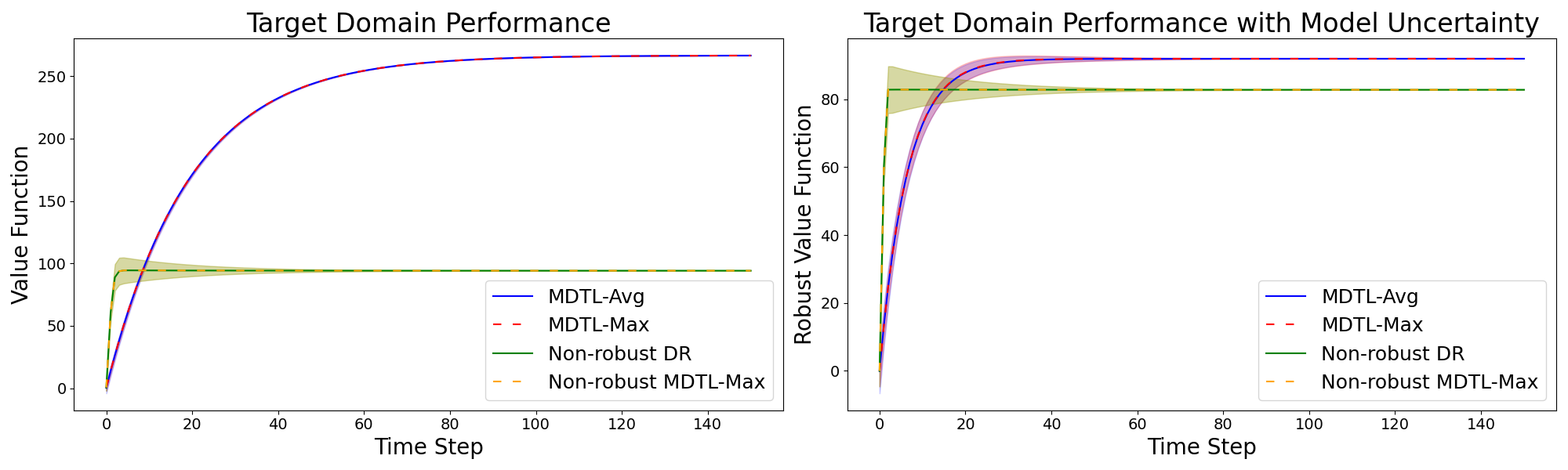}
    \caption{HPC Cluster Management Problem}
    \label{fig:data_mean}
\end{figure}

Our ablation study highlights the substantial advantage of our approaches in handling different levels of uncertainty in target domain. Across different levels of model uncertainty, our method consistently outperforms the non-robust baselines. Specifically, our approach demonstrates a minimum improvement of \textbf{11.05\%} and a maximum of \textbf{141.29\%} over the best-performing non-robust baseline in the target domain under model uncertainty. Moreover, when compared to the optimal policy, our method retains at least \textbf{77.52\%} and at most \textbf{88.49\%} of the optimal performance under varying uncertainty levels. These results further reinforce the effectiveness of our approach.

\begin{table}[H]
\centering
\begin{tabular}{l|c|c|c|c|c}
\hline
Method                           & $R_{test}=$ 0.01     & $R_{test}=$ 0.03      & $R_{test}=$ 0.05      & $R_{test}=$ 0.07      & $R_{test}=$ 0.1       \\ \hline
MDTL-Avg                        & \textbf{224.09}      & \textbf{169.85}       & \textbf{136.75}       & \textbf{114.45}       & \textbf{91.95}        \\ \hline
MDTL-Max                        & \textbf{224.09}      & \textbf{169.85}       & \textbf{136.75}       & \textbf{114.45}       & \textbf{91.95}        \\ \hline
Non-robust DR                    & 92.87                & 90.44                 & 88.12                 & 85.92                 & 82.80                 \\ \hline
Non-robust MDTL-Max             & 92.87                & 90.44                 & 88.12                 & 85.92                 & 82.80                 \\ \hline
Optimal policy  & 253.25               & 198.42                & 164.75                & 141.89                & 118.62                \\ \hline
\end{tabular}
\caption{HPC: Values of 5 Methods under Different Uncertainty Levels}
\label{tab:data_center}
\end{table}

\Cref{tab:data_center} presents a quantitative comparison of the four methods under varying levels of uncertainty. Our approach consistently surpasses the non-robust baselines across all uncertainty settings.

\begin{figure}[H]
    \centering
    \includegraphics[width=0.8\textwidth]{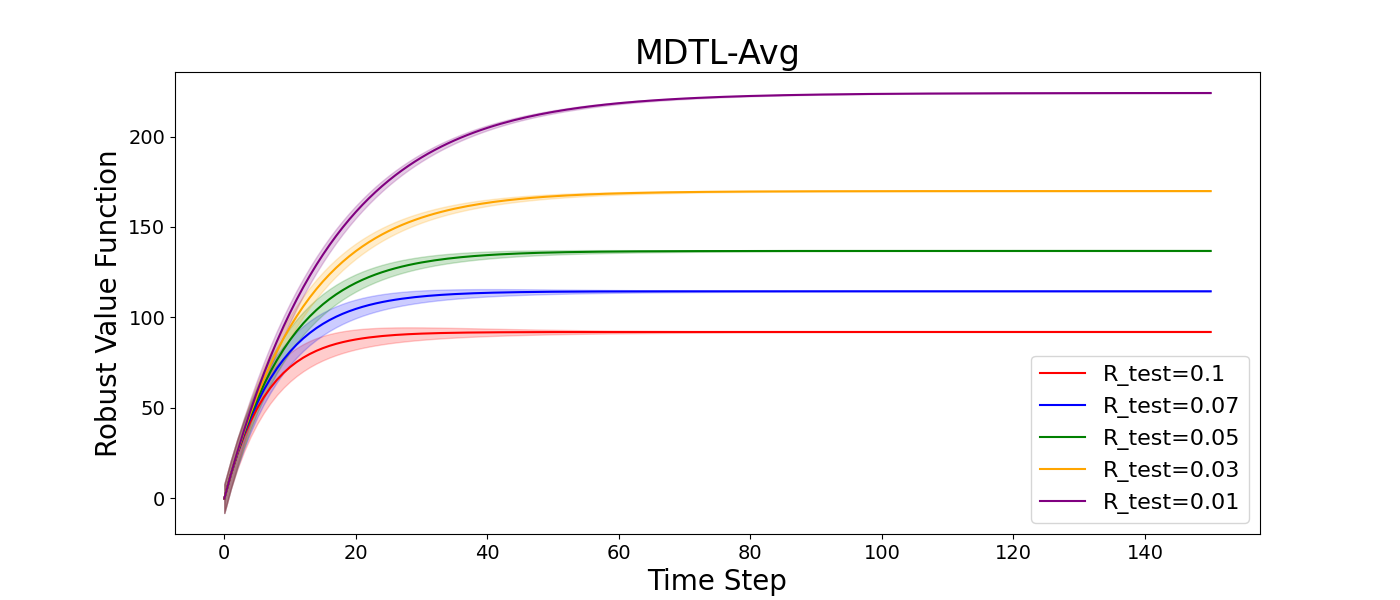}
    \caption{HPC: Values of MDTL-Avg under Different Uncertainty Levels}
    \label{fig:data_plot_avg}
\end{figure} 

\Cref{fig:data_plot_avg} illustrates how the robust value function of the MDTL-Avg method evolves with increasing uncertainty levels. The results show that when target domain uncertainty is minimal ($R_{\text{test}} = 0.01$), the learned policy attains high rewards. As the uncertainty radius expands, the value function declines, reflecting the growing challenge of navigating a more uncertain environment. Nevertheless, our approach consistently outperforms non-robust baselines, demonstrating its resilience in adapting to uncertainty.

\begin{figure}[H]
    \centering
    \includegraphics[width=0.8\textwidth]{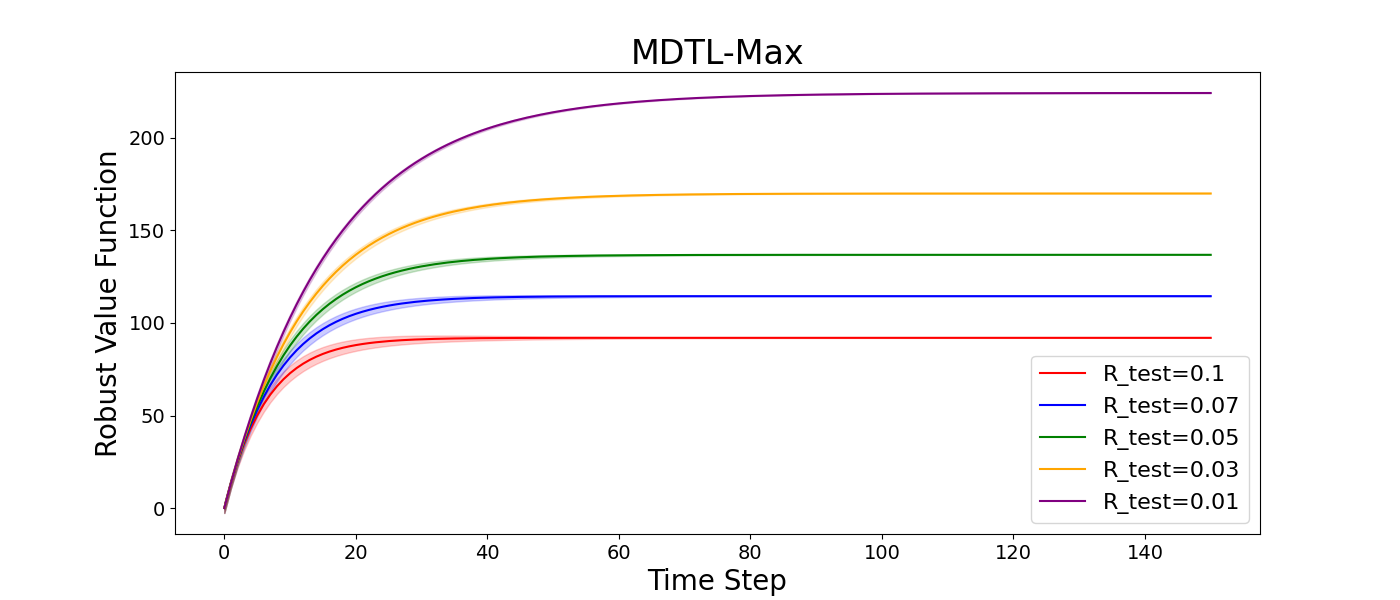}
    \caption{HPC: Values of MDTL-Max under Different  Uncertainty Levels}
    \label{fig:data_plot_max}
\end{figure}

\Cref{fig:data_plot_max} depicts the performance of the MDTL-Max method, revealing a similar pattern. Although the value function decreases as uncertainty grows, the robust approach consistently surpasses non-robust baselines. Notably, both robust methods exhibit strong resilience to uncertainty, further validating the effectiveness of our pessimism-driven strategy.

\subsection{Experiments on Dynamic Vehicle Routing Problem}\label{sec:DVRP exp}
We further implemented our proposed method in a real-world problem: DVRP, to further validate the proposed method. DVRP extends the classic vehicle routing problem by incorporating dynamic elements such as real-time customer requests, and environmental uncertainty, making it more representative of practical logistics and mobility applications. In our experiments, we consider multiple objectives that reflect both operational efficiency and service quality. Specifically, we aim to minimize overall routing cost, enhance route smoothness by reducing unnecessary detours and zigzag patterns, and improve stability by minimizing abrupt changes in planned routes caused by re-routing in highly dynamic environments. These objectives are crucial in real-world deployments where balancing efficiency and robustness is essential. 

\begin{table}[ht]
	\centering
	
	\begin{tabular}{l|l|l|l}
		\hline
		Method                  & Route Distance  $\downarrow$        & Route Smoothness  $\uparrow$        & Route Stability $\uparrow$   \\ \hline
		MDTL-Avg (ours)                & $\textbf{4.01} \pm 0.3$  & $51.76 \pm 1.96$ & $0.63 \pm 0.08$ \\ \hline
		MDTL-Max (ours)               & $\textbf{4.01} \pm 0.27$ & $\textbf{52.46} \pm 1.45$ & $\textbf{0.68} \pm 0.07$ \\ \hline
		Non-robust DR           & $4.53 \pm 0.52$ & $50.32 \pm 1.35$ & $0.55 \pm 0.08$ \\ \hline
		Non-robust Single-learn & $5.20 \pm 0.58$ & $50.70 \pm 0.54$ & $0.40 \pm 0.03$ \\ \hline
	\end{tabular}%
	
	\caption{DVRP: Results of 4 Methods. $\downarrow$: smaller is better. $\uparrow$: larger is better}
	\label{tab:vrp}
\end{table}

Table 2 presents the performance of four methods evaluated on DVRP. Our proposed approaches, MDTL-Avg and MDTL-Max, consistently outperform the baselines across all considered objectives. In terms of Route Distance, both MDTL-Avg and MDTL-Max achieve the lowest average values (4.01), indicating high efficiency in minimizing total travel cost. Regarding Route Smoothness, MDTL-Max achieves the best performance (52.46), suggesting it can effectively reduce unnecessary detours and zigzag behaviors in dynamic environments. Finally, Route Stability, which measures the resilience of planned routes under dynamic re-routing, is significantly higher in both of our methods, with MDTL-Max achieving the highest score of 0.68, demonstrating its robustness to environmental changes.

\subsection{Experiments on Biased Aggregation}\label{sec:biased aggregation}
The unbiased assumption made in MDTL-Max is to facilitate the convergence analysis. Nevertheless, our convergence results can still be extended to the case that the existing bias can be controlled, e.g., the bias introduced by the max aggregation can be controlled through techniques like threshold-MLMC~\cite{wang2024modelfree}. On the other hand, even if there is bias, as long as the expected proxy is still pessimistic, our transfer learning framework and pessimism guarantees still hold. This indicates that our methods are robust to the bias.

To illustrate this, we develop an experiment on Cartpole to show the effect of bias. As shown in \Cref{fig:bias_test}, even if there is bias, our proxy is still conservative and our pessimism framework outperforms the baseline, and are hence robust.
\begin{figure}[H]
	\centering
	\includegraphics[width=0.8\textwidth]{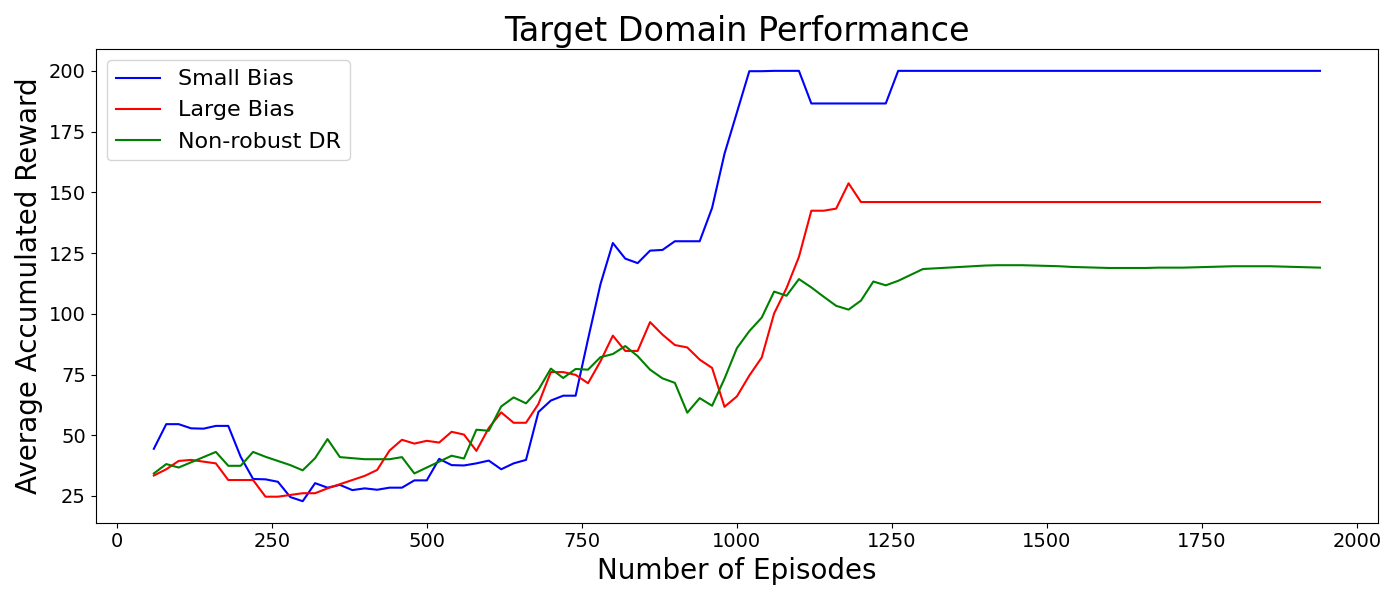}
	\caption{Effect of Aggregation Bias for MDTL-Max}
	\label{fig:bias_test}
\end{figure}

\subsection{Additional Experiments on Negative Transfer}
We further tested our MDTL-Avg and MDTL-Max algorithms on two additional environments, aiming to validate whether MDTL-Max can effectively mitigate negative transfer. For the CartPole Gym environment, we treat the default environment as the target domain and introduce Gaussian perturbations to the pole length in three source domains, with variances of $0.01$, $0.02$, and $0.03$, respectively. For the recycling robot problem, we set most of the source domains with $\alpha \in (0, 0.1)$ and $\beta \in (0, 0.1)$, in which the optimal policy should be waiting. We add another source domain with $\alpha = \beta = 0.9$, in which the optimal policy should be searching. As shown in Figure~\ref{fig:net_test_cp} and~\ref{fig:robot_net_test}, MDTL-Max effectively leverages the most informative source domain, and hence avoids negative transfer.
\begin{figure}[H]
	\centering
	\includegraphics[width=0.8\textwidth]{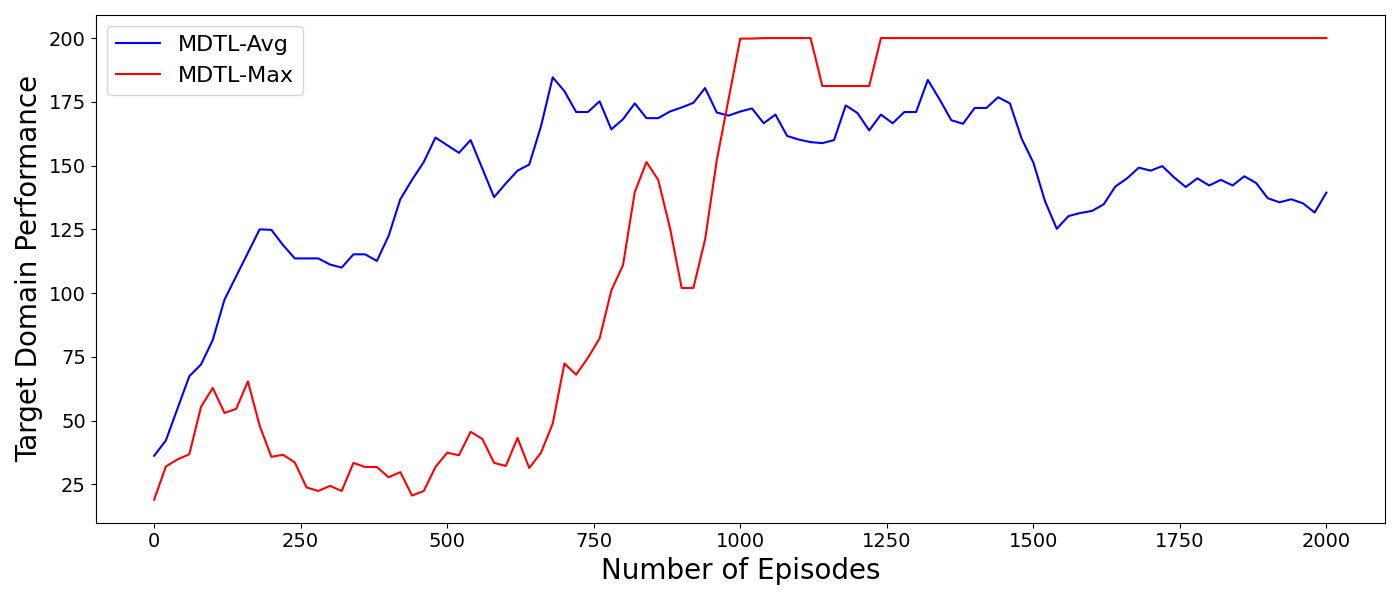}
	\caption{Effect of Negative Transfer under CartPole Gym environment}
	\label{fig:net_test_cp}
\end{figure}
\begin{figure}[H]
	\centering
	\includegraphics[width=0.8\textwidth]{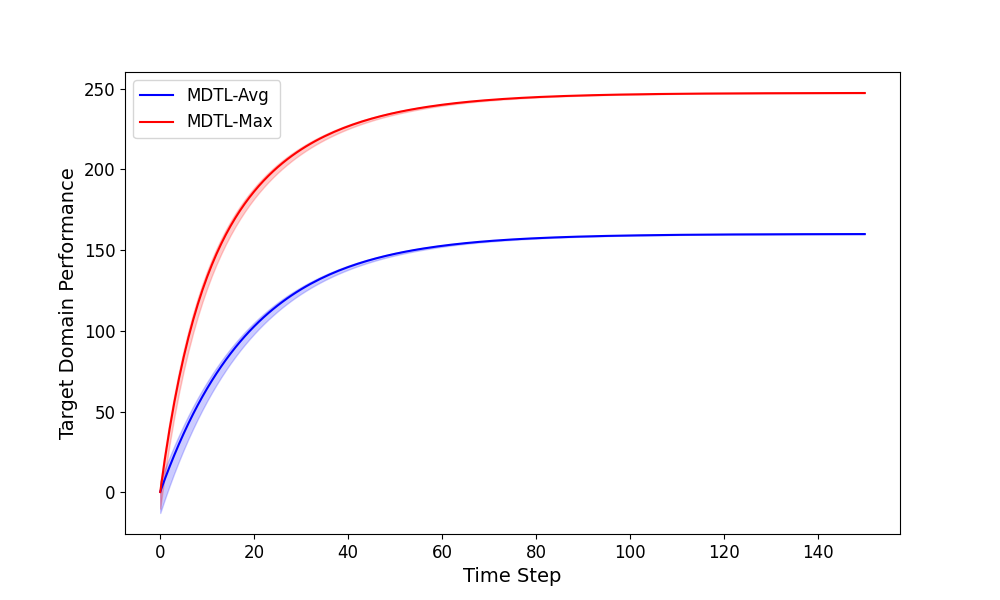}
	\caption{Effect of Negative Transfer under Recycling Robot}
	\label{fig:robot_net_test}
\end{figure}
\subsection{Comparison to Related Robust RL Approaches}\label{sec:robust comparision}
Note that our methods are based on distributionally robust RL, where our uncertainty set is constructed to account for the potential distributional shift. We thus compare our proposed methods with other related robust RL approaches. We numerically verify that our methods enjoy both convergence and performance guarantees, and present the results under the recycling robot environment, where we use adversarial action-robust RL~\cite{tessler2019action} and distributionally robust RL~\cite{panaganti2022sample} as baselines.

\begin{table}[ht]
	\centering
	\resizebox{\textwidth}{!}{
		\begin{tabular}{l|c|c|c|c|c}
			\hline
			Method                           & $R_{test}=$ 0.01     & $R_{test}=$ 0.03     & $R_{test}=$ 0.05    & $R_{test}=$ 0.07    & $R_{test}=$ 0.1     \\ \hline
			MDTL-Avg (ours)                         & \textbf{134.45}      & \textbf{101.91}      & \textbf{82.05}      & \textbf{68.67}      & \textbf{55.17}      \\ \hline
			MDTL-Max (ours)                         & \textbf{134.45}      & \textbf{101.91}      & \textbf{82.05}      & \textbf{68.67}      & \textbf{55.17}      \\ \hline
			Adversarial Robust RL            & 61.06                & 58.10                & 55.39               & 52.92               & 49.58               \\ \hline
			Distributionally Robust RL       & 53.86                & 53.12                & 52.41               & 51.71               & 50.70               \\ \hline
		\end{tabular}
	}
	\caption{Robot: Robustness comparison of 4 Methods under Different Target Domain Uncertainty Levels}
	\label{tab:robust methods}
\end{table}

As shown in~\Cref{tab:robust methods}, although adversarial robust and distributionally robust RL outperforms non-robust methods, their performance remain significantly inferior to ours.

\section{Theoretical Proof for \Cref{lemma:eff}}
\begin{proof}
Denote the optimal policy of $f$ by $\pi_f$.
Note that
    \begin{align}
    &V^{\pi^*}_{P_0}-V^{\pi_f}_{P_0}\nn\\
    &=V^{\pi^*}_{P_0}-f(\pi^*) +\underbrace{f(\pi^*)-f(\pi_f)}_{\leq 0, \text {as } \pi_f=\arg\max_\pi f(\pi)}+\underbrace{f(\pi_f)-V^{\pi_f}_{P_0}}_{\leq 0, \text{ as } f(\pi)\leq V^\pi_{P_0}}\nn\\
    &\leq V^{\pi^*}_{P_0}-f(\pi^*).
\end{align}
\end{proof}
\section{Theoretical Proofs for Averaged Operator Based Transfer}
In this section, we provide the proofs of the theoretical results for the averaged operator based transfer method.

\subsection{Proof of Lemma~\ref{lm: contraction-avg}}
Recall that the robust and optimal robust Bellman operators for agent $k$ are defined as
\begin{align}
	\mathbf{T}_k^\pi Q(s,a)=r(s,a)+\gamma\sigma_{(\mathcal{P}_k)^a_s}(V),
\end{align}
where $V(s)=\sum_{a\in\mathcal{A}}\pi(a|s)Q(s,a)$. Since $\mathbf{T}_k^\pi$ is a $\gamma$-contraction for any $k\in\mathcal{K}$, then for some $Q_1$ and $Q_2$ we have
\begin{align}
	\|\mathbf{T}_{\rm AO}^\pi Q_1-\mathbf{T}_{\rm AO}^\pi Q_2\|\leq \frac{1}{K}\sum_{k=1}^K\|\mathbf{T}_k^\pi Q_1-\mathbf{T}_k^\pi Q_2\|\leq \gamma\|Q_1-Q_2\|.
\end{align}
Hence $\mathbf{T}_{\rm AO}^\pi$ is also a $\gamma$-contraction and has a unique fixed point $Q_{\rm AO}^\pi$, which completes the proof.


\subsection{Proof of Theorem~\ref{theo: lowerbound-avg}}
In order to verify that $V_{\rm AO}^\pi$ is a lower bound on $V_{P_0}^\pi$ we proceed as follows. Let $V_{\rm AO}^\pi(s) = \sum_{a\in\mathcal{A}}\pi(a|s)Q_{\rm AO}^\pi(s,a)$, since $P_0\in\cp_k$, we have
\begin{align}
	Q_{\rm AO}^\pi(s,a)-Q^\pi_{P_0}(s,a)&=\frac{\gamma}{K}\sum_{k=1}^K\sigma_{(\mathcal{P}_k)^a_s}(V_{\rm AO}^\pi)-\gamma (P_0)^a_s V^\pi_{P_0}\nn\\
	&=\gamma \left(\frac{1}{K}\sum_k\sigma_{(\mathcal{P}_k)^a_s}(V_{\rm AO}^\pi)-(P_0)^a_sV_{\rm AO}^\pi\right)+\gamma (P_0)^a_s(V_{\rm AO}^\pi-V^\pi_{P_0})\nn\\
	&\leq 0+ \gamma (P_0)^a_s(V_{\rm AO}^\pi-V^\pi_{P_0}).
\end{align}
Consequently,
\begin{align}
	V_{\rm AO}^\pi(s)-V^\pi_{P_0}(s)&=\sum_{a\in\mathcal{A}} \pi(a|s) Q_{\rm AO}^\pi(s,a)-\sum_{a\in\mathcal{A}} \pi(a|s) Q^\pi_{P_0}(s,a)\nn\\
	&=\sum_{a\in\mathcal{A}} \pi(a|s)(Q_{\rm AO}^\pi(s,a)-Q^\pi_{P_0}(s,a))\nn\\
	&\leq \gamma (P_0^\pi)_s(V_{\rm AO}^\pi-V^\pi_{P_0}),
\end{align}
where $(P_0^\pi)_s$ is the $s$-entry transition kernel induced by $\pi$ under $P_0$. Let $\mathbf{P}_0=((P_0^\pi)_{s_1},...,(P_0^\pi)_{s_{|\mathcal{S}|}})^\top$ be a transition matrix, and consider entry-wise relation, we have
\begin{align}
	V_{\rm AO}^\pi-V^\pi_{P_0}\leq \gamma \mathbf{P}_0(V_{\rm AO}^\pi-V^\pi_{P_0}),
\end{align}
and hence 
\begin{align}
	(I-\gamma\mathbf{P}_0)(V_{\rm AO}^\pi-V^\pi_{P_0})\leq 0. 
\end{align}
Note that $(I-\gamma\mathbf{P}_0)^{-1}=I+\gamma\mathbf{P}_0+\gamma^2\mathbf{P}_0^2+\ldots$ exists and has all positive entries, we thus obtain 
\begin{align}
	V_{\rm AO}^\pi-V^\pi_{P_0}\leq ((I-\gamma\mathbf{P}_0))^{-1} 0=0,
\end{align}
which completes the proof.

\subsection{Proof of Proposition~\ref{prop: lesscon-avg}}\label{sec:D3}
For two distribution $p,q\in\Delta(\mcs)$, total variation is defined as $D(p,q)=\frac{1}{2}\|p-q\|_1$, and Wasserstein distance is defined as $D(p,q)=\inf_{\mu\in\Gamma(p,q)}(\mathbb{E}_{(X,Y)\sim\mu}[(X-Y)^t])^{\frac{1}{t}}$, where $\Gamma(p,q)$ is the set of all couplings of $p,q$.

We have
    \begin{align}
        &Q^\pi_{\bar{\cp}}(s,a)-Q_{\rm AO}^\pi(s,a)\\
        &=\gamma\sigma_{\bar{\cp}^a_s}(V^\pi_{\bar{\cp}})-\gamma\frac{1}{K}\sum_{k=1}^K \sigma_{(\cp_k)^a_s}(V_{\rm AO}^\pi)\\
        &= \frac{\gamma}{K}\sum_{k=1}^K (\sigma_{\bar{\cp}^a_s}(V^\pi_{\bar{\cp}})-\sigma_{(\cp^k)^a_s}(V_{\rm AO}^\pi))\\
        &=\frac{\gamma}{K}\sum_{k=1}^K (\sigma_{\bar{\cp}^a_s}(V^\pi_{\bar{\cp}})-\sigma_{(\cp^k)^a_s}(V^\pi_{\bar{\cp}})+\sigma_{(\cp^k)^a_s}(V^\pi_{\bar{\cp}})-\sigma_{(\cp^k)^a_s}(V_{\rm AO}^\pi)).
    \end{align}
  For any vector $v$ and total variation distance, it holds that
    \begin{align}
       \sigma_{\bar{\cp}^a_s}(v)-  \frac{1}{K}\sum_{k=1}^K \sigma_{(\cp_k)^a_s}(v)&=\max_{\alpha} \{\bar{P}v_\alpha-\Gamma\textbf{Sp}(v_\alpha) \}-\frac{1}{K}\sum_{k=1}^K\max_{\alpha} \{P_k V_\alpha-\Gamma\textbf{Sp}(v_\alpha) \}\nonumber\\
       &\leq \max_{\alpha} \{\bar{P}v_\alpha-\Gamma\textbf{Sp}(v_\alpha) \}-\frac{1}{K} \max_{\alpha} \left\{\sum_{k=1}^K P^k v_\alpha-\Gamma\textbf{Sp}(v_\alpha) \right\}\nonumber\\
       &=0;
    \end{align}
    Similarly, for Wasserstein distance, it holds that
    \begin{align}
       \sigma_{\bar{\cp}^a_s}(v)-  \frac{1}{K}\sum_{k=1}^K \sigma_{(\cp_k)^a_s}(v)& =\sup_{\lambda\geq 0} \{-\lambda \Gamma^l+\sum_{s\in\mcs} \Bar{P}(s) \inf_{y\in\mcs}(v(y)+\lambda d(s,y)^l )\}\nonumber\\
       &\quad-\frac{1}{K}\sum_k\sup_{\lambda\geq 0} \{-\lambda \Gamma^l+\sum_{s\in\mcs} P_k(s) \inf_{y\in\mcs}(v(y)+\lambda d(s,y)^l )\}\nonumber\\
       &\leq \sup_{\lambda\geq 0} \{-\lambda \Gamma^l+\sum_{s\in\mcs} \Bar{P}(s) \inf_{y\in\mcs}(v(y)+\lambda d(s,y)^l )\}\nonumber\\
       &\quad-\sup_{\lambda\geq 0} \{-\lambda \Gamma^l+\sum_{s\in\mcs} \Bar{P}(s) \inf_{y\in\mcs}(v(y)+\lambda d(s,y)^l )\}\nonumber\\
       &=0.
    \end{align}
    The remaining proof follows similarly as Theorem~\ref{theo: lowerbound-avg}.

\subsection{Proof of Theorem~\ref{theo: ave optimal}}
The contraction property of $\mathbf{T}_{\rm AO}$ follows similarly as Lemma~\ref{lm: contraction-avg}.

For policy $\pi_\ast(s)\triangleq \arg\max_{a\in\mca} Q_{\rm AO}(s,a)$, we have
\begin{align}
	\mathbf{T}_{\rm AO}^{\pi_\ast}Q_{\rm AO}(s,a) = r(s,a) + \frac{\gamma}{K}\sum_{k=1}^{K}\sigma_{(\mathcal{P}_k)^a_s}\max_{a\in\mathcal{A}}Q_{\rm AO}(s,a) = \mathbf{T}_{\rm AO}Q_{\rm AO}(s,a) = Q_{\rm AO}(s,a).
\end{align}
Hence $Q_{\rm AO}(s,a)$ is also the unique fixed point of $\mathbf{T}_{\rm AO}^{\pi_\ast}$, moreover, $V_{\rm AO}^{\pi_\ast} = V_{\rm AO}$.

For any $k\in\mathcal{K}$, denote the worst-case kernel of $V_{\rm AO}$ in $\cp_k$ by $P_k'$, i.e.,
\begin{align}
	\sigma_{(\cp_k)^a_s}(V_{\rm AO})=(P_k')^a_sV_{\rm AO},\ \forall (s,a)\in\mathcal{S}\times\mathcal{A}.
\end{align}
Since $V_{\rm AO}=\max_{a\in\mathcal{A}}Q_{\rm AO}(s,a)$, it holds that
\begin{align}
	V_{\rm AO}^\pi(s)-V_{\rm AO} (s)&\leq \sum_{a\in\mathcal{A}} \pi(a|s) \left( Q_{\rm AO}^\pi(s,a)-Q_{\rm AO} (s,a)\right)\nn\\
	&=\sum_{a\in\mathcal{A}} \pi(a|s) \frac{\gamma}{K}\sum_{k=1}^K (\sigma_{(\mathcal{P}_k)^a_s}(V_{\rm AO}^\pi)-\sigma_{(\mathcal{P}_k)^a_s}(V_{\rm AO}))\nn\\
	&\leq \sum_{a\in\mathcal{A}} \pi(a|s) \frac{\gamma}{K}\sum_{k=1}^K ((P_k')_s^aV_{\rm AO}^\pi-(P_k')_s^aV_{\rm AO}).
\end{align}
Let $\tilde{k}=\arg\max_{k\in\mathcal{K}}\sum_{a\in\mathcal{A}}\pi(a|s)\gamma(P_k')_s^a(V_{\rm MP}^\pi-V_{\rm MP})$ and $\tilde{P}_{s} = \sum_{a\in\mathcal{A}}\pi(a|s)\gamma(P^\prime_{\tilde{k}})_s^a$, we obtain
\begin{align}
	V_{\rm AO}^\pi(s)-V_{\rm AO} (s)&\leq \sum_{a\in\mathcal{A}} \pi(a|s) \frac{\gamma}{K}\sum_{k=1}^K ((P_k')_s^aV_{\rm AO}^\pi-(P_k')_s^aV_{\rm AO})\nn\\
	&\triangleq \gamma \tilde{P}_{s} (V_{\rm AO}^\pi-V_{\rm AO}),
\end{align}
and hence by noting that $(I-\gamma\tilde{P})^{-1}$ has all positive entries we have
\begin{align}
	(I-\gamma\tilde{P})(V_{\rm AO}^\pi-V_{\rm AO})\leq 0,
\end{align}
which implies that $\max_\pi V_{\rm AO}^\pi\leq V_{\rm AO}$. On the other hand, since $V_{\rm AO}^{\pi_\ast} = V_{\rm AO}$ and $\max_\pi V_{\rm AO}^\pi\geq V_{\rm AO}^{\pi_\ast}$, together with the upper bound it follows that  $\max_\pi V_{\rm AO}^\pi=V_{\rm AO}$. Note that $V_{\rm AO}(s) = \max_{a\in\mathcal{A}}Q_{\rm AO}(s,a)$, hence $\arg\max_\pi V_{\rm AO}^\pi = \pi_\ast$, which completes the proof.

\subsection{Proof of Theorem~\ref{theo: ce-avg}}
We first show the convergence of \cref{alg:MDTL-Avg with E} with the accurate operator to illustrate the efficiency of our algorithm design.
\begin{theorem}\label{theo: ce-avg}
		Let $E-1 \le \min \frac{1}{\lambda}\{\frac{\gamma}{1-\gamma}, \frac{1}{K}\}$, and $\lambda = \frac{4\log^2(TK)}{T(1-\gamma)}$. If $\hat{\mathbf{T}}_k=\mathbf{T}_k$, it holds that
		\begin{align}\label{eq: decay-avg}
		\left\|Q_{\rm AO}-\frac{\sum^K_{k=1}Q_{k}}{K}\right\|&\leq \tilde{\mathcal{O}}\left(  \frac{1}{TK}  +  \frac{(E-1)\Gamma}{T} \right).
		\end{align}
	\end{theorem}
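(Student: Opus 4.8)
The plan is to analyze the error recursion of Algorithm~\ref{alg:MDTL-Avg with E} by decomposing the deviation $Q_{\rm AO} - \frac{1}{K}\sum_{k=1}^K Q_k$ into a \emph{consensus error} (how far the local tables $Q_k$ drift from their average between synchronizations) and a \emph{centralized contraction error} (how fast the averaged iterate contracts toward the fixed point $Q_{\rm AO}$). Since $\hat{\mathbf{T}}_k = \mathbf{T}_k$ here, there is no stochastic noise, so the whole argument is deterministic and reduces to tracking two coupled recursions. First I would define the averaged iterate $\bar{Q}^t \triangleq \frac{1}{K}\sum_{k=1}^K Q_k^t$ and write down the one-step update it satisfies: within a synchronization block the local updates give $\bar{Q}^{t+1} = (1-\lambda)\bar{Q}^t + \frac{\lambda}{K}\sum_k \mathbf{T}_k(Q_k^t)$, and I would rewrite $\frac{1}{K}\sum_k \mathbf{T}_k(Q_k^t) = \mathbf{T}_{\rm AO}(\bar{Q}^t) + \frac{1}{K}\sum_k\big(\mathbf{T}_k(Q_k^t) - \mathbf{T}_k(\bar{Q}^t)\big)$, isolating the consensus-induced perturbation.

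The second step is to bound the consensus error $\max_k \|Q_k^t - \bar{Q}^t\|$. Right after a synchronization all local tables coincide, so the drift accumulates over at most $E-1$ local steps. Using the $\gamma$-contraction of each $\mathbf{T}_k$ and the boundedness $0 \le Q_k \le \frac{1}{1-\gamma}$, I would show that the per-step divergence among local tables is controlled by the heterogeneity of the operators, which is where $\Gamma$ enters: the difference $\mathbf{T}_k(Q) - \mathbf{T}_{k'}(Q)$ is governed by the discrepancy of the support functions $\sigma_{(\mathcal{P}_k)^a_s}$ versus $\sigma_{(\mathcal{P}_{k'})^a_s}$, which scales with the uncertainty radius $\Gamma$. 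Propagating this over a block of length $E-1$ with step size $\lambda$ yields a consensus bound of order $(E-1)\lambda\Gamma/(1-\gamma)$ (up to constants), matching the $(E-1)\Gamma$ structure of the target rate after the contraction is accounted for.

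The third step is to close the recursion for $\|\bar{Q}^t - Q_{\rm AO}\|$. Substituting the decomposition above and using the $\gamma$-contraction of $\mathbf{T}_{\rm AO}$ from Theorem~\ref{theo: ave optimal}, I would obtain $\|\bar{Q}^{t+1} - Q_{\rm AO}\| \le (1 - \lambda(1-\gamma))\|\bar{Q}^t - Q_{\rm AO}\| + \lambda\cdot(\text{consensus error})$. Unrolling this geometric recursion over $T$ steps gives a transient term that decays like $(1-\lambda(1-\gamma))^T$ times the initial error, plus a steady-state term proportional to the consensus bound divided by the effective contraction factor $(1-\gamma)$. The condition $E-1 \le \frac{1}{\lambda}\min\{\frac{\gamma}{1-\gamma}, \frac{1}{K}\}$ is what keeps the block length from overwhelming the contraction, and plugging in $\lambda = \frac{4\log^2(TK)}{T(1-\gamma)}$ converts the geometric decay $(1-\lambda(1-\gamma))^T \approx e^{-4\log^2(TK)}$ into the $\tilde{\mathcal{O}}(1/(TK))$ term while the consensus contribution becomes the $\tilde{\mathcal{O}}((E-1)\Gamma/T)$ term.

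I expect the main obstacle to be the careful bookkeeping of the consensus error across a synchronization block: one must track how the local deviations $Q_k^t - \bar{Q}^t$ build up over exactly the steps since the last aggregation (not over all $T$ steps), reset to zero at each synchronization, and correctly extract the dependence on $\Gamma$ from the heterogeneity of the robust operators $\mathbf{T}_k$ rather than from nominal-kernel differences. Getting the interplay between $E$, $\lambda$, and the contraction factor right—so that the stated constraint on $E-1$ exactly yields the clean two-term bound—is the delicate part; the rest is a standard contraction-plus-perturbation unrolling.
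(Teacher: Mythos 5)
Your proposal is correct and arrives at the stated rate, but it takes a genuinely different route from the paper's proof. You decompose the error into a \emph{consensus} term $\max_k\|Q_k^t-\bar Q^t\|$ --- which resets to zero at every aggregation and, within a block, is driven purely by operator heterogeneity, giving a self-contained bound of order $\lambda(E-1)\Gamma/(1-\gamma)$ --- plus a centralized recursion for $\|\bar Q^t-Q_{\rm AO}\|$ with per-step contraction factor $1-\lambda(1-\gamma)$ and perturbation $\lambda\gamma$ times the consensus error. Since the consensus bound does not feed back through the global error, the two recursions decouple and a single geometric unrolling over all $T$ steps yields a transient $e^{-\lambda(1-\gamma)T}=e^{-4\log^2(TK)}\ll 1/(TK)$. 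The paper instead tracks the local errors $\Delta_k^t=Q_{\rm AO}-Q_k^t$ relative to the fixed point (Lemmas~\ref{lm: bounding second term in error iteration} and~\ref{lm: sample complexity: stepsize push: alternative}); these do not vanish at synchronization, so they must be tied back to $\|\Delta^{\chi(t)}\|$, the resulting recursion only contracts blockwise with factor $\tfrac{1+\gamma}{2}$, and closing it requires the auxiliary block parameters $\beta$ and $L$, producing the weaker but still sufficient transient $e^{-\frac{1}{2}\sqrt{(1-\gamma)\lambda T}}\approx 1/(TK)$. Both arguments ultimately rest on the same two ingredients you identify: the $\gamma$-contraction of each $\mathbf{T}_k$ (hence of $\mathbf{T}_{\rm AO}$, Lemma~\ref{lm: contraction-avg} and Theorem~\ref{theo: ave optimal}) and the heterogeneity bound $|\sigma_{(\mathcal{P}_k)_s^a}(V)-\sigma_{(\mathcal{P}_{k'})_s^a}(V)|=O(\Gamma/(1-\gamma))$ for iterates bounded by $\tfrac{1}{1-\gamma}$. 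One caveat applies equally to your sketch and to the paper: that heterogeneity bound implicitly needs the local radii $\Gamma_s^a$ to be of order $\Gamma$ (they are only assumed $\geq\Gamma$) and a concrete divergence such as total variation; neither treatment makes this fully explicit. On balance your federated-learning-style decomposition is the cleaner argument for the exact-operator case, and the condition $E-1\le\frac{1}{\lambda}\min\{\frac{\gamma}{1-\gamma},\frac{1}{K}\}$ plays a lighter role in it than you suggest --- it is needed only to keep the in-block consensus drift controlled (via $\lambda\le 1/E$ and $\lambda(E-1)\le 1/K$), not to preserve the contraction itself.
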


In this proof, we first assume that $E-1\leq\frac{1-\gamma}{4\gamma\lambda}$ and $\lambda\leq\frac{1}{E}$ to establish the general convergence rate, which is independent of $K$. Then we prove that carefully selected $E$ and $\lambda$ can balance each term in the convergence rate to achieve partial linear speedup.

Denote $\bar{Q}^{t+1}$ and $Q_k^{t+1}$ the values of $\bar{Q}$ and $Q_k$ at iteration $t+1$. Since $\hat{\mathbf{T}}_k = \mathbf{T}_k$, we have
\begin{align}
    \bar{Q}^{t+1} &= \frac{1}{K}\sum_{k=1}^K Q_k^{t+1}\nonumber\\
    &= \frac{1}{K}\sum_{k=1}^K((1-\lambda)Q_k^{t}+\lambda(r + \gamma\sigma_{\mathcal{P}_k}(V_k^t))),
\end{align}
where $V_k^t(s) = \max_{a\in\mathcal{A}}Q_k^t(s,a)$. We also denote the value function associate with $Q_{\rm AO}$ by $V_{\rm AO} = \max_{a\in\mathcal{A}}Q_{\rm AO}(s,a)$. We define the iteration error as $\Delta^{t+1}:=Q_{\rm AO} - \bar{Q}^{t+1}$, additionally, $\Delta^0:=Q_{\rm AO} - Q^0$. The iteration error then takes the following form:
\begin{align}\label{eq: iteratin error}
	\Delta^{t+1}&=Q_{\rm AO}-\bar{Q}^{t+1}\nonumber\\
	&= \frac{1}{K}\sum_{k=1}^K(Q_{\rm AO} - ((1-\lambda)Q_k^t+\lambda(r+\gamma \sigma_{\mathcal{P}_k}(V_k^t))))\nonumber\\
	& = \frac{1}{K}\sum_{k=1}^K((1 - \lambda)(Q_{\rm AO}-Q_k^t) + \lambda(Q_{\rm AO}- r-\gamma \sigma_{\mathcal{P}_k}(V_k^t))\nonumber\\
	& = (1 - \lambda)\Delta^{t} +\frac{\gamma \lambda}{K}\sum_{k=1}^K (\sigma_{\mathcal{P}_k}(V_{\rm AO}) - \sigma_{\mathcal{P}_k}(V_k^t)) \nonumber\\
	&=(1-\lambda)^{t+1}\Delta^0 + \gamma\lambda\sum_{i=0}^t (1-\lambda)^{t-i}\frac{1}{K} \sum_{k=1}^K (\sigma_{\mathcal{P}_k}(V_{\rm AO}) - \sigma_{\mathcal{P}_k}(V_k^i)).
\end{align}

By taking the $l_\infty$-norm on both sides, we obtain
\begin{align}\label{eq: inf-norm of iteration error}
	\left\|\Delta^{t+1}\right\|\leq(1-\lambda)^{t+1}\left\|\Delta^0\right\| + \left\|\gamma\lambda\sum_{i=0}^t (1-\lambda)^{t-i}\frac{1}{K} \sum_{k=1}^K (\sigma_{\mathcal{P}_k}(V_{\rm AO}) - \sigma_{\mathcal{P}_k}(V_k^i))\right\|.
\end{align}
Since $0\leq Q^0\leq\frac{1}{1-\gamma}$, the first term in~\eqref{eq: iteratin error} can be bounded as
\begin{align}\label{eq: bound 1}
	(1-\lambda)^{t+1}\left\|\Delta^0\right\|\leq(1-\lambda)^{t+1}\frac{1}{1-\gamma}.
\end{align}

To bound the second term in~\eqref{eq: iteratin error}, we divide the summation into two parts. Define the most recent aggregation step of $t$ as $\chi(t) := t-(t\mod E)$. Then for any $\beta E\leq t\leq T$, we have
\begin{align}\label{eq: divided summation}
	 &\sum_{i=0}^t(1\!-\!\lambda)^{t-i}\lambda \gamma \left\|{\frac{1}{K}\sum_{k=1}^K (\sigma_{\mathcal{P}_k}(V_{\rm AO})\!-\!\sigma_{\mathcal{P}_k}(V_k^i))}\right\|\nonumber\\
	 &\quad= \sum_{i=0}^{\chi(t)-\beta E} (1\!-\!\lambda)^{t-i}\lambda \gamma \left\|{\frac{1}{K}\sum_{k=1}^K(\sigma_{\mathcal{P}_k}(V_{\rm AO})\!-\!\sigma_{\mathcal{P}_k}(V_k^i))} \right\| \!+\! \sum_{i= \chi(t)-\beta E+1}^{t}(1\!-\!\lambda)^{t-i}\lambda \gamma \left\|{\frac{1}{K}\sum_{k=1}^K(\sigma_{\mathcal{P}_k}(V_{\rm AO})\!-\!\sigma_{\mathcal{P}_k}(V_k^i))} \right\|\nonumber\\
	 &\quad \leq \frac{\gamma}{1\!-\!\gamma} (1-\lambda)^{t-\chi(t)+\beta E}\!+\! \sum_{i= \chi(t)-\beta E+1}^{t}(1\!-\!\lambda)^{t-i}\lambda \gamma \left\|{\frac{1}{K}\sum_{k=1}^K(\sigma_{\mathcal{P}_k}(V_{\rm AO})\!-\!\sigma_{\mathcal{P}_k}(V_k^i))} \right\|.
\end{align}

Before proceeding with the proof, we introduce the following lemma to bound the second term in~\eqref{eq: divided summation}. The proof of this lemma can be found in Appendix~\ref{subsec: lemma 1}. We define $\Delta_k^t := Q_{\rm AO}-Q_k^t$ the local iteration error of agent $k$. We also define $\bar{\sigma}_{(\cp_k)^a_s}(v):= \frac{1}{K}\sum_{k}^{K}\sigma_{(\cp_k)^a_s}(v)$ for any vector $v$.
\begin{lemma}\label{lm: bounding second term in error iteration}
	If $t\mod E = 0$, then $\left\|\frac{1}{K} \sum_{k=1}^K (\sigma_{\mathcal{P}_k}(V_{\rm AO}) - \sigma_{\mathcal{P}_k}(V_k^t))\right\| \leq \left\|{\Delta^t}\right\|$. Otherwise, 
	\begin{align}
		\left\|\frac{1}{K} \sum_{k=1}^K (\sigma_{\mathcal{P}_k}(V_{\rm AO})\ - \sigma_{\mathcal{P}_k}(V_k^t))\right\|& \leq \left\|{\Delta^{\chi(t)}}\right\| + 2\lambda \frac{1}{K}\sum_{k=1}^K\sum_{t^{\prime} = \chi(t)}^{t-1} \left\|{\Delta^{t^{\prime}}_k}\right\|\nonumber\\
		&\quad+  \gamma \lambda \frac{t-1-\chi(t)}{K}\sum_{k=1}^K \max_{(s,a)\in\mathcal{S}\times\mathcal{A}} \left|{ \sigma_{(\mathcal{P}_k)^a_s}(V_{\rm AO})-\bar{\sigma}_{(\mathcal{P}_k)^a_s}(V_{\rm AO})}\right|,
	\end{align} 
	where we use the convention that $\sum_{t^{\prime} = \chi(t)}^{\chi(t)-1} \left\|{\Delta_{t^{\prime}}^k}\right\| =0$. 
\end{lemma}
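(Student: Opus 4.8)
The plan is to prove the two cases by reducing every difference of support functions to an iteration error, using two elementary nonexpansiveness facts: for any uncertainty set and vectors $V_1,V_2$ the support function obeys $|\sigma_{\cp}(V_1)-\sigma_{\cp}(V_2)|\le\|V_1-V_2\|_\infty$ (because $\sigma_{\cp}(V)=\min_{p\in\cp}p^\top V$ and every $p$ is a probability vector), and the greedy map satisfies $\|\max_a Q_1(\cdot,a)-\max_a Q_2(\cdot,a)\|_\infty\le\|Q_1-Q_2\|$. Composing them yields bounds such as $\|V_{\ao}-V_k^t\|_\infty\le\|\Delta_k^t\|$. For the aggregation case $t\bmod E=0$, the synchronization step forces $Q_k^t=\bar Q^t$ for all $k$, so $V_k^t=V^t:=\max_a\bar Q^t(\cdot,a)$ and the quantity collapses to $\tfrac1K\sum_k(\sigma_{\cp_k}(V_{\ao})-\sigma_{\cp_k}(V^t))$; applying the two facts entrywise bounds it by $\|V_{\ao}-V^t\|_\infty\le\|\Delta^t\|$, which is the first claim.

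For $t\bmod E\neq0$ I telescope back to the most recent synchronization index $\chi(t)$, where all agents agree, $Q_k^{\chi(t)}=\bar Q^{\chi(t)}$ and $V_k^{\chi(t)}=V^{\chi(t)}$. Inserting $\sigma_{\cp_k}(V^{\chi(t)})$ splits the quantity as
\begin{align}
\frac1K\sum_k(\sigma_{\cp_k}(V_{\ao})-\sigma_{\cp_k}(V_k^t))
&=\underbrace{\frac1K\sum_k(\sigma_{\cp_k}(V_{\ao})-\sigma_{\cp_k}(V^{\chi(t)}))}_{(A)}\nn\\
&\quad+\underbrace{\frac1K\sum_k(\sigma_{\cp_k}(V^{\chi(t)})-\sigma_{\cp_k}(V_k^t))}_{(B)}.
\end{align}
Term $(A)$ is exactly the aggregation-case expression evaluated at $\chi(t)$, so the first case gives $\|(A)\|\le\|\Delta^{\chi(t)}\|$, the leading term. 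Term $(B)$ captures the local drift since synchronization: by nonexpansiveness each summand is at most $\|V^{\chi(t)}-V_k^t\|_\infty\le\|Q_k^{\chi(t)}-Q_k^t\|$, hence $\|(B)\|\le\tfrac1K\sum_k\|Q_k^t-Q_k^{\chi(t)}\|$.

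It then remains to unroll the drift. Writing the local update as $Q_k^{j+1}-Q_k^j=\lambda(\bt_k Q_k^j-Q_k^j)$ and using $Q_{\ao}=r+\gamma\bar\sigma(V_{\ao})$ with $Q_k^j=Q_{\ao}-\Delta_k^j$, the per-step residual rearranges to
\begin{align}
\bt_k Q_k^j-Q_k^j=\Delta_k^j+\gamma(h_k-\delta_k^j),
\end{align}
where $h_k:=\sigma_{\cp_k}(V_{\ao})-\bar\sigma(V_{\ao})$ is the per-domain heterogeneity (so $\|h_k\|=\max_{s,a}|\sigma_{(\cp_k)^a_s}(V_{\ao})-\bar\sigma_{(\cp_k)^a_s}(V_{\ao})|$) and $\delta_k^j:=\sigma_{\cp_k}(V_{\ao})-\sigma_{\cp_k}(V_k^j)$ obeys $\|\delta_k^j\|\le\|\Delta_k^j\|$. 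Bounding $1+\gamma\le2$ and summing $\lambda\sum_{j=\chi(t)}^{t-1}$ produces the local-error term $2\lambda\tfrac1K\sum_k\sum_{t'=\chi(t)}^{t-1}\|\Delta_k^{t'}\|$ together with a heterogeneity term whose prefactor counts the intra-round local steps, giving $\gamma\lambda\tfrac{t-1-\chi(t)}{K}\sum_k\|h_k\|$ after careful step bookkeeping.

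The hard part will be handling the nonlinearity of $\sigma_{\cp_k}$ and of the greedy $\max$, which rules out any exact linear recursion for the averaged iterate; the resolution is to route every comparison through the two nonexpansiveness estimates and to use the residual identity so that the accumulated drift separates cleanly into a local-error piece (later controlled by the contraction argument) and a heterogeneity piece that is $O(\Gamma)$ and is precisely what yields the $(E-1)\Gamma/T$ communication--accuracy trade-off in \Cref{theo: rate2}.
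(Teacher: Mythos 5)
Your proposal is correct and follows essentially the same route as the paper's proof: the same nonexpansiveness reduction for the synchronization case, the same decomposition through $\sigma_{\mathcal{P}_k}(V_k^{\chi(t)})$, and the same telescoping of the intra-round drift via the identity $Q_{\rm AO}=r+\gamma\bar{\sigma}(V_{\rm AO})$ to separate the $2\lambda\|\Delta^{t'}_k\|$ and heterogeneity terms. The only cosmetic difference is that you bound the drift by $\sum_j\|Q_k^{j+1}-Q_k^j\|$ in norm where the paper runs a two-sided pointwise bound on $V_k^t(s)-V_k^{\chi(t)}(s)$ using the greedy actions at $t$ and $\chi(t)$, and your count of $t-\chi(t)$ intra-round steps (versus the stated $t-1-\chi(t)$, an off-by-one already present in the paper) is still at most $E-1$, so nothing downstream changes.
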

Then by Lemma~\ref{lm: bounding second term in error iteration}, \eqref{eq: divided summation} yields
\begin{align}\label{eq: bound of second term in error iteration}
	&\sum_{i= \chi(t)-\beta E+1}^{t}(1 - \lambda)^{t-i}\lambda \gamma \left\|{\frac{1}{K}\sum_{k=1}^K(\sigma_{\mathcal{P}_k}(V_{\rm AO}) - \sigma_{\mathcal{P}_k}(V_k^i))} \right\|\nonumber\\
	&\quad \leq\sum_{i= \chi(i)-\beta E+1}^{t}(1 - \lambda)^{t-i}\lambda \gamma\left(\left\|{\Delta^{\chi(i)}}\right\| + 2\lambda \frac{1}{K}\sum_{k=1}^K\sum_{j = \chi(t)}^{i-1} \left\|{\Delta^j_k}\right\|\right.\nonumber\\
	&\qquad\left.+  \gamma \lambda \frac{t-1-\chi(i)}{K}\sum_{k=1}^K \max_{(s,a)\in\mathcal{S}\times\mathcal{A}} \left|{ \sigma_{(\mathcal{P}_k)^a_s}(V_{\rm AO})-\bar{\sigma}_{(\mathcal{P}_k)^a_s}(V_{\rm AO})}\right|\right)\nonumber\\
	&\quad\leq\sum_{i= \chi(i)-\beta E+1}^{t}(1 - \lambda)^{t-i}\lambda \gamma\left(\left\|{\Delta^{\chi(i)}}\right\|+2\lambda \frac{1}{K}\sum_{k=1}^K\sum_{j = \chi(t)}^{i-1} \left\|{\Delta^j_k}\right\|+\gamma\lambda\Gamma\frac{E-1}{1-\gamma}\right).
\end{align}

Furthermore, the local iteration error $\left\|{\Delta^j_k}\right\|$ can be bounded using the following lemma. The proof of this lemma can be found in Appendix~\ref{subsec: lemma 2}.
\begin{lemma}
	 \label{lm: sample complexity: stepsize push: alternative} 
	 If $\lambda\le \frac{1}{E}$, then for any $t\in[0,T-1]$ and $k\in\mathcal{K}$, $\left\|{\Delta_k^{t}}\right\|
	 \leq\left\|\Delta^{\chi(t)}\right\| + \frac{3\gamma}{1-\gamma}\lambda (E-1)\Gamma$.
\end{lemma}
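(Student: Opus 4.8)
The plan is to exploit the fact that the most recent aggregation resets every local table to the global average. At the synchronization step $\chi(t)$ the algorithm sets $Q_k^{\chi(t)} = \bar Q^{\chi(t)}$ for every $k$, so that $\Delta_k^{\chi(t)} = Q_{\rm AO} - Q_k^{\chi(t)} = Q_{\rm AO} - \bar Q^{\chi(t)} = \Delta^{\chi(t)}$. It therefore suffices to control how far the at most $E-1$ free local iterations between $\chi(t)$ and $t$ can push $\Delta_k^\tau$ away from this common starting error. Using $\hat{\mathbf{T}}_k = \mathbf{T}_k$, I would write the one-step local recursion as
\begin{align}
\Delta_k^{\tau+1} = (1-\lambda)\Delta_k^\tau + \lambda\big(Q_{\rm AO} - \mathbf{T}_k Q_{\rm AO}\big) + \lambda\big(\mathbf{T}_k Q_{\rm AO} - \mathbf{T}_k Q_k^\tau\big),
\end{align}
splitting the update into a \emph{heterogeneity} term $Q_{\rm AO} - \mathbf{T}_k Q_{\rm AO}$ and a \emph{contraction} term. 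The $\gamma$-contraction of $\mathbf{T}_k$ bounds the last term by $\gamma\|\Delta_k^\tau\|$, yielding the scalar recursion $\|\Delta_k^{\tau+1}\| \le (1-\lambda(1-\gamma))\|\Delta_k^\tau\| + \lambda b$, where $b := \|Q_{\rm AO} - \mathbf{T}_k Q_{\rm AO}\|$.

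The crux is to show $b \le \tfrac{3\gamma}{1-\gamma}\Gamma$. For this I would use that $Q_{\rm AO}$ is the fixed point of the averaged operator, $Q_{\rm AO} = \tfrac1K\sum_{j}\mathbf{T}_j Q_{\rm AO}$, so that
\begin{align}
Q_{\rm AO} - \mathbf{T}_k Q_{\rm AO} = \frac1K\sum_{j=1}^K\big(\mathbf{T}_j Q_{\rm AO} - \mathbf{T}_k Q_{\rm AO}\big),
\end{align}
and each summand equals $\gamma\big(\sigma_{(\cp_j)^a_s}(V_{\rm AO}) - \sigma_{(\cp_k)^a_s}(V_{\rm AO})\big)$ since the reward cancels, with $V_{\rm AO}(s)=\max_{a}Q_{\rm AO}(s,a)$. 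The remaining work is to bound the difference of two support functions taken over balls of equal radius but different nominal kernels. Here I would use that every uncertainty set contains $P_0$ (because $\Gamma^a_s \ge \Gamma \ge D(P_0,P_k)$): this forces both $\sigma_{(\cp_j)^a_s}(V_{\rm AO})$ and $\sigma_{(\cp_k)^a_s}(V_{\rm AO})$ into the interval $[\,\sigma_{\{q:D(q,P_0)\le 2\Gamma^a_s\}}(V_{\rm AO}),\, P_0 V_{\rm AO}\,]$, whose length is at most $O(\Gamma)\cdot\mathrm{Sp}(V_{\rm AO})$ by H\"older's inequality, together with $\mathrm{Sp}(V_{\rm AO})\le \|V_{\rm AO}\| \le \tfrac{1}{1-\gamma}$. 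This is the step I expect to be the main obstacle, since it is where the domain-similarity bound $\Gamma$ is converted into a quantitative gap and where the precise constant is pinned down; it is also the only place the specific geometry of the uncertainty set (total variation or Wasserstein, via the duality of $\sigma$) enters.

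Finally I would unroll the scalar recursion from $\chi(t)$ to $t$. Since $1-\lambda(1-\gamma)\le 1$, the homogeneous part contributes at most $\|\Delta_k^{\chi(t)}\| = \|\Delta^{\chi(t)}\|$, and the forcing part contributes $\lambda b\sum_{i=0}^{t-\chi(t)-1}(1-\lambda(1-\gamma))^i \le \lambda b\,(t-\chi(t)) \le \lambda b\,(E-1)$, using $t-\chi(t)=t\bmod E\le E-1$. The hypothesis $\lambda\le 1/E$ guarantees $1-\lambda\ge 0$, so each local update remains a genuine convex combination and $\|Q_k^\tau\|\le \tfrac{1}{1-\gamma}$ throughout, which legitimizes the span bound above. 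Substituting $b\le \tfrac{3\gamma}{1-\gamma}\Gamma$ then gives $\|\Delta_k^t\| \le \|\Delta^{\chi(t)}\| + \tfrac{3\gamma}{1-\gamma}\lambda(E-1)\Gamma$, as claimed.
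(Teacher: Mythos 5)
Your proof is correct and follows essentially the same route as the paper: both decompose the local error recursion into a decaying initial-error term, a heterogeneity term measuring $Q_{\rm AO}-\mathbf{T}_kQ_{\rm AO}=\gamma\bigl(\tfrac{1}{K}\sum_j\sigma_{(\cp_j)^a_s}(V_{\rm AO})-\sigma_{(\cp_k)^a_s}(V_{\rm AO})\bigr)$, and a contraction term bounded by $\gamma\|\Delta_k^\tau\|$, then unroll over the at most $E-1$ local steps since the last synchronization. The only real difference is bookkeeping: the paper first unrolls with the full heterogeneity sum $\frac{\gamma}{1-\gamma}\lambda(E-1)\Gamma$ treated as a constant and then inducts, picking up the factor $({1+\gamma\lambda})^{t-\chi(t)}\le(1+1/E)^E\le e\le 3$ (this is where $\lambda\le 1/E$ is actually used), whereas you keep a clean per-step scalar recursion $\|\Delta_k^{\tau+1}\|\le(1-\lambda(1-\gamma))\|\Delta_k^\tau\|+\lambda b$ and place the slack in the support-function gap $b\le\frac{3\gamma}{1-\gamma}\Gamma$; your bound on that gap via $P_0\in\cp_j\cap\cp_k$ and the interval $[\sigma_{2\Gamma\text{-ball}}(V_{\rm AO}),\,P_0V_{\rm AO}]$ is in fact more explicit than the paper, which simply asserts the per-step heterogeneity is at most $\frac{\Gamma}{1-\gamma}$.
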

Then by Lemma~\ref{lm: sample complexity: stepsize push: alternative}, we have
\begin{align}\label{eq: bound of bound of second term}
	&\sum_{i= \chi(i)-\beta E+1}^{t}(1 - \lambda)^{t-i}\lambda \gamma2\lambda \frac{1}{K}\sum_{k=1}^K\sum_{j = \chi(t)}^{i-1} \left\|{\Delta^j_k}\right\|\nonumber\\
	&\quad \leq 2\lambda^2 \gamma \sum_{i= \chi(t)-\beta E+1}^{t}(1-\lambda)^{t-i} \sum_{j = \chi(i)}^{i-1}
	\left(\left\|{\Delta^{\chi(i)}}\right\| + 3\frac{\gamma}{1-\gamma}\lambda (E-1)\Gamma\right)\nonumber\\
	&\leq 2\lambda \gamma (E-1) \max_{\chi(t)-\beta E\le i\le t} \left\|{\Delta^{\chi(i)}}\right\| 
	+ \frac{6\gamma^2\lambda^2}{1-\gamma}(E-1)^2\Gamma.
\end{align}
By combining both bounds~\eqref{eq: bound of second term in error iteration} and~\eqref{eq: bound of bound of second term}, we obtain
\begin{align}\label{eq: final bound of second term}
	&\sum_{i= \chi(t)-\beta E+1}^{t}(1 - \lambda)^{t-i}\lambda \gamma \left\|{\frac{1}{K}\sum_{k=1}^K(\sigma_{\mathcal{P}_k}(V_{\rm AO}) - \sigma_{\mathcal{P}_k}(V_k^i))} \right\|\nonumber\\
	&\quad \leq\gamma \max_{\chi(t)-\beta E\leq i\leq t} \left\|{\Delta^{\chi(i)}} \right\| +
	 2\lambda \gamma (E-1) \max_{\chi(t)-\beta E\le i\le t} \left\|{\Delta^{\chi(i)}} \right\|
	 + \frac{6\gamma^2\lambda^2}{1-\gamma}(E-1)^2 \Gamma\nonumber\\
	& \qquad +  
	 \sum_{i= \chi(t)-\beta E+1}^{t}(1-\lambda)^{t-i}\lambda \gamma (\frac{\gamma \lambda}{1-\gamma}(E-1)\Gamma)\nonumber\\
	&\quad = \gamma(1+2\lambda(E-1)) \max_{\chi(t)-\beta E\le i\le t} \left\|{\Delta^{\chi(i)}}\right\| + \frac{\gamma^2}{1-\gamma} (6\lambda^2(E-1)^2+\lambda(E-1))\Gamma.
\end{align}
Plugging the bound~\eqref{eq: final bound of second term} into~\eqref{eq: divided summation} then yields
\begin{align}\label{eq: bound 2}
	&\sum_{i=0}^t(1 - \lambda)^{t-i}\lambda \gamma \left\|{\frac{1}{K}\sum_{k=1}^K (\sigma_{\mathcal{P}_k}(V_{\rm AO}) - \sigma_{\mathcal{P}_k}(V_k^i))}\right\|\nonumber\\
	&\quad\leq \frac{\gamma}{1 - \gamma} (1-\lambda)^{t-\chi(t)+\beta E}+\gamma(1+2\lambda(E-1)) \max_{\chi(t)-\beta E\le i\le t} \left\|{\Delta^{\chi(i)}}\right\|\nonumber\\
	&\qquad + \frac{\gamma^2}{1-\gamma} (6\lambda^2(E-1)^2+\lambda(E-1))\Gamma.
\end{align}
Consequently, the iteration error $\left\|\Delta^{t+1}\right\|$ can be bounded using~\eqref{eq: bound 1} and~\eqref{eq: bound 2} for all rounds $t\in[0,T-1]$:
\begin{align}\label{eq: recursion bound}
	 \left\|{\Delta^{t+1}}\right\|&\leq(1-\lambda)^{t+1}\frac{1}{1-\gamma}+\frac{\gamma}{1-\gamma}(1-\lambda)^{t-\chi(t)+\beta E}\nonumber\\
	 &\quad + \gamma(1+2\lambda(E-1)) \max_{\chi(t)-\beta E\le i\le t} \left\|{\Delta^{\chi(i)}}\right\| + \frac{\gamma^2}{1-\gamma} (6\lambda^2(E-1)^2+\lambda(E-1))\Gamma\nonumber\\
	 & \leq \gamma(1+2\lambda(E-1)) \max_{\chi(t)-\beta E\le i\le t} \left\|{\Delta^{\chi(i)}}\right\| + \frac{2}{1-\gamma} (1-\lambda)^{\beta E}\nonumber\\
	 &\quad + \frac{\gamma^2}{1-\gamma} (6\lambda^2(E-1)^2+\lambda(E-1))\Gamma. 
\end{align}
Let $\rho:=\frac{2}{1-\gamma} (1-\lambda)^{\beta E} + \frac{\gamma^2}{1-\gamma} (6\lambda^2(E-1)^2+\lambda(E-1))  \Gamma$, then with the assumption $\lambda\leq\frac{1-\gamma}{4\gamma(E-1)}$,~\eqref{eq: recursion bound} can be written as
\begin{align}
	\left\|{\Delta^{t+1}}\right\| \le \frac{1+\gamma}{2}\max_{\chi(t)-\beta E\le i\le t} \left\|{\Delta^{\chi(i)}}\right\|+ \rho.
\end{align}
Unrolling the above recursion $L$ times with $L\beta E\leq t\leq T-1$, we obtain
\begin{align}\label{eq: before choosing b L t}
	 \left\|{\Delta^{t+1}}\right\| &\le (\frac{1+\gamma}{2})^L\max_{\chi(t)-L\beta E\le i\le t} \left\|{\Delta^{\chi(i)}}\right\|+ \sum_{i=0}^{L-1}(\frac{1+\gamma}{2})^i \rho\nonumber\\
	 &\leq (\frac{1+\gamma}{2})^L\frac{1}{1-\gamma} + \frac{2}{1-\gamma}\rho. 
\end{align}
By choosing $\beta =\lfloor{\frac{1}{E}\sqrt{\frac{(1-\gamma)T}{2\lambda}}}\rfloor$, $L=\lceil{\sqrt{\frac{\lambda T}{1-\gamma}}}\rceil$ and $t+1 = T$,~\eqref{eq: before choosing b L t} yields
\begin{align}\label{eq: general rate}
	 \left\|{\Delta^{T}}\right\|&\le \frac{1}{1-\gamma} (\frac{1+\gamma}{2})^{\sqrt{\frac{\lambda T}{1-\gamma}}}
	 + \frac{2}{1-\gamma}\left(\frac{2}{1-\gamma} (1-\lambda)^{\beta E} + \frac{\gamma^2 }{1-\gamma} (6\lambda^2(E-1)^2+\lambda(E-1))  \Gamma \right)\nonumber\\
	 & \leq \frac{1}{1-\gamma} \exp\{-\frac{1}{2}\sqrt{(1-\gamma)\lambda T}\}
	 + \frac{4}{(1-\gamma)^2}\exp\{-\sqrt{(1-\gamma)\lambda T}\}\nonumber\\
	 &\quad+ \frac{2\gamma^2}{(1-\gamma)^2} (6\lambda^2(E-1)^2+\lambda(E-1)) \Gamma\nonumber\\
	 &\leq \frac{4}{(1-\gamma)^2} \exp\{-\frac{1}{2}\sqrt{(1-\gamma)\lambda T}\}
	 + \frac{2\gamma^2}{(1-\gamma)^2} (6\lambda^2(E-1)^2+\lambda(E-1)) \Gamma.
\end{align}

\eqref{eq: general rate} reveals the effect of $E$ and $\lambda$ on the convergence rate. Although $E>1$ slows down the convergence, we can set $(E-1) \le \min \frac{1}{\lambda}\{\frac{\gamma}{1-\gamma}, \frac{1}{K}\}$, and $\lambda = \frac{4\log^2(TK)}{T(1-\gamma)}$ to bring partial linear speedup with respect to the number of agents $K$. Then~\eqref{eq: recursion bound} becomes
\begin{align}
	 \left\|{\Delta^{T}}\right\|
	 &\leq \tilde{\mathcal{O}}\left(  \frac{1}{TK}  +  \frac{(E-1)\Gamma}{T} \right)
\end{align}
for $T\geq E$, which completes the proof.

\subsection{Proof of Lemma~\ref{lm: bounding second term in error iteration}}\label{subsec: lemma 1}
When $t\mod E = 0$, i.e., $t$ is an aggregation step, $Q_k^t = Q_{k'}^t$ and the associate value functions $V_k^t = V_{k'}^t$ for any $k$ and $k'\in\mathcal{K}$. We denote $\bar{V}^t = \frac{1}{K}\sum_{k=1}^KV_k^t$, then we have  
\begin{align}\label{eq: E22 synchornization bound}
	\frac{1}{K}\sum_{k=1}^K (\sigma_{(\mathcal{P}_k)^a_s}(V_{\rm AO})-\sigma_{(\mathcal{P}_k)^a_s}(V_k^t))&\leq\frac{1}{K}\sum_{k=1}^K\bar{\sigma}_{(\cp_k)^a_s}(V_{\rm AO}-\bar{V}^t)\nonumber\\
	& \leq  \frac{1}{K}\sum_{k=1}^K\left\|{V_{\rm AO} - \bar{V}^t}\right\|\nonumber\\
	& \leq \left\|{Q_{\rm AO} - \bar Q^t}\right\|\nonumber\\
	& = \left\|{\Delta^t}\right\|.  
\end{align}
For general $t$, we have
\begin{align}\label{eq: E22 general round}
	\left\|\frac{1}{K}\sum_{k=1}^K (\sigma_{\mathcal{P}_k}(V_{\rm AO})-\sigma_{\mathcal{P}_k}(V_k^t))\right\|&= \left\|{\frac{1}{K}\sum_{k=1}^K (\sigma_{\mathcal{P}_k}(V_{\rm AO})-\sigma_{\mathcal{P}_k}(V_k^{\chi(t)}) +\sigma_{\mathcal{P}_k}(V_k^{\chi(t)}) - \sigma_{\mathcal{P}_k}(V_k^t))}\right\|\nonumber\\
	&\leq \left\|\frac{1}{K}\sum_{k=1}^K(\sigma_{\mathcal{P}_k}(V_{\rm AO})-\sigma_{\mathcal{P}_k}(V_k^{\chi(t)}))\right\|+ \left\|{\frac{1}{K}\sum_{k=1}^K ( \sigma_{\mathcal{P}_k}(V_k^{\chi(t)}) - \sigma_{\mathcal{P}_k}(V_k^t))}\right\|\nonumber\\
	& \leq \left\|{\Delta^{\chi(t)}}\right\| + \left\|{\frac{1}{K}\sum_{k=1}^K (\sigma_{\mathcal{P}_k}(V_k^{\chi(t)}) - \sigma_{\mathcal{P}_k}(V_k^t))}\right\|\nonumber\\
	& \leq \left\|{\Delta^{\chi(t)}}\right\| + \frac{1}{K}\sum_{k=1}^K \left\|{V^{\chi(t)}_k - V_k^t}\right\|. 
\end{align}
For any state $s$, we have  
\begin{align}
	\label{eq: perturbation within a round: sample}
	\nonumber 
	V_k^t(s) - V^{\chi(t)}_k(s)&= Q_k^t(s, a_k^t(s)) -  Q_k^{\chi(t)}(s, a_k^{\chi(t)}(s))\\
	\nonumber
	& \overset{(a)}{\le} Q_k^t(s, a_k^t(s)) -  Q_k^{\chi(t)}(s, a_k^t(s)) \\
	\nonumber 
	& =  Q_k^t(s, a_k^t(s)) -  Q_k^{t-1}(s, a_k^t(s)) + Q_k^{t-1}(s, a_k^t(s)) - Q_k^{t-2}(s, a_k^t(s)) \\ 
	& \quad + \cdots  
	+ Q_k^{\chi(t)+1}(s, a_k^t(s)) -  Q_k^{\chi(t)}(s, a_k^t(s)).  
\end{align}
where $a_k^t(s) = \arg\max_{a\in\mathcal{A}}Q_k^t(s, a)$ and inequality (a) holds because \(Q_k^{\chi(t)}(s, a_k^t(s)) \leq  Q_k^{\chi(t)}(s, a_k^{\chi(t)}(s))\).

For each $t^{\prime}$ such that $\chi(t) \le t^{\prime}\le t$, it holds that,  
\begin{align}
	Q_k^{t^{\prime}+1}(s, a_k^t(s)) -  Q^{t^{\prime}}_k(s, a_k^t(s))& = (1-\lambda) Q_k^{t^{\prime}}(s, a_k^t(s)) + \lambda (r(s, a_k^t(s)) + \gamma \sigma_{(\mathcal{P}_k)_s^{a_k^t(s)}}(V^{t^{\prime}}_k))-  Q^{t^{\prime}}_k(s, a_k^t(s))\nonumber\\
	& \overset{(a)}{=} -\lambda Q_k^{t^{\prime}}(s, a_k^t(s)) 
	+ \lambda (Q_{\rm AO}(s, a_k^t(s)) - r(s, a_k^t(s)) - \gamma \bar{\sigma}_{(\mathcal{P}_k)_s^{a_k^t(s)}}( V_{\rm AO})\nonumber\\
	&\quad+ r(s, a_k^t(s)) + \gamma \sigma_{(\mathcal{P}_k)_s^{a_k^t(s)}}(V^{t^{\prime}}_k))\nonumber\\
	& \leq 2 \lambda \left\|{\Delta^{t^{\prime}}_k}\right\| + \gamma \lambda (\sigma_{(\mathcal{P}_k)_s^{a_k^t(s)}}(V_{\rm AO})-\bar{\sigma}_{(\mathcal{P}_k)_s^{a_k^t(s)}}(V_{\rm AO})),
\end{align}
where equality (a) follows from the average optimal Bellman equation. Thus, 
\begin{align} 
	\label{eq: upper bound: perturbation: within a round: sample}
	V_k^t(s) - V_k^{\chi(t)}(s)&\leq \sum_{t^{\prime} = \chi(t)}^{t-1}(Q_k^{t^{\prime}+1}(s, a_k^t(s)) -  Q^{t^{\prime}}_k(s, a_k^t(s)))\nn\\
	& = 2\lambda \sum_{t^{\prime} = \chi(t)}^{t-1} \left\|{\Delta^{t^{\prime}}_k}\right\|
	+ \gamma \lambda (t-1-\chi(t)) (\sigma_{(\mathcal{P}_k)_s^{a_k^t(s)}}(V_{\rm AO})-\bar{\sigma}_{(\mathcal{P}_k)_s^{a_k^t(s)}}(V_{\rm AO})). 
\end{align}
Similarly, we have 
\begin{align} 
	\label{eq: lower bound: perturbation: within a round: sample}
	V_k^t(s) - V^{\chi(t)}_k(s)&\ge \sum_{t^{\prime} = \chi(t)}^{t-1}(Q_k^{t^{\prime}+1}(s, a_k^{\chi(t)}(s)) -  Q^{t^{\prime}}_k(s, a_k^{\chi(t)}(s)))\nn\\
	& \geq - 2\lambda \sum_{t^{\prime} = \chi(t)}^{t-1} \left\|{\Delta^{t^{\prime}}_k}\right\|
	+ \gamma \lambda (t-1-\chi(t)) (\sigma_{(\mathcal{P}_k)_s^{a_k^t(s)}}(V_{\rm AO})-\bar{\sigma}_{(\mathcal{P}_k)_s^{a_k^t(s)}}(V_{\rm AO})). 
\end{align}
By plugging the bounds in \eqref{eq: upper bound: perturbation: within a round: sample} and in \eqref{eq: lower bound: perturbation: within a round: sample} back into \eqref{eq: E22 general round}, we obtain 
\begin{align}
	\left\|{\frac{1}{K}\sum_{k=1}^K (\sigma_{\mathcal{P}_k}(V_{\rm AO})-\sigma_{\mathcal{P}_k}(V_k^t))}\right\|& \leq \left\|{\Delta_{\chi(t)}} \right\| + \frac{1}{K}\sum_{k=1}^K \left\|{V^{\chi(t)}_k - V_k^t}\right\|\nonumber\\
	& \leq \left\|{\Delta_{\chi(t)}}\right\| + 2\lambda \frac{1}{K}\sum_{k=1}^K\sum_{t^{\prime} = \chi(t)}^{t-1} \|{\Delta^{t^{\prime}}_k}\|\nonumber\\
	& \quad +  \gamma \lambda (t-1-\chi(t)) \frac{1}{K}\sum_{k=1}^K \max_{s,a\in\mathcal{S}\times\mathcal{A}} |\sigma_{(\mathcal{P}_k)_s^a}(V_{\rm AO})-\bar{\sigma}_{(\mathcal{P}_k)_s^a}(V_{\rm AO})|.
\end{align}
This hence completes the proof.

\subsection{Proof of Lemma~\ref{lm: sample complexity: stepsize push: alternative}}\label{subsec: lemma 2}
When $t\mod E=0$, then $\Delta^{t}_k = \Delta^{\chi(t)}$. 
When $t\mod E\not=0$, we have 
\begin{align}
	Q^{t}_k &= (1-\lambda)Q^{t-1}_k+
	\lambda ({r + \gamma \sigma_{\mathcal{P}_k} (V_k^{t-1})})\nonumber\\
	& = (1-\lambda)Q^{t-1}_k+
	\lambda ({Q_{\rm AO} - r - \gamma \bar{\sigma}_{\mathcal{P}_k}(V_{\rm AO}) + r + \gamma \sigma_{\mathcal{P}_k}(V_k^{t-1}})).
\end{align}
Thus  
\begin{align} 
	\label{eq: error iterate within a round}
	\nonumber
	\Delta^{t}_k 
	& = (1-\lambda)\Delta^{t-1}_k+
	\lambda \gamma({\bar{\sigma}_{\mathcal{P}_k}(V_{\rm AO}) - \sigma_{\mathcal{P}_k}(V_k^{t-1})}) \\
	\nonumber 
	& = (1-\lambda)\Delta^{t-1}_k+
	\lambda \gamma(\bar{\sigma}_{\mathcal{P}_k}(V_{\rm AO}) - \sigma_{\mathcal{P}_k}(V_{\rm AO}) + \sigma_{\mathcal{P}_k} (V_{\rm AO}) - \sigma_{\mathcal{P}_k} (V_k^{t-1})) \\
	\nonumber 
	&=(1-\lambda)^{t-\chi(t)}\Delta^{\chi(i)}+ \gamma \lambda \sum_{t'=\chi(t)}^{t-1} (1-\lambda)^{t-t'-1}(\bar{\sigma}_{\mathcal{P}_k}(V_{\rm AO}) - \sigma_{\mathcal{P}_k}(V_{\rm AO})) \\
	& \quad + \gamma \lambda \sum_{t'=\chi(t)}^{t-1} (1-\lambda)^{t-t'-1}(\sigma_{\mathcal{P}_k} (V_{\rm AO}) - \sigma_{\mathcal{P}_k} (V_{j}^k)). 
\end{align}  
For any state-action pair $(s,a)\in\mathcal{S}\times\mathcal{A}$, 
\begin{align}
	\label{eq: error: within a round: term 1}
	\left|(1-\lambda)^{t-\chi(t)}\Delta^{\chi(t)}(s,a)\right|\le (1-\lambda)^{t-\chi(t)} \left\|{\Delta^{\chi(t)}}\right\|. 
\end{align}
Note that 
\begin{align}
	\label{eq: error: within a round: term 2}
	\gamma \lambda \sum_{t'=\chi(t)}^{t-1} (1-\lambda)^{t-t'-1}(\bar{\sigma}_{\mathcal{P}_k}(V_{\rm AO}) - \sigma_{\mathcal{P}_k}(V_{\rm AO}))&\leq \frac{\gamma}{1-\gamma}\lambda \sum_{t'=\chi(t)}^{t-1} (1-\lambda)^{t-t'-1}\Gamma\nonumber\\
	& \leq \frac{\gamma}{1-\gamma}\lambda (E-1) \Gamma, 
\end{align} 
for all $(s, a)\in\mathcal{S}\times\mathcal{A}$, $t\in [0,T-1]$ and $k\in\mathcal{K}$. 
In addition, we have 
\begin{align}
	\label{eq: error: within a round: term 3}
	\left\|\gamma \lambda \sum_{t'=\chi(t)}^{t-1} (1-\lambda)^{t-t'-1} (\sigma_{\mathcal{P}_k} (V_{\rm AO}) - \sigma_{\mathcal{P}_k} (V^{t'}_k))\right\| \leq \gamma \lambda \sum_{t'=\chi(t)}^{t-1} (1-\lambda)^{t-t'-1} \left\|{\Delta^{t'}_k}\right\|. 
\end{align} 
By combining the bounds in \eqref{eq: error: within a round: term 1}, \eqref{eq: error: within a round: term 2}, and \eqref{eq: error: within a round: term 3}, we obtain
\begin{align}\label{eq: coarse bound on local error}
	\left\|{\Delta^{t}_k}\right\|
	&\leq (1-\lambda)^{t-\chi(t)} \left\|{\Delta^{\chi(t)}} \right\|
	+ \frac{\gamma}{1-\gamma}\lambda (E-1)\Gamma + \gamma\lambda \sum_{t'=\chi(t)}^{t-1} (1-\lambda)^{t-t'-1} \left\|{\Delta^{t'}_k}\right\|\nn\\
	& \leq ({1-(1-\gamma)\lambda})^{t-\chi(t)}\left\|{\Delta^{\chi(t)}}\right\| + ({1+ \gamma \lambda})^{t-\chi(t)}(\frac{\gamma}{1-\gamma}\lambda (E-1)\Gamma),
\end{align}
where the last inequality can be shown via inducting on $t-\chi(t) \in \{0, \cdots, E-1\}$. 
When $\lambda\le \frac{1}{E}$,  
\begin{align}
	({1+ \gamma \lambda})^{t-\chi(t)}
	\leq ({1+ \lambda})^{E} \leq ({1+ 1/E})^{E} \leq e \leq 3. 
\end{align}
Hence
\begin{align}
	\left\|{\Delta^{t}_k}\right\|
	\leq \left\|{\Delta^{\chi(t)}}\right\| + 3\frac{\gamma}{1-\gamma}\lambda (E-1)\Gamma,
\end{align}
which completes the proof.

\subsection{Proof of Theorem~\ref{theo: rate2}}
Since $\mathbb{E}[\hat{\mathbf{T}}_k] = \mathbf{T}_k$ and $\hat{\mathbf{T}}_{\rm AO} = \frac{\sum_{k=1}^{K}\hat{\mathbf{T}}_k}{K}$, it follows that $\mathbb{E}[\hat{\mathbf{T}}_{\rm AO}] = \frac{\sum_{k=1}^{K}\mathbb{E}[\hat{\mathbf{T}}_k]}{K} = \mathbf{T}_{\rm AO}$. Then by replacing $\mathbf{T}_k$ and $\mathbf{T}_{\rm AO}$ with their expectation versions, the proof follows naturally as that in Theorem~\ref{theo: ce-avg}, and is thus omitted.

\section{Theoretical Proofs for Minimal Pessimism Principle}
In this section, we provide the construction and proofs of the theoretical results for the minimal pessimism principle.

\subsection{Multi-level Monte Carlo Operator Construction}\label{app:mlmc}
We first generate $N$ according to a geometric distribution with parameter $\Psi\in(0,1)$. Then, we generate $2^{N+1}$ unbiased estimator $\hat{\mathbf{T}}_k^i, i=1,...,2^{N+1}$. We divide these estimators into two groups:  estimators with odd indices, and estimators with even indices. We then individually calculate the maximal aggregation using the even-index estimators, odd-index ones, all of the estimators, and the first one:
\begin{align}
    &\hat{\mathbf{T}}^E Q=\max_{k,i}\{\hat{\mathbf{T}}_k^i Q, i=2,4,...,2^{N+1} \},\\
&\hat{\mathbf{T}}^O Q=\max_{k,i}\{\hat{\mathbf{T}}_k^i Q, i=1,3,...,2^{N+1}-1 \},\\&\hat{\mathbf{T}}^A Q=\max_{k,i}\{\hat{\mathbf{T}}_k^i Q, i=1,2,...,2^{N+1} \},\\&\hat{\mathbf{T}}^1 Q=\max_{k}\{\hat{\mathbf{T}}_k^1\}.
\end{align}
The multi-level estimator is then constructed as 
\begin{align}\label{eq:est1}
    \hat{\mathbf{T}}_{\text{MLMC}}Q\triangleq \hat{\mathbf{T}}^1 Q+\frac{\Delta(Q)}{p_N},
\end{align}
where  $p_N=\Psi(1-\Psi)^{N}$ and
\begin{align}
    \Delta(Q)\triangleq \hat{\mathbf{T}}^A Q-\frac{\hat{\mathbf{T}}^O Q+\hat{\mathbf{T}}^E Q}{2}.
\end{align}
\subsection{Proof of Theorem~\ref{theo: lowerbound-max}}
We first introduce some useful inequalities to extend the results in averaged operator based transfer to the max-aggregation version.
\begin{lemma}\label{lm: ineqs-max}
	For any vector sequences $\{X_k\}$ and $\{Y_k\}$ with $k\in\mathcal{K}$, it holds that
	\begin{enumerate}
		\item[1)] $\max_{k\in\mathcal{K}}X_k - \max_{k\in\mathcal{K}}Y_k\leq\max_{k\in\mathcal{K}}(X_k - Y_k)$,
		\item[2)] $\|\max_{k\in\mathcal{K}}X_k - \max_{k\in\mathcal{K}}Y_k\|\leq\max_{k\in\mathcal{K}}\|X_k - Y_k\|$,
		\item[3)] $\max_{k\in\mathcal{K}}\|X_k - \max_{k\in\mathcal{K}} X_k\|\leq 2\max_{k\in\mathcal{K}}\|X_k\|$.
	\end{enumerate}
\end{lemma}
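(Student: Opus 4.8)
The plan is to treat $\max_{k\in\mathcal{K}}$ throughout as the coordinatewise (entrywise) maximum over the finite index set $\mathcal{K}$, and to read $\|\cdot\|$ as the $\ell_\infty$ (sup) norm, consistent with the rest of the paper's contraction arguments. All three claims are then elementary consequences of how entrywise maxima interact with the sup norm, and the only real work is bookkeeping of which index attains each maximum and justifying the interchange of the entrywise maximum with the coordinate supremum.

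For part 1), I would argue entry by entry. Fixing a coordinate $s$, let $k^\ast$ be an index attaining $\max_k X_k(s)$. Since $\max_k Y_k(s)\ge Y_{k^\ast}(s)$, this gives $\max_k X_k(s)-\max_k Y_k(s)=X_{k^\ast}(s)-\max_k Y_k(s)\le X_{k^\ast}(s)-Y_{k^\ast}(s)\le \max_k\big(X_k(s)-Y_k(s)\big)$. As $s$ is arbitrary, this is exactly the claimed entrywise inequality.

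For part 2), I would apply 1) in both directions: it yields $\max_k X_k-\max_k Y_k\le \max_k|X_k-Y_k|$ and, swapping the roles of $X$ and $Y$, $\max_k Y_k-\max_k X_k\le \max_k|X_k-Y_k|$, hence the entrywise bound $|\max_k X_k-\max_k Y_k|\le \max_k|X_k-Y_k|$. Taking the supremum over coordinates and using the identity $\|\max_k|X_k-Y_k|\|_\infty=\max_s\max_k|X_k(s)-Y_k(s)|=\max_k\|X_k-Y_k\|_\infty$ (the two maxima commute for the sup norm) gives the stated inequality. For part 3), the triangle inequality gives $\|X_k-\max_j X_j\|\le \|X_k\|+\|\max_j X_j\|$, and the key auxiliary bound $\|\max_j X_j\|_\infty=\max_s|\max_j X_j(s)|\le \max_s\max_j|X_j(s)|=\max_j\|X_j\|_\infty$ shows $\|\max_j X_j\|\le \max_j\|X_j\|$; combining these and taking $\max_k$ on the left produces the factor of $2$.

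I do not expect a substantive obstacle here, since these are standard order- and norm-monotonicity facts. The only point requiring genuine care is the interchange of the entrywise maximum over $\mathcal{K}$ with the coordinatewise supremum defining $\|\cdot\|_\infty$, which is legitimate precisely because both are maxima over finite index sets; once that commutation is stated explicitly, the three inequalities follow immediately and will serve as the lifting of the averaged-operator arguments (e.g.\ the contraction and lower-bound proofs) to the max-aggregation setting.
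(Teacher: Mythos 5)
Your proposal is correct, and for parts 1) and 2) it fills in exactly the argument the paper only gestures at (the paper omits these, citing the reverse triangle inequality; your entrywise argument via the index attaining the maximum, followed by symmetrization and the commutation of the two finite maxima, is the standard way to make that precise). For part 3) your route is genuinely different from the paper's: you bound $\|X_k-\max_j X_j\|\le\|X_k\|+\|\max_j X_j\|$ by the triangle inequality and then use $\|\max_j X_j\|_\infty\le\max_j\|X_j\|_\infty$, whereas the paper first observes that $X_k-\max_j X_j$ is entrywise non-positive, which lets it compute the left-hand side \emph{exactly} as the coordinatewise span $\max_i\bigl(\max_k X_k^i-\min_k X_k^i\bigr)$ before bounding that span by $2\max_k\|X_k\|$. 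Your version is shorter and needs no sign analysis; the paper's version yields a sharper intermediate identity (the quantity is precisely a span, not merely bounded by one), though that extra precision is not exploited elsewhere, so nothing is lost by your more elementary derivation. Both arguments correctly hinge on the finite interchange of $\max_{k}$ with the coordinatewise supremum defining $\|\cdot\|_\infty$, which you rightly flag as the only point needing explicit justification.
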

The proof of this lemma can be found in Appendix~\ref{app: ineqs-max}.

Since $\mathbf{T}_k^\pi$ is a $\gamma$-contraction for any $k\in\mathcal{K}$, then for some $Q_1$ and $Q_2$, by Lemma~\ref{lm: ineqs-max} we have
\begin{align}
	\|\mathbf{T}_{\rm MP}^\pi Q_1-\mathbf{T}_{\rm MP}^\pi Q_2\|& = \|\max_{k\in\mathcal{K}}\mathbf{T}_k^\pi Q_1 - \max_{k\in\mathcal{K}}\mathbf{T}_k^\pi Q_2\|\nonumber\\
	&\leq\max_{k\in\mathcal{K}}\|\mathbf{T}_k^\pi Q_1 - \mathbf{T}_k^\pi Q_2\|\nonumber\\
	&\leq \gamma\|Q_1-Q_2\|.
\end{align}
Hence $\mathbf{T}_{\rm MP}^\pi$ is also a $\gamma$-contraction and has a unique fixed point $Q_{\rm MP}^\pi$. The claim for $\mathbf{T}_{\rm MP}$ can be similarly derived.

In order to verify that $V_{\rm MP}^\pi$ is a lower bound on $V_{P_0}^\pi$ we proceed as follows. Let $V_{\rm MP}^\pi = \sum_{a\in\mathcal{A}}\pi(a|s)Q_{ \rm MP}^\pi(s,a)$, since $P_0\in\cp_k$, we have
\begin{align}
	Q_{\rm MP}^\pi(s,a)-Q^\pi_{P_0}(s,a)&=\gamma \left(\max_{k\in\mathcal{K}}\sigma_{(\mathcal{P}_k)^a_s}(V_{\rm MP}^\pi)-(P_0)^a_sV_{\rm MP}^\pi\right)+\gamma (P_0)^a_s(V_{\rm MP}^\pi-V^\pi_{P_0})\nn\\
	&\leq 0+ \gamma (P_0)^a_s(V_{\rm MP}^\pi-V^\pi_{P_0}).
\end{align}
Consequently,
\begin{align}
	V_{\rm MP}^\pi(s)-V^\pi_{P_0}(s)&=\sum_{a\in\mathcal{A}} \pi(a|s) Q_{\rm MP}^\pi(s,a)-\sum_{a\in\mathcal{A}} \pi(a|s) Q^\pi_{P_0}(s,a)\nn\\
	&=\sum_{a\in\mathcal{A}} \pi(a|s)(Q_{\rm MP}^\pi(s,a)-Q^\pi_{P_0}(s,a))\nn\\
	&\leq \gamma (P_0^\pi)_s(V_{\rm MP}^\pi-V^\pi_{P_0}),
\end{align}
where $(P_0^\pi)_s$ is the $s$-entry transition kernel induced by $\pi$ under $P_0$. Let $\mathbf{P}_0=((P_0^\pi)_{s_1},...,(P_0^\pi)_{s_{|\mathcal{S}|}})^\top$ be a transition matrix, and consider entry-wise relation, we have
\begin{align}
	V_{\rm MP}^\pi-V^\pi_{P_0}\leq \gamma \mathbf{P}_0(V_{\rm MP}^\pi-V^\pi_{P_0}),
\end{align}
and hence 
\begin{align}
	(I-\gamma\mathbf{P}_0)(V_{\rm MP}^\pi-V^\pi_{P_0})\leq 0. 
\end{align}
Note that $(I-\gamma\mathbf{P}_0)^{-1}=I+\gamma\mathbf{P}_0+\gamma^2\mathbf{P}_0^2+\ldots$ exists and has all positive entries, we thus obtain 
\begin{align}
	V_{\rm MP}^\pi-V^\pi_{P_0}\leq ((I-\gamma\mathbf{P}_0))^{-1} 0=0.
\end{align}
The claim $V^\pi_{\cp_k}\leq V^\pi_{\mpp}$ can be derived similarly.

For policy $\pi_\ast(s)\triangleq \arg\max_{a\in\mca} Q_{\rm MP}(s,a)$, we have
\begin{align}
	\mathbf{T}_{\rm MP}^{\pi_\ast}Q_{\rm MP}(s,a) = r(s,a) + \gamma\max_{k\in\mathcal{K}}(\sigma_{(\mathcal{P}_k)^a_s}\max_{a\in\mathcal{A}}Q_{\rm MP}(s,a)) = \mathbf{T}_{\rm MP}Q_{\rm MP}(s,a) = Q_{\rm MP}(s,a).
\end{align}
Hence $Q_{\rm MP}(s,a)$ is also the unique fixed point of $\mathbf{T}_{\rm MP}^{\pi_\ast}$, moreover, $V_{\rm MP}^{\pi_\ast} = V_{\rm MP}$.

For any $k\in\mathcal{K}$, denote the worst-case kernel of $V_{\rm MP}$ in $\cp_k$ by $P_k'$, i.e.,
\begin{align}
	\sigma_{(\cp_k)^a_s}(V_{\rm MP})=(P_k')^a_sV_{\rm MP},\ \forall (s,a)\in\mathcal{S}\times\mathcal{A}.
\end{align}
Since $V_{\rm MP}=\max_{a\in\mathcal{A}}Q_{\rm MP}(s,a)$, by Lemma~\ref{lm: ineqs-max} it holds that
\begin{align}
	V_{\rm MP}^\pi(s)-V_{\rm MP} (s)&\leq \sum_{a\in\mathcal{A}} \pi(a|s) \left( Q_{\rm MP}^\pi(s,a)-Q_{\rm MP} (s,a)\right)\nn\\
	&= \sum_{a\in\mathcal{A}} \pi(a|s) \gamma\left(\max_{k\in\mathcal{K}} \sigma_{(\mathcal{P}_k)^a_s}(V_{\rm MP}^\pi)-\max_{k\in\mathcal{K}}\sigma_{(\mathcal{P}_k)^a_s}(V_{\rm MP})\right)\nn\\
	&\leq \sum_{a\in\mathcal{A}} \pi(a|s) \gamma\left(\max_{k\in\mathcal{K}} ((P_k')_s^aV_{\rm MP}^\pi)-\max_{k\in\mathcal{K}}((P_k'))_s^aV_{\rm MP})\right)\nonumber\\
	&\leq \sum_{a\in\mathcal{A}} \pi(a|s) \gamma\max_{k\in\mathcal{K}}( (P_k')_s^aV_{\rm MP}^\pi-(P_k')_s^aV_{\rm MP})\nonumber\\
	& = \sum_{a\in\mathcal{A}} \pi(a|s) \gamma\max_{k\in\mathcal{K}}((P_k')_s^a(V_{\rm MP}^\pi-V_{\rm MP})).
\end{align}
Let $\tilde{P}_{s}=\arg\max_{k\in\mathcal{K}}\sum_{a\in\mathcal{A}}\pi(a|s)\gamma(P_k')_s^a(V_{\rm MP}^\pi-V_{\rm MP})$, we obtain
\begin{align}
	V_{\rm MP}^\pi(s)-V_{\rm MP} (s)\leq \gamma \tilde{P}_{s} (V_{\rm MP}^\pi-V_{\rm MP}),
\end{align}
and hence by noting that $(I-\gamma\tilde{P})^{-1}$ has all positive entries we have
\begin{align}
	(I-\gamma\tilde{P})(V_{\rm MP}^\pi-V_{\rm MP})\leq 0,
\end{align}
which implies that $\max_\pi V_{\rm MP}^\pi\leq V_{\rm MP}$. On the other hand, since $V_{\rm MP}^{\pi_\ast} = V_{\rm MP}$ and $\max_\pi V_{\rm MP}^\pi\geq V_{\rm MP}^{\pi_\ast}$, together with the upper bound it follows that  $\max_\pi V_{\rm MP}^\pi=V_{\rm MP}$. Note that $V_{\rm MP}(s) = \max_{a\in\mathcal{A}}Q_{\rm MP}(s,a)$, hence $\arg\max_\pi V_{\rm MP}^\pi = \pi_\ast$, which completes the proof.

\subsection{Proof of Lemma~\ref{lm: ineqs-max}}\label{app: ineqs-max}
The proof of the first two statements can be directly derived from reverse triangle inequality and Lipschitz continuity, and is thus omitted. We now consider the third inequality.

Denote $X^i_k$ the $i$-th element of $X_k$, we have
\begin{align}
	X^i_k - \max_{k\in\mathcal{K}}X^i_k\leq0,\forall i,
\end{align}
which indicates that $X_k - \max_{k\in\mathcal{K}} X_k$ is a vector with non-positive elements. Therefore
\begin{align}
	\max_{k\in\mathcal{K}}\|X_k - \max_{k\in\mathcal{K}} X_k\| =& \max_{k\in\mathcal{K}}\max_i|X^i_k - \max_{k\in\mathcal{K}}X^i_k|\nonumber\\
	=&\max_i\max_{k\in\mathcal{K}}(\max_{k\in\mathcal{K}}X^i_k - X^i_k)\nonumber\\
	=&\max_i(\max_{k\in\mathcal{K}}X^i_k - \min_{k\in\mathcal{K}}X^i_k).
\end{align}
We also have
\begin{align}\label{ineq}
	\max_{k\in\mathcal{K}}\|X_k\| = \max_{k\in\mathcal{K}}\max_i|X^i_k|\geq\max_i|\min_{k\in\mathcal{K}}X^i_k|.
\end{align}
Hence for each $i$,
\begin{align}
	\max_{k\in\mathcal{K}}X^i_k - \min_{k\in\mathcal{K}}X^i_k\leq|\max_{k\in\mathcal{K}}X^i_k| + |\min_{k\in\mathcal{K}}X^i_k|\leq2\max_{k\in\mathcal{K}}\|X_k\|.
\end{align}
By taking the maximum over $i$, we thus obtain
\begin{align}
	\max_{k\in\mathcal{K}}\|X_k - \max_{k\in\mathcal{K}} X_k\| = \max_i(\max_{k\in\mathcal{K}}X^i_k - \min_{k\in\mathcal{K}}X^i_k)\leq2\max_{k\in\mathcal{K}}\|X_k\|,
\end{align}
which completes the proof.

\subsection{Proof of Theorem~\ref{theo: conv-max}}
\begin{theorem}\label{theo: conv-max}
		Let $E-1 \le \min \frac{1}{\lambda}\{\frac{\gamma}{1-\gamma}, \frac{1}{K}\}$, and $\lambda = \frac{4\log^2(TK)}{T(1-\gamma)}$, we set \textbf{Max-Aggregation} of $Q_k(s,a)$ to be $\max_k Q_k(s,a)$. If $\hat{\mathbf{T}}_k=\mathbf{T}_k$, it holds that 
		\begin{align}
			\left\|Q_{\mpp}-\max_{k\in\mathcal{K}} Q_k\right\|\leq\tilde{\mathcal{O}}\left( \frac{1}{TK}\!+\!\frac{(E-1)\Gamma}{T}\right).
		\end{align}
	\end{theorem}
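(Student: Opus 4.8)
The plan is to reproduce the proof of Theorem~\ref{theo: ce-avg} essentially line by line, replacing the linear aggregation $\frac{1}{K}\sum_{k}$ by the max aggregation $\max_{k\in\mathcal{K}}$ everywhere and converting each step that exploited linearity into an inequality via Lemma~\ref{lm: ineqs-max}. Define the aggregated iterate $\tilde{Q}^{t}:=\max_{k\in\mathcal{K}}Q_k^t$ and the iteration error $\Delta^{t}:=Q_{\mpp}-\tilde{Q}^{t}$, which at every synchronization step $\chi(t)$ coincides with each local error $\Delta_k^{\chi(t)}:=Q_{\mpp}-Q_k^{\chi(t)}$, since aggregation resets all tables to $\tilde{Q}^{\chi(t)}$. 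The identity replacing the linearity relation $\bar{Q}^{t+1}=(1-\lambda)\bar{Q}^t+\frac{\lambda}{K}\sum_k\mathbf{T}_kQ_k^t$ is the fixed-point representation
\begin{align}
Q_{\mpp}=\max_{k\in\mathcal{K}}\left[(1-\lambda)Q_{\mpp}+\lambda\mathbf{T}_kQ_{\mpp}\right],
\end{align}
valid because $(1-\lambda)Q_{\mpp}$ is common to all $k$ and $\max_k\mathbf{T}_kQ_{\mpp}=\mathbf{T}_{\mpp}Q_{\mpp}=Q_{\mpp}$. Writing the local update as $Z_k^{t+1}:=(1-\lambda)Q_k^t+\lambda\mathbf{T}_kQ_k^t$ so that $\tilde{Q}^{t+1}=\max_kZ_k^{t+1}$, inequality~(2) of Lemma~\ref{lm: ineqs-max} gives
\begin{align}
\left\|\Delta^{t+1}\right\|\leq\max_{k\in\mathcal{K}}\left\|(1-\lambda)\Delta_k^t+\lambda\gamma\big(\sigma_{\mathcal{P}_k}(V_{\mpp})-\sigma_{\mathcal{P}_k}(V_k^t)\big)\right\|,
\end{align}
which plays the role of the error recursion~\eqref{eq: iteratin error} in the averaged proof.

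The core of the argument is to re-establish the two structural lemmas under max aggregation. I would first prove the analog of Lemma~\ref{lm: bounding second term in error iteration}: at a synchronization step inequality~(2) collapses the operator-gap term to $\|\Delta^{\chi(t)}\|$, while for a general $t$ one inserts the last synchronization point $\chi(t)$ and bounds the intra-round value drift $\|V_k^{t}-V_k^{\chi(t)}\|$ exactly as before, the only modification being that the cross-domain bias $\bar\sigma_{(\mathcal{P}_k)^a_s}(V_{\mpp})-\sigma_{(\mathcal{P}_k)^a_s}(V_{\mpp})$ of the averaged case is replaced by $\max_{j}\sigma_{(\mathcal{P}_j)^a_s}(V_{\mpp})-\sigma_{(\mathcal{P}_k)^a_s}(V_{\mpp})\geq0$, which is still bounded by $O(\Gamma/(1-\gamma))$ through inequality~(3) of Lemma~\ref{lm: ineqs-max}. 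I would then re-derive the local-drift bound of Lemma~\ref{lm: sample complexity: stepsize push: alternative}, namely $\|\Delta_k^t\|\leq\|\Delta^{\chi(t)}\|+\frac{3\gamma}{1-\gamma}\lambda(E-1)\Gamma$; its proof tracks a single agent between synchronizations and carries over once $Q_{\mpp}$ is expanded through the fixed-point representation above, so that the same support-function gap, bounded by $\Gamma$, multiplies $\lambda(E-1)$.

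With both lemmas in hand, the remaining assembly is identical to Theorem~\ref{theo: ce-avg}: split the error summation at $\chi(t)-\beta E$, apply the two lemmas to collect a geometric factor $(1-\lambda)^{\beta E}$ together with the $O(\lambda(E-1)\Gamma)$ drift terms, obtain a per-round contraction of rate $\frac{1+\gamma}{2}$ under $\lambda\leq\frac{1-\gamma}{4\gamma(E-1)}$, unroll $L$ times, and finally set $\beta=\lfloor\frac{1}{E}\sqrt{(1-\gamma)T/(2\lambda)}\rfloor$, $L=\lceil\sqrt{\lambda T/(1-\gamma)}\rceil$, $E-1\leq\min\frac{1}{\lambda}\{\frac{\gamma}{1-\gamma},\frac{1}{K}\}$ and $\lambda=\frac{4\log^2(TK)}{T(1-\gamma)}$, whereby the geometric term becomes $\tilde{\mathcal{O}}(1/(TK))$ and the drift term $\tilde{\mathcal{O}}((E-1)\Gamma/T)$. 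The main obstacle is the nonlinearity of the max: unlike the average, the aggregation of the local updates is not the update of the aggregate, so there is no closed-form unrolling of $\Delta^t$ and every equality in the averaged derivation must be downgraded to an inequality of matching form via Lemma~\ref{lm: ineqs-max}. The most delicate point is verifying that the cross-agent support-function gaps produced by the max — the exact analog of the $\bar\sigma-\sigma_{\mathcal{P}_k}$ term in the averaged case — remain $O(\Gamma/(1-\gamma))$, so that they feed only the benign $(E-1)\Gamma/T$ term and leave the partial-linear-speedup $1/(TK)$ term intact.
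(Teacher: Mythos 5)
Your proposal is correct and follows essentially the same route as the paper: the paper's proof of this theorem is a line-by-line transcription of the averaged-operator argument in which every linear aggregation step is replaced by a max and controlled via Lemma~\ref{lm: ineqs-max}, with max-analogs of the two structural lemmas (the operator-gap bound at and between synchronization steps, and the local drift bound $\|\Delta_k^t\|\leq\|\Delta^{\chi(t)}\|+\tfrac{3\gamma}{1-\gamma}\lambda(E-1)\Gamma$) feeding the identical splitting, contraction, and unrolling with the same choices of $\beta$, $L$, $E$, and $\lambda$. Your use of the fixed-point representation $Q_{\mpp}=\max_k\left[(1-\lambda)Q_{\mpp}+\lambda\mathbf{T}_kQ_{\mpp}\right]$ together with Lemma~\ref{lm: ineqs-max}(2) is in fact a slightly cleaner justification of the error recursion than the paper's, which writes $Q_{\mpp}-\max_kQ_k^{t+1}=\max_k(Q_{\mpp}-Q_k^{t+1})$ as an equality where only an inequality in norm is needed.
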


In this proof, we follow the same road map with Theorem~\ref{theo: ce-avg} based on Lemma~\ref{lm: ineqs-max}. we first assume that $E-1\leq\frac{1-\gamma}{4\gamma\lambda}$ and $\lambda\leq\frac{1}{E}$ to establish the general convergence rate, which is independent of $K$. Then we prove that carefully selected $E$ and $\lambda$ can balance each term in the convergence rate to achieve partial linear speedup.

Denote $\hat{Q}^{t+1}$ and $Q_k^{t+1}$ the values of $\hat{Q}$ and $Q_k$ at iteration $t+1$. Since $\hat{\mathbf{T}}_k = \mathbf{T}_k$, we have
\begin{align}
	\hat{Q}^{t+1} &= \max_{k\in\mathcal{K}} Q_k^{t+1}\nonumber\\
	&= \max_{k\in\mathcal{K}}((1-\lambda)Q_k^{t}+\lambda(r + \gamma\sigma_{\mathcal{P}_k}(V_k^t))),
\end{align}
where $V_k^t(s) = \max_{a\in\mathcal{A}}Q_k^t(s,a)$. We also denote the value function associate with $Q_{\rm MP}$ by $V_{\rm MP} = \max_{a\in\mathcal{A}}Q_{\rm MP}(s,a)$. We define the iteration error as $\Delta^{t+1}:=Q_{\rm MP} - \hat{Q}^{t+1}$, additionally, $\Delta^0:=Q_{\rm MP} - Q^0$. The iteration error then takes the following form:
\begin{align}\label{eq: iteratin error-m}
	\Delta^{t+1}&=Q_{\rm MP}-\hat{Q}^{t+1}\nonumber\\
	&= \max_{k\in\mathcal{K}}(Q_{\rm MP} - ((1-\lambda)Q_k^t+\lambda(r+\gamma \sigma_{\mathcal{P}_k}(V_k^t))))\nonumber\\
	& = \max_{k\in\mathcal{K}}((1 - \lambda)(Q_{\rm MP}-Q_k^t) + \lambda(Q_{\rm MP}- r-\gamma \sigma_{\mathcal{P}_k}(V_k^t))\nonumber\\
	& = (1 - \lambda)\Delta^{t} +\gamma \lambda\max_{k\in\mathcal{K}} (\sigma_{\mathcal{P}_k}(V_{\rm MP}) - \sigma_{\mathcal{P}_k}(V_k^t)) \nonumber\\
	&=(1-\lambda)^{t+1}\Delta^0 + \gamma\lambda\sum_{i=0}^t (1-\lambda)^{t-i}\max_{k\in\mathcal{K}} (\sigma_{\mathcal{P}_k}(V_{\rm MP}) - \sigma_{\mathcal{P}_k}(V_k^i)).
\end{align}

By taking the $l_\infty$-norm on both sides, we obtain
\begin{align}\label{eq: inf-norm of iteration error-m}
	\left\|\Delta^{t+1}\right\|\leq(1-\lambda)^{t+1}\left\|\Delta^0\right\| + \left\|\gamma\lambda\sum_{i=0}^t (1-\lambda)^{t-i}\max_{k\in\mathcal{K}} (\sigma_{\mathcal{P}_k}(V_{\rm MP}) - \sigma_{\mathcal{P}_k}(V_k^i))\right\|.
\end{align}
Since $0\leq Q^0\leq\frac{1}{1-\gamma}$, the first term in~\eqref{eq: iteratin error} can be bounded as
\begin{align}\label{eq: bound 1-m}
	(1-\lambda)^{t+1}\left\|\Delta^0\right\|\leq(1-\lambda)^{t+1}\frac{1}{1-\gamma}.
\end{align}

To bound the second term in~\eqref{eq: iteratin error-m}, we divide the summation into two parts. Define the most recent aggregation step of $t$ as $\chi(t) := t-(t\mod E)$. Then for any $\beta E\leq t\leq T$, we have
\begin{align}\label{eq: divided summation-m}
	&\sum_{i=0}^t(1\!-\!\lambda)^{t-i}\lambda \gamma \left\|{\max_{k\in\mathcal{K}} (\sigma_{\mathcal{P}_k}(V_{\rm MP})\!-\!\sigma_{\mathcal{P}_k}(V_k^i))}\right\|\nonumber\\
	&\quad= \sum_{i=0}^{\chi(t)-\beta E} (1\!-\!\lambda)^{t-i}\lambda \gamma \left\|{\max_{k\in\mathcal{K}}(\sigma_{\mathcal{P}_k}(V_{\rm MP})\!-\!\sigma_{\mathcal{P}_k}(V_k^i))} \right\| \!+\! \sum_{i= \chi(t)-\beta E+1}^{t}(1\!-\!\lambda)^{t-i}\lambda \gamma \left\|{\max_{k\in\mathcal{K}}(\sigma_{\mathcal{P}_k}(V_{\rm MP})\!-\!\sigma_{\mathcal{P}_k}(V_k^i))} \right\|\nonumber\\
	&\quad \leq \frac{\gamma}{1\!-\!\gamma} (1-\lambda)^{t-\chi(t)+\beta E}\!+\! \sum_{i= \chi(t)-\beta E+1}^{t}(1\!-\!\lambda)^{t-i}\lambda \gamma \left\|{\max_{k\in\mathcal{K}}(\sigma_{\mathcal{P}_k}(V_{\rm MP})\!-\!\sigma_{\mathcal{P}_k}(V_k^i))} \right\|.
\end{align}

Before proceeding with the proof, we introduce the following lemma to bound the second term in~\eqref{eq: divided summation-m}. The lemma can be seen as a variant of Lemma~\ref{lm: bounding second term in error iteration}, and the proof can be found in Appendix~\ref{subsec: lemma 1-m}. We define $\Delta_k^t := Q_{\rm MP}-Q_k^t$ the local iteration error of agent $k$. We also define $\hat{\sigma}_{(\cp_k)^a_s}(v):= \max_{k\in\mathcal{K}}\sigma_{(\cp_k)^a_s}(v)$ for any vector $v$.
\begin{lemma}\label{lm: bounding second term in error iteration-m}
	If $t\mod E = 0$, then $\left\|\max_{k\in\mathcal{K}} (\sigma_{\mathcal{P}_k}(V_{\rm MP}) - \sigma_{\mathcal{P}_k}(V_k^t))\right\| \leq \left\|{\Delta^t}\right\|$. Otherwise, 
	\begin{align}
		\left\|\max_{k\in\mathcal{K}} (\sigma_{\mathcal{P}_k}(V_{\rm MP})\ - \sigma_{\mathcal{P}_k}(V_k^t))\right\|& \leq \left\|{\Delta^{\chi(t)}}\right\| + 2\lambda \max_{k\in\mathcal{K}}\sum_{t^{\prime} = \chi(t)}^{t-1} \left\|{\Delta^{t^{\prime}}_k}\right\|\nonumber\\
		&\quad+  \gamma \lambda (t-1-\chi(t)) \max_{(k,s,a)\in\mathcal{K}\times\mathcal{S}\times\mathcal{A}} \left|{ \sigma_{(\mathcal{P}_k)^a_s}(V_{\rm MP})-\hat{\sigma}_{(\mathcal{P}_k)^a_s}(V_{\rm MP})}\right|,
	\end{align} 
	where we use the convention that $\sum_{t^{\prime} = \chi(t)}^{\chi(t)-1} \left\|{\Delta_{t^{\prime}}^k}\right\| =0$. 
\end{lemma}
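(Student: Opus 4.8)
The plan is to transcribe the proof of \Cref{lm: bounding second term in error iteration} (its averaged-operator counterpart), replacing the linear average $\frac{1}{K}\sum_k$ everywhere by the aggregation $\max_k$ and invoking the max-inequalities of \Cref{lm: ineqs-max} to recover the non-expansiveness and sub-additivity that linearity supplied for free. Two facts will be used repeatedly: each support function $\sigma_{(\mathcal{P}_k)^a_s}(\cdot)$ is $1$-Lipschitz in the sup-norm (being a minimum of linear functionals $pV$ over probability vectors $p$), and the optimal minimal-pessimism fixed-point identity $Q_{\rm MP}(s,a)=r(s,a)+\gamma\,\hat\sigma_{(\mathcal{P}_k)^a_s}(V_{\rm MP})$ from \Cref{theo: lowerbound-max}.

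I would first handle the aggregation case $t\bmod E=0$. There the local tables are synchronized, so $V_k^t$ equals a common value $\hat V^t$ for every $k$; Lipschitzness then gives $|\sigma_{(\mathcal{P}_k)^a_s}(V_{\rm MP})-\sigma_{(\mathcal{P}_k)^a_s}(\hat V^t)|\le\|V_{\rm MP}-\hat V^t\|$ uniformly in $k$, so that $\|\max_k(\sigma_{\mathcal{P}_k}(V_{\rm MP})-\sigma_{\mathcal{P}_k}(V_k^t))\|\le\|V_{\rm MP}-\hat V^t\|\le\|\Delta^t\|$, the last step using non-expansiveness of the action-wise maximum. This is the first claim.

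For general $t$ I would split $\sigma_{\mathcal{P}_k}(V_{\rm MP})-\sigma_{\mathcal{P}_k}(V_k^t)=A_k+B_k$, with $A_k=\sigma_{\mathcal{P}_k}(V_{\rm MP})-\sigma_{\mathcal{P}_k}(V_k^{\chi(t)})$ and $B_k=\sigma_{\mathcal{P}_k}(V_k^{\chi(t)})-\sigma_{\mathcal{P}_k}(V_k^t)$. Sub-additivity $\max_k(A_k+B_k)\le\max_k A_k+\max_k B_k$ with its matching lower bound, plus the Lipschitz estimate $|B_k|\le\|V_k^{\chi(t)}-V_k^t\|$, yields $\|\max_k(\sigma_{\mathcal{P}_k}(V_{\rm MP})-\sigma_{\mathcal{P}_k}(V_k^t))\|\le\|\max_k A_k\|+\max_k\|V_k^{\chi(t)}-V_k^t\|$; since $\chi(t)$ is an aggregation step, $\|\max_k A_k\|\le\|\Delta^{\chi(t)}\|$ by the previous case. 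The within-round drift $\|V_k^{\chi(t)}-V_k^t\|$ is then controlled exactly as in the averaged proof: fixing $s$ and $a_k^t(s)\in\arg\max_a Q_k^t(s,a)$, telescope $V_k^t(s)-V_k^{\chi(t)}(s)$ into one-step increments $Q_k^{t'+1}(s,a_k^t(s))-Q_k^{t'}(s,a_k^t(s))$, and bound each increment using the local update and the fixed-point identity by $2\lambda\|\Delta_k^{t'}\|+\gamma\lambda\,(\sigma_{(\mathcal{P}_k)^a_s}(V_{\rm MP})-\hat\sigma_{(\mathcal{P}_k)^a_s}(V_{\rm MP}))$; a symmetric lower bound using $a_k^{\chi(t)}(s)$ controls the drift from below. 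Summing over $t'=\chi(t),\dots,t-1$, taking the max over $k$ on the $\|\Delta_k^{t'}\|$ contribution and over $(k,s,a)$ on $|\sigma-\hat\sigma|$, reproduces the stated three-term bound.

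The main obstacle is precisely that $\max_k$ does not commute with addition, so the exact equalities of the averaged proof degrade into inequalities; the argument therefore depends on orienting every step of \Cref{lm: ineqs-max} correctly, and on verifying that the one-step increment bound survives both the upper telescoping (with maximizer $a_k^t(s)$) and the lower telescoping (with maximizer $a_k^{\chi(t)}(s)$). A secondary point is that the heterogeneity term now carries $\hat\sigma=\max_{k'}\sigma_{\mathcal{P}_{k'}}$ rather than $\bar\sigma$; this is harmless here, where the statement keeps that term explicit, and it is only later---when the lemma feeds the convergence analysis---that it must be bounded by a $\Gamma$-dependent quantity.
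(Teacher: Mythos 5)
Your proposal follows essentially the same route as the paper's proof: the synchronized case via Lipschitzness of the support functions and non-expansiveness of the action-wise max, the decomposition at $\chi(t)$ with the sub-additivity of $\max_k$ from \Cref{lm: ineqs-max}, and the telescoped one-step increment bound $2\lambda\|\Delta_k^{t'}\| + \gamma\lambda(\sigma_{(\mathcal{P}_k)^a_s}(V_{\rm MP})-\hat\sigma_{(\mathcal{P}_k)^a_s}(V_{\rm MP}))$ obtained from the fixed-point identity $Q_{\rm MP}=r+\gamma\hat\sigma(V_{\rm MP})$, with the symmetric lower telescoping at $a_k^{\chi(t)}(s)$. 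The argument is correct and matches the paper's in both structure and detail.
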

Then by Lemma~\ref{lm: bounding second term in error iteration-m}, \eqref{eq: divided summation-m} yields
\begin{align}\label{eq: bound of second term in error iteration-m}
	&\sum_{i= \chi(t)-\beta E+1}^{t}(1 - \lambda)^{t-i}\lambda \gamma \left\|{\max_{k\in\mathcal{K}}(\sigma_{\mathcal{P}_k}(V_{\rm MP}) - \sigma_{\mathcal{P}_k}(V_k^i))} \right\|\nonumber\\
	&\quad \leq\sum_{i= \chi(i)-\beta E+1}^{t}(1 - \lambda)^{t-i}\lambda \gamma\left(\left\|{\Delta^{\chi(i)}}\right\| + 2\lambda \max_{k\in\mathcal{K}}\sum_{j = \chi(t)}^{i-1} \left\|{\Delta^j_k}\right\|\right.\nonumber\\
	&\qquad\left.+  \gamma \lambda (t-1-\chi(i))\max_{(k,s,a)\in\mathcal{K}\times\mathcal{S}\times\mathcal{A}} \left|{ \sigma_{(\mathcal{P}_k)^a_s}(V_{\rm MP})-\hat{\sigma}_{(\mathcal{P}_k)^a_s}(V_{\rm MP})}\right|\right)\nonumber\\
	&\quad\leq\sum_{i= \chi(i)-\beta E+1}^{t}(1 - \lambda)^{t-i}\lambda \gamma\left(\left\|{\Delta^{\chi(i)}}\right\|+2\lambda \max_{k\in\mathcal{K}}\sum_{j = \chi(t)}^{i-1} \left\|{\Delta^j_k}\right\|+\gamma\lambda\Gamma\frac{E-1}{1-\gamma}\right).
\end{align}

Furthermore, the local iteration error $\left\|{\Delta^j_k}\right\|$ can be bounded using the following lemma. This lemma can also be seen as a variant of Lemma~\ref{lm: sample complexity: stepsize push: alternative} and the proof of can be found in Appendix~\ref{subsec: lemma 2-m}.
\begin{lemma}
	\label{lm: sample complexity: stepsize push: alternative-m} 
	If $\lambda\le \frac{1}{E}$, then for any $t\in[0,T-1]$ and $k\in\mathcal{K}$, $\left\|{\Delta_k^{t}}\right\|
	\leq\left\|\Delta^{\chi(t)}\right\| + \frac{3\gamma}{1-\gamma}\lambda (E-1)\Gamma$.
\end{lemma}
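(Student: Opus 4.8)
The plan is to mirror the averaged-operator argument of Lemma~\ref{lm: sample complexity: stepsize push: alternative} almost verbatim, replacing the averaged support function $\bar\sigma$ by its max-aggregated counterpart $\hat\sigma_{(\cp_k)^a_s}(v)=\max_{k\in\mathcal K}\sigma_{(\cp_k)^a_s}(v)$ and $Q_{\rm AO}$ by $Q_{\mpp}$. The crucial observation is that the aggregation scheme (average versus maximum) affects only the synchronization step, not the local dynamics: within a round the local iterate still evolves under the single operator $\mathbf T_k$, i.e. $Q_k^t=(1-\lambda)Q_k^{t-1}+\lambda(r+\gamma\sigma_{\cp_k}(V_k^{t-1}))$. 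Hence the within-round recursion is structurally unchanged, and the maximum enters only through the fixed-point identity $Q_{\mpp}=r+\gamma\hat\sigma_{\cp_k}(V_{\mpp})$ guaranteed by \Cref{theo: lowerbound-max}.

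First I would dispose of the base case $t\bmod E=0$: at an aggregation step $Q_k^{t}=\hat Q^{\chi(t)}$ for every $k$, so $\Delta_k^{t}=Q_{\mpp}-\hat Q^{\chi(t)}=\Delta^{\chi(t)}$ and the claim holds trivially. For $t\bmod E\neq0$ I would substitute the fixed-point identity into the update to obtain
\begin{align}
\Delta_k^{t}=(1-\lambda)\Delta_k^{t-1}+\gamma\lambda\big(\hat\sigma_{\cp_k}(V_{\mpp})-\sigma_{\cp_k}(V_k^{t-1})\big),\nonumber
\end{align}
and then split the increment as $\hat\sigma_{\cp_k}(V_{\mpp})-\sigma_{\cp_k}(V_{\mpp})$ plus $\sigma_{\cp_k}(V_{\mpp})-\sigma_{\cp_k}(V_k^{t-1})$. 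Unrolling from the last synchronization time $\chi(t)$, where $\Delta_k^{\chi(t)}=\Delta^{\chi(t)}$, yields three terms exactly analogous to \eqref{eq: error iterate within a round}.

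I would then bound the three terms as in the averaged case: the leading term by geometric decay $(1-\lambda)^{t-\chi(t)}\|\Delta^{\chi(t)}\|$; the drift term by the support-function perturbation estimate $0\le\hat\sigma_{(\cp_k)^a_s}(V_{\mpp})-\sigma_{(\cp_k)^a_s}(V_{\mpp})\le\frac{\Gamma}{1-\gamma}$, giving $\frac{\gamma}{1-\gamma}\lambda(E-1)\Gamma$; and the last term by the nonexpansiveness of $\sigma_{\cp_k}$, giving $\gamma\lambda\sum_{t'=\chi(t)}^{t-1}(1-\lambda)^{t-t'-1}\|\Delta_k^{t'}\|$. This produces the coarse, self-referential bound
\begin{align}
\left\|\Delta_k^{t}\right\|\le(1-\lambda)^{t-\chi(t)}\left\|\Delta^{\chi(t)}\right\|+\tfrac{\gamma}{1-\gamma}\lambda(E-1)\Gamma+\gamma\lambda\!\!\sum_{t'=\chi(t)}^{t-1}\!\!(1-\lambda)^{t-t'-1}\left\|\Delta_k^{t'}\right\|,\nonumber
\end{align}
which I would resolve by inducting on $t-\chi(t)\in\{0,\dots,E-1\}$ to extract the clean factors $(1-(1-\gamma)\lambda)^{t-\chi(t)}$ and $(1+\gamma\lambda)^{t-\chi(t)}$. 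Finally, the step-size condition $\lambda\le1/E$ gives $(1+\gamma\lambda)^{t-\chi(t)}\le(1+1/E)^{E}\le e\le3$ and $(1-(1-\gamma)\lambda)^{t-\chi(t)}\le1$, delivering the claimed inequality.

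The only genuinely new point relative to the averaged case is the perturbation bound on $\hat\sigma-\sigma$: there $\bar\sigma_{\cp_k}-\sigma_{\cp_k}$ was an \emph{average} of pairwise support-function gaps, whereas here it is a \emph{maximum}, $\max_{k'}\sigma_{\cp_{k'}}(V)-\sigma_{\cp_k}(V)$. I would control it via $\max_{k'}\sigma_{\cp_{k'}}(V)-\sigma_{\cp_k}(V)\le\max_{k'}\big|\sigma_{(\cp_{k'})^a_s}(V)-\sigma_{(\cp_k)^a_s}(V)\big|$, which follows from part~1 of \Cref{lm: ineqs-max}, together with the fact that every local uncertainty set contains $P_0$ and has center within $\Gamma$ of $P_0$, so each pairwise gap is $O(\Gamma\,\|V_{\mpp}\|_\infty)=O(\Gamma/(1-\gamma))$. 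I expect this perturbation estimate to be the main---though mild---obstacle, since it is the sole place where the nonlinearity of the max-aggregation must be reconciled with the otherwise linear bookkeeping; everything downstream of it is identical to the averaged proof and hence routine.
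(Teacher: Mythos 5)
Your proposal is correct and follows essentially the same route as the paper's proof: the same base case at aggregation steps, the same substitution of the fixed-point identity $Q_{\mpp}=\mathbf{T}_{\mpp}Q_{\mpp}$, the same three-way decomposition into a geometric-decay term, a drift term bounded by $\frac{\gamma}{1-\gamma}\lambda(E-1)\Gamma$, and a self-referential term resolved by induction on $t-\chi(t)$ with the factors $(1-(1-\gamma)\lambda)^{t-\chi(t)}$ and $(1+\gamma\lambda)^{t-\chi(t)}\le 3$. The only difference is that you spell out the justification of the bound $\hat\sigma_{(\cp_k)^a_s}(V_{\mpp})-\sigma_{(\cp_k)^a_s}(V_{\mpp})\le\frac{\Gamma}{1-\gamma}$ slightly more explicitly than the paper, which simply asserts it.
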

Then by Lemma~\ref{lm: sample complexity: stepsize push: alternative-m}, we have
\begin{align}\label{eq: bound of bound of second term-m}
	&\sum_{i= \chi(i)-\beta E+1}^{t}(1 - \lambda)^{t-i}\lambda \gamma2\lambda \max_{k\in\mathcal{K}}\sum_{j = \chi(t)}^{i-1} \left\|{\Delta^j_k}\right\|\nonumber\\
	&\quad \leq 2\lambda^2 \gamma \sum_{i= \chi(t)-\beta E+1}^{t}(1-\lambda)^{t-i} \sum_{j = \chi(i)}^{i-1}
	\left(\left\|{\Delta^{\chi(i)}}\right\| + 3\frac{\gamma}{1-\gamma}\lambda (E-1)\Gamma\right)\nonumber\\
	&\leq 2\lambda \gamma (E-1) \max_{\chi(t)-\beta E\le i\le t} \left\|{\Delta^{\chi(i)}}\right\| 
	+ \frac{6\gamma^2\lambda^2}{1-\gamma}(E-1)^2\Gamma.
\end{align}
By combining both bounds~\eqref{eq: bound of second term in error iteration-m} and~\eqref{eq: bound of bound of second term-m}, we obtain
\begin{align}\label{eq: final bound of second term-m}
	&\sum_{i= \chi(t)-\beta E+1}^{t}(1 - \lambda)^{t-i}\lambda \gamma \left\|{\max_{k\in\mathcal{K}}(\sigma_{\mathcal{P}_k}(V_{\rm MP}) - \sigma_{\mathcal{P}_k}(V_k^i))} \right\|\nonumber\\
	&\quad \leq\gamma \max_{\chi(t)-\beta E\leq i\leq t} \left\|{\Delta^{\chi(i)}} \right\| +
	2\lambda \gamma (E-1) \max_{\chi(t)-\beta E\le i\le t} \left\|{\Delta^{\chi(i)}} \right\|
	+ \frac{6\gamma^2\lambda^2}{1-\gamma}(E-1)^2 \Gamma\nonumber\\
	& \qquad +  
	\sum_{i= \chi(t)-\beta E+1}^{t}(1-\lambda)^{t-i}\lambda \gamma (\frac{\gamma \lambda}{1-\gamma}(E-1)\Gamma)\nonumber\\
	&\quad = \gamma(1+2\lambda(E-1)) \max_{\chi(t)-\beta E\le i\le t} \left\|{\Delta^{\chi(i)}}\right\| + \frac{\gamma^2}{1-\gamma} (6\lambda^2(E-1)^2+\lambda(E-1))\Gamma.
\end{align}
Plugging the bound~\eqref{eq: final bound of second term-m} into~\eqref{eq: divided summation-m} then yields
\begin{align}\label{eq: bound 2-m}
	&\sum_{i=0}^t(1 - \lambda)^{t-i}\lambda \gamma \left\|{\max_{k\in\mathcal{K}} (\sigma_{\mathcal{P}_k}(V_{\rm MP}) - \sigma_{\mathcal{P}_k}(V_k^i))}\right\|\nonumber\\
	&\quad\leq \frac{\gamma}{1 - \gamma} (1-\lambda)^{t-\chi(t)+\beta E}+\gamma(1+2\lambda(E-1)) \max_{\chi(t)-\beta E\le i\le t} \left\|{\Delta^{\chi(i)}}\right\|\nonumber\\
	&\qquad + \frac{\gamma^2}{1-\gamma} (6\lambda^2(E-1)^2+\lambda(E-1))\Gamma.
\end{align}
Consequently, the iteration error $\left\|\Delta^{t+1}\right\|$ can be bounded using~\eqref{eq: bound 1-m} and~\eqref{eq: bound 2-m} for all rounds $t\in[0,T-1]$:
\begin{align}\label{eq: recursion bound-m}
	\left\|{\Delta^{t+1}}\right\|&\leq(1-\lambda)^{t+1}\frac{1}{1-\gamma}+\frac{\gamma}{1-\gamma}(1-\lambda)^{t-\chi(t)+\beta E}\nonumber\\
	&\quad + \gamma(1+2\lambda(E-1)) \max_{\chi(t)-\beta E\le i\le t} \left\|{\Delta^{\chi(i)}}\right\| + \frac{\gamma^2}{1-\gamma} (6\lambda^2(E-1)^2+\lambda(E-1))\Gamma\nonumber\\
	& \leq \gamma(1+2\lambda(E-1)) \max_{\chi(t)-\beta E\le i\le t} \left\|{\Delta^{\chi(i)}}\right\| + \frac{2}{1-\gamma} (1-\lambda)^{\beta E}\nonumber\\
	&\quad + \frac{\gamma^2}{1-\gamma} (6\lambda^2(E-1)^2+\lambda(E-1))\Gamma. 
\end{align}
Let $\rho:=\frac{2}{1-\gamma} (1-\lambda)^{\beta E} + \frac{\gamma^2}{1-\gamma} (6\lambda^2(E-1)^2+\lambda(E-1))  \Gamma$, then with the assumption $\lambda\leq\frac{1-\gamma}{4\gamma(E-1)}$,~\eqref{eq: recursion bound-m} can be written as
\begin{align}
	\left\|{\Delta^{t+1}}\right\| \le \frac{1+\gamma}{2}\max_{\chi(t)-\beta E\le i\le t} \left\|{\Delta^{\chi(i)}}\right\|+ \rho.
\end{align}
Unrolling the above recursion $L$ times with $L\beta E\leq t\leq T-1$, we obtain
\begin{align}\label{eq: before choosing b L t-m}
	\left\|{\Delta^{t+1}}\right\| &\le (\frac{1+\gamma}{2})^L\max_{\chi(t)-L\beta E\le i\le t} \left\|{\Delta^{\chi(i)}}\right\|+ \sum_{i=0}^{L-1}(\frac{1+\gamma}{2})^i \rho\nonumber\\
	&\leq (\frac{1+\gamma}{2})^L\frac{1}{1-\gamma} + \frac{2}{1-\gamma}\rho. 
\end{align}
By choosing $\beta =\lfloor{\frac{1}{E}\sqrt{\frac{(1-\gamma)T}{2\lambda}}}\rfloor$, $L=\lceil{\sqrt{\frac{\lambda T}{1-\gamma}}}\rceil$ and $t+1 = T$,~\eqref{eq: before choosing b L t-m} yields
\begin{align}\label{eq: general rate-m}
	\left\|{\Delta^{T}}\right\|&\le \frac{1}{1-\gamma} (\frac{1+\gamma}{2})^{\sqrt{\frac{\lambda T}{1-\gamma}}}
	+ \frac{2}{1-\gamma}\left(\frac{2}{1-\gamma} (1-\lambda)^{\beta E} + \frac{\gamma^2 }{1-\gamma} (6\lambda^2(E-1)^2+\lambda(E-1))  \Gamma \right)\nonumber\\
	& \leq \frac{1}{1-\gamma} \exp\{-\frac{1}{2}\sqrt{(1-\gamma)\lambda T}\}
	+ \frac{4}{(1-\gamma)^2}\exp\{-\sqrt{(1-\gamma)\lambda T}\}\nonumber\\
	&\quad+ \frac{2\gamma^2}{(1-\gamma)^2} (6\lambda^2(E-1)^2+\lambda(E-1)) \Gamma\nonumber\\
	&\leq \frac{4}{(1-\gamma)^2} \exp\{-\frac{1}{2}\sqrt{(1-\gamma)\lambda T}\}
	+ \frac{2\gamma^2}{(1-\gamma)^2} (6\lambda^2(E-1)^2+\lambda(E-1)) \Gamma.
\end{align}

\eqref{eq: general rate-m} reveals the effect of $E$ and $\lambda$ on the convergence rate. Although $E>1$ slows down the convergence, we can set $(E-1) \le \min \frac{1}{\lambda}\{\frac{\gamma}{1-\gamma}, \frac{1}{K}\}$, and $\lambda = \frac{4\log^2(TK)}{T(1-\gamma)}$ to bring partial linear speedup with respect to the number of agents $K$. Then~\eqref{eq: recursion bound-m} becomes
\begin{align}
	\left\|{\Delta^{T}}\right\|
	&\leq \tilde{\mathcal{O}}\left(  \frac{1}{TK}  +  \frac{(E-1)\Gamma}{T} \right)
\end{align}
for $T\geq E$, which completes the proof.

\subsection{Proof of Lemma~\ref{lm: bounding second term in error iteration-m}}\label{subsec: lemma 1-m}
When $t\mod E = 0$, i.e., $t$ is an aggregation step, $Q_k^t = Q_{k'}^t$ and the associate value functions $V_k^t = V_{k'}^t$ for any $k$ and $k'\in\mathcal{K}$. We denote $\hat{V}^t = \max_{k\in\mathcal{K}}V_k^t$, then we have  
\begin{align}\label{eq: E22 synchornization bound-m}
	\max_{k\in\mathcal{K}} (\sigma_{(\mathcal{P}_k)^a_s}(V_{\rm MP})-\sigma_{(\mathcal{P}_k)^a_s}(V_k^t))&\leq\max_{k\in\mathcal{K}}(\sigma_{(\mathcal{P}_k)^a_s}(V_{\rm MP})-\sigma_{(\mathcal{P}_k)^a_s}(\hat{V^t}))\nonumber\\
	& \leq  \max_{k\in\mathcal{K}}\left\|{V_{\rm MP} - \hat{V}^t}\right\|\nonumber\\
	& \leq \left\|{Q_{\rm MP} - \hat Q^t}\right\|\nonumber\\
	& = \left\|{\Delta^t}\right\|.  
\end{align}
For general $t$, we have
\begin{align}\label{eq: E22 general round-m}
	\left\|\max_{k\in\mathcal{K}} (\sigma_{\mathcal{P}_k}(V_{\rm MP})-\sigma_{\mathcal{P}_k}(V_k^t))\right\|&= \left\|{\max_{k\in\mathcal{K}} (\sigma_{\mathcal{P}_k}(V_{\rm MP})-\sigma_{\mathcal{P}_k}(V_k^{\chi(t)}) +\sigma_{\mathcal{P}_k}(V_k^{\chi(t)}) - \sigma_{\mathcal{P}_k}(V_k^t))}\right\|\nonumber\\
	&\leq \left\|\max_{k\in\mathcal{K}}(\sigma_{\mathcal{P}_k}(V_{\rm MP})-\sigma_{\mathcal{P}_k}(V_k^{\chi(t)}))\right\|+ \left\|{\max_{k\in\mathcal{K}} ( \sigma_{\mathcal{P}_k}(V_k^{\chi(t)}) - \sigma_{\mathcal{P}_k}(V_k^t))}\right\|\nonumber\\
	& \leq \left\|{\Delta^{\chi(t)}}\right\| + \left\|{\max_{k\in\mathcal{K}} (\sigma_{\mathcal{P}_k}(V_k^{\chi(t)}) - \sigma_{\mathcal{P}_k}(V_k^t))}\right\|\nonumber\\
	& \leq \left\|{\Delta^{\chi(t)}}\right\| + \max_{k\in\mathcal{K}} \left\|{V^{\chi(t)}_k - V_k^t}\right\|. 
\end{align}
For any state $s$, we have  
\begin{align}
	\label{eq: perturbation within a round: sample-m}
	\nonumber 
	V_k^t(s) - V^{\chi(t)}_k(s)&= Q_k^t(s, a_k^t(s)) -  Q_k^{\chi(t)}(s, a_k^{\chi(t)}(s))\\
	\nonumber
	& \overset{(a)}{\le} Q_k^t(s, a_k^t(s)) -  Q_k^{\chi(t)}(s, a_k^t(s)) \\
	\nonumber 
	& =  Q_k^t(s, a_k^t(s)) -  Q_k^{t-1}(s, a_k^t(s)) + Q_k^{t-1}(s, a_k^t(s)) - Q_k^{t-2}(s, a_k^t(s)) \\ 
	& \quad + \cdots  
	+ Q_k^{\chi(t)+1}(s, a_k^t(s)) -  Q_k^{\chi(t)}(s, a_k^t(s)).  
\end{align}
where $a_k^t(s) = \arg\max_{a\in\mathcal{A}}Q_k^t(s, a)$ and inequality (a) holds because \(Q_k^{\chi(t)}(s, a_k^t(s)) \leq  Q_k^{\chi(t)}(s, a_k^{\chi(t)}(s))\).

For each $t^{\prime}$ such that $\chi(t) \le t^{\prime}\le t$, it holds that,  
\begin{align}
	Q_k^{t^{\prime}+1}(s, a_k^t(s)) -  Q^{t^{\prime}}_k(s, a_k^t(s))& = (1-\lambda) Q_k^{t^{\prime}}(s, a_k^t(s)) + \lambda (r(s, a_k^t(s)) + \gamma \sigma_{(\mathcal{P}_k)_s^{a_k^t(s)}}(V^{t^{\prime}}_k))-  Q^{t^{\prime}}_k(s, a_k^t(s))\nonumber\\
	& \overset{(a)}{=} -\lambda Q_k^{t^{\prime}}(s, a_k^t(s)) 
	+ \lambda (Q_{\rm MP}(s, a_k^t(s)) - r(s, a_k^t(s)) - \gamma \hat{\sigma}_{(\mathcal{P}_k)_s^{a_k^t(s)}}( V_{\rm MP})\nonumber\\
	&\quad+ r(s, a_k^t(s)) + \gamma \sigma_{(\mathcal{P}_k)_s^{a_k^t(s)}}(V^{t^{\prime}}_k))\nonumber\\
	& \leq 2 \lambda \left\|{\Delta^{t^{\prime}}_k}\right\| + \gamma \lambda (\sigma_{(\mathcal{P}_k)_s^{a_k^t(s)}}(V_{\rm MP})-\hat{\sigma}_{(\mathcal{P}_k)_s^{a_k^t(s)}}(V_{\rm MP})),
\end{align}
where equality (a) follows from the average optimal Bellman equation. Thus, 
\begin{align} 
	\label{eq: upper bound: perturbation: within a round: sample-m}
	V_k^t(s) - V_k^{\chi(t)}(s)&\leq \sum_{t^{\prime} = \chi(t)}^{t-1}(Q_k^{t^{\prime}+1}(s, a_k^t(s)) -  Q^{t^{\prime}}_k(s, a_k^t(s)))\nn\\
	& = 2\lambda \sum_{t^{\prime} = \chi(t)}^{t-1} \left\|{\Delta^{t^{\prime}}_k}\right\|
	+ \gamma \lambda (t-1-\chi(t)) (\sigma_{(\mathcal{P}_k)_s^{a_k^t(s)}}(V_{\rm MP})-\hat{\sigma}_{(\mathcal{P}_k)_s^{a_k^t(s)}}(V_{\rm MP})). 
\end{align}
Similarly, we have 
\begin{align} 
	\label{eq: lower bound: perturbation: within a round: sample-m}
	V_k^t(s) - V^{\chi(t)}_k(s)&\ge \sum_{t^{\prime} = \chi(t)}^{t-1}(Q_k^{t^{\prime}+1}(s, a_k^{\chi(t)}(s)) -  Q^{t^{\prime}}_k(s, a_k^{\chi(t)}(s)))\nn\\
	& \geq - 2\lambda \sum_{t^{\prime} = \chi(t)}^{t-1} \left\|{\Delta_k^{t^{\prime}}}\right\|
	+ \gamma \lambda (t-1-\chi(t)) (\sigma_{(\mathcal{P}_k)_s^{a_k^t(s)}}(V_{\rm MP})-\hat{\sigma}_{(\mathcal{P}_k)_s^{a_k^t(s)}}(V_{\rm MP})). 
\end{align}
By plugging the bounds in \eqref{eq: upper bound: perturbation: within a round: sample-m} and in \eqref{eq: lower bound: perturbation: within a round: sample-m} back into \eqref{eq: E22 general round-m}, we obtain 
\begin{align}
	\left\|{\max_{k\in\mathcal{K}} (\sigma_{\mathcal{P}_k}(V_{\rm MP})-\sigma_{\mathcal{P}_k}(V_k^t))}\right\|& \leq \left\|{\Delta_{\chi(t)}} \right\| + \max_{k\in\mathcal{K}} \left\|{V^{\chi(t)}_k - V_k^t}\right\|\nonumber\\
	& \leq \left\|{\Delta_{\chi(t)}}\right\| + 2\lambda \max_{k\in\mathcal{K}}\sum_{t^{\prime} = \chi(t)}^{t-1} \|{\Delta^{t^{\prime}}_k}\|\nonumber\\
	& \quad +  \gamma \lambda (t-1-\chi(t)) \max_{k,s,a\in\mathcal{K}\times\mathcal{S}\times\mathcal{A}} |\sigma_{(\mathcal{P}_k)_s^a}(V_{\rm MP})-\hat{\sigma}_{(\mathcal{P}_k)_s^a}(V_{\rm MP})|.
\end{align}
This hence completes the proof.

\subsection{Proof of Lemma~\ref{lm: sample complexity: stepsize push: alternative-m}}\label{subsec: lemma 2-m}
When $t\mod E=0$, then $\Delta^{t}_k = \Delta^{\chi(t)}$. 
When $t\mod E\not=0$, we have 
\begin{align}
	Q^{t}_k &= (1-\lambda)Q^{t-1}_k+
	\lambda ({r + \gamma \sigma_{\mathcal{P}_k} (V_k^{t-1})})\nonumber\\
	& = (1-\lambda)Q^{t-1}_k+
	\lambda ({Q_{\rm MP} - r - \gamma \hat{\sigma}_{\mathcal{P}_k}(V_{\rm MP}) + r + \gamma \sigma_{\mathcal{P}_k}(V_k^{t-1}})).
\end{align}
Thus  
\begin{align} 
	\label{eq: error iterate within a round-m}
	\nonumber
	\Delta^{t}_k 
	& = (1-\lambda)\Delta^{t-1}_k+
	\lambda \gamma({\hat{\sigma}_{\mathcal{P}_k}(V_{\rm MP}) - \sigma_{\mathcal{P}_k}(V_k^{t-1})}) \\
	\nonumber 
	& = (1-\lambda)\Delta^{t-1}_k+
	\lambda \gamma(\hat{\sigma}_{\mathcal{P}_k}(V_{\rm MP}) - \sigma_{\mathcal{P}_k}(V_{\rm MP}) + \sigma_{\mathcal{P}_k} (V_{\rm MP}) - \sigma_{\mathcal{P}_k} (V_k^{t-1})) \\
	\nonumber 
	&=(1-\lambda)^{t-\chi(t)}\Delta^{\chi(i)}+ \gamma \lambda \sum_{t'=\chi(t)}^{t-1} (1-\lambda)^{t-t'-1}(\hat{\sigma}_{\mathcal{P}_k}(V_{\rm MP}) - \sigma_{\mathcal{P}_k}(V_{\rm MP})) \\
	& \quad + \gamma \lambda \sum_{t'=\chi(t)}^{t-1} (1-\lambda)^{t-t'-1}(\sigma_{\mathcal{P}_k} (V_{\rm MP}) - \sigma_{\mathcal{P}_k} (V_{j}^k)). 
\end{align}  
For any state-action pair $(s,a)\in\mathcal{S}\times\mathcal{A}$, 
\begin{align}
	\label{eq: error: within a round: term 1-m}
	\left|(1-\lambda)^{t-\chi(t)}\Delta^{\chi(t)}(s,a)\right|\le (1-\lambda)^{t-\chi(t)} \left\|{\Delta^{\chi(t)}}\right\|. 
\end{align}
Note that 
\begin{align}
	\label{eq: error: within a round: term 2-m}
	\gamma \lambda \sum_{t'=\chi(t)}^{t-1} (1-\lambda)^{t-t'-1}(\hat{\sigma}_{\mathcal{P}_k}(V_{\rm MP}) - \sigma_{\mathcal{P}_k}(V_{\rm MP}))&\leq \frac{\gamma}{1-\gamma}\lambda \sum_{t'=\chi(t)}^{t-1} (1-\lambda)^{t-t'-1}\Gamma\nonumber\\
	& \leq \frac{\gamma}{1-\gamma}\lambda (E-1) \Gamma, 
\end{align} 
for all $(s, a)\in\mathcal{S}\times\mathcal{A}$, $t\in [0,T-1]$ and $k\in\mathcal{K}$. 
In addition, we have 
\begin{align}
	\label{eq: error: within a round: term 3-m}
	\left\|\gamma \lambda \sum_{t'=\chi(t)}^{t-1} (1-\lambda)^{t-t'-1} (\sigma_{\mathcal{P}_k} (V_{\rm MP}) - \sigma_{\mathcal{P}_k} (V^{t'}_k))\right\| \leq \gamma \lambda \sum_{t'=\chi(t)}^{t-1} (1-\lambda)^{t-t'-1} \left\|{\Delta^{t'}_k}\right\|. 
\end{align} 
By combining the bounds in \eqref{eq: error: within a round: term 1-m}, \eqref{eq: error: within a round: term 2-m}, and \eqref{eq: error: within a round: term 3-m}, we obtain
\begin{align}\label{eq: coarse bound on local error-m}
	\left\|{\Delta^{t}_k}\right\|
	&\leq (1-\lambda)^{t-\chi(t)} \left\|{\Delta^{\chi(t)}} \right\|
	+ \frac{\gamma}{1-\gamma}\lambda (E-1)\Gamma + \gamma\lambda \sum_{t'=\chi(t)}^{t-1} (1-\lambda)^{t-t'-1} \left\|{\Delta^{t'}_k}\right\|\nn\\
	& \leq ({1-(1-\gamma)\lambda})^{t-\chi(t)}\left\|{\Delta^{\chi(t)}}\right\| + ({1+ \gamma \lambda})^{t-\chi(t)}(\frac{\gamma}{1-\gamma}\lambda (E-1)\Gamma),
\end{align}
where the last inequality can be shown via inducting on $t-\chi(t) \in \{0, \cdots, E-1\}$. 
When $\lambda\le \frac{1}{E}$,  
\begin{align}
	({1+ \gamma \lambda})^{t-\chi(t)}
	\leq ({1+ \lambda})^{E} \leq ({1+ 1/E})^{E} \leq e \leq 3. 
\end{align}
Hence
\begin{align}
	\left\|{\Delta^{t}_k}\right\|
	\leq \left\|{\Delta^{\chi(t)}}\right\| + 3\frac{\gamma}{1-\gamma}\lambda (E-1)\Gamma,
\end{align}
which completes the proof.

\subsection{Proof of Theorem~\ref{theo: rate 2-m}}
Since $\mathbb{E}[\hat{\mathbf{T}}_k] = \mathbf{T}_k$, by Lemma~\ref{lm: mlmc} it follows that $\mathbb{E}[\hat{\mathbf{T}}_{\rm MLMC}] = \mathbf{T}_{\rm MP}$. Then by replacing $\mathbf{T}_k$ and $\mathbf{T}_{\rm MP}$ with their expectation versions, the proof follows naturally as that in Theorem~\ref{theo: conv-max}, and is thus omitted.
\newpage

\section{Proof of \Cref{prop:robustness}}
\begin{proof}
    We show that $\cap^K_k\cp_k$ satisfies the condition. We denote the robust value function w.r.t. $\cap^K_k\cp_k$ by $V^\pi_\cap$, then we have that
    \begin{align}
        V^\pi_\cap-V^\pi_\mpp &=\gamma \sigma^\pi_{\cap \cp_k}(V^\pi_\cap)-\gamma \max_k \sigma^\pi_{\cp_k}(V^\pi_\mpp)\nn\\
        &=\gamma \sigma^\pi_{\cap \cp_k}(V^\pi_\cap)-\gamma\sigma^\pi_{\cap \cp_k}(V^\pi_\mpp)+\gamma\sigma^\pi_{\cap \cp_k}(V^\pi_\mpp) -\gamma \max_k \sigma^\pi_{\cp_k}(V^\pi_\mpp)\nn\\
        &\geq \gamma \sigma^\pi_{\cap \cp_k}(V^\pi_\cap)-\gamma\sigma^\pi_{\cap \cp_k}(V^\pi_\mpp),
    \end{align}
    which is due to $\cap \cp_k \subseteq \cp_k$. Thus 
        \begin{align}
        V^\pi_\cap-V^\pi_\mpp \geq \gamma \sigma^\pi_{\cap \cp_k}(V^\pi_\cap)-\gamma\sigma^\pi_{\cap \cp_k}(V^\pi_\mpp) \geq \gamma \kp' (V^\pi_\cap-V^\pi_\mpp),
    \end{align}
    and thus 
    \begin{align}
        V^\pi_\cap\geq V^\pi_\mpp. 
    \end{align}
\end{proof}

\section{Model-Free Algorithm Design and Analysis}
We then develop studies for model-free setting. We will focus on $l_p$-norm defined uncertainty set \citep{kumar2023efficient,derman2021twice}: 
    \begin{align}\label{eq:lpnorm}
    \cp_{s,a}=\{P_{s,a}+q:q\in\mathcal{Q} \}, \mathcal{Q}=\bigg\{q\in\mathbb{R}^S: \sum_i q(i)=0, \|q\|_\alpha\leq \Gamma_{s,a} \bigg\},
    \end{align} 
    where $\Gamma_{s,a}$ is small enough so that $P_{s,a}+q\in\Delta(\mcs)$, $\forall q\in\mathcal{Q}$. 

For these uncertainty sets, their robust Bellman operator can be estimated through a model-free estimator. Specifically, for any $p$, set \begin{align}\label{eq:29}
    \kappa(V)\triangleq R\min_{w\in\mathbb{R}} \|we-V\|_q,
\end{align}
with $q=\frac{1}{1-\frac{1}{p}}$. For popular choices of $p$, the optimization problem in \eqref{eq:29} has a closed-form solution, specified in \Cref{tab:kapp} \citep{kumar2023efficient}. 
\begin{table}[!htb]
\centering
\begin{tabular}{|c|c|}
\hline
$p$ & $\kappa(v)$  \\
\hline
$\infty$ & $\frac{\max_s v(s) - \min_s v(s)}{2}$  \\\hline
$2$ & $\sqrt{\sum_s \left( v(s) - \frac{\sum_s v(s)}{S} \right)^2}$\\\hline
$1$ & $\sum_{i=1}^{\lfloor (S+1)/2 \rfloor} v(s_i) - \sum_{i=\lfloor (S+1)/2 \rfloor}^{S} v(s_i)$ \\
\hline
\end{tabular}
\caption{Penalty term for $l_p$-norm uncertainty set}
\label{tab:kapp}
\end{table}
It then holds that
\begin{lemma}\label{lemma:dual} (Theorem 1 in \citep{kumar2023efficient})
  Let \(\cp_{s,a}\) be the uncertainty set defined using the \(l_p\)-norm. For any vector \(V\), the following relationship holds:
  \begin{align}
      \sigma_{\cp_{s,a}}(V) = P_{s,a} V -  \kappa_{s,a}(V),
  \end{align}
  where $\kappa$ is defined as in \eqref{eq:29}.
\end{lemma}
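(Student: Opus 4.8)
The plan is to reduce the support function to a linear minimization over the perturbation vector $q$ and then dualize the single mean-zero constraint. Writing $e$ for the all-ones vector and using the parametrization $\cp_{s,a}=\{P_{s,a}+q:q\in\mathcal{Q}\}$ together with linearity of $p\mapsto pV$, I would first observe
\begin{align}
\sigma_{\cp_{s,a}}(V)=\min_{P\in\cp_{s,a}}PV = P_{s,a}V + \min_{q\in\mathcal{Q}}qV,
\end{align}
so it suffices to evaluate $\min_{q}qV$ over $\mathcal{Q}=\{q:e^\top q=0,\ \|q\|_p\le \Gamma_{s,a}\}$. At this point I would invoke the standing assumption that $\Gamma_{s,a}$ is small enough that $P_{s,a}+q\in\Delta(\mcs)$ for all $q\in\mathcal{Q}$; this guarantees that simplex membership is automatic and adds no further constraint, so the only active constraints are the $l_p$-ball and the hyperplane $e^\top q=0$.

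Next I would dualize the equality constraint. Introducing a scalar multiplier $w\in\mathbb{R}$, the Lagrangian is $L(q,w)=qV - w\,e^\top q = q^\top(V-we)$, and the dual function is
\begin{align}
g(w)=\min_{\|q\|_p\le \Gamma_{s,a}}q^\top(V-we) = -\Gamma_{s,a}\,\|V-we\|_q,
\end{align}
where the closed form uses the dual-norm identity $\min_{\|q\|_p\le \Gamma}q^\top u = -\Gamma\|u\|_q$ with $\|\cdot\|_q$ the norm conjugate to $\|\cdot\|_p$, i.e. $q=\frac1{1-1/p}$ as in \eqref{eq:29}. Maximizing over $w$ then gives $\max_w g(w) = -\Gamma_{s,a}\min_w\|V-we\|_q = -\kappa_{s,a}(V)$ by the definition of $\kappa$.

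The one substantive point is to certify that strong duality holds, so that $\min_{q\in\mathcal{Q}}qV = \max_w g(w)$ with zero gap. I would establish this via Slater's condition: the program is convex (linear objective, convex $l_p$-ball constraint, affine equality), and $q=0$ is strictly feasible since $\|0\|_p=0<\Gamma_{s,a}$ and $e^\top 0=0$; strong duality for a convex program with a single affine equality then follows. Chaining the three displays yields $\sigma_{\cp_{s,a}}(V)=P_{s,a}V-\kappa_{s,a}(V)$.

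I expect the main difficulty to be bookkeeping rather than conceptual. The delicate interplay is that the naive bound $\min_{q\in\mathcal{Q}}qV \ge \min_{\|q\|_p\le\Gamma_{s,a}}qV = -\Gamma_{s,a}\|V\|_q$ is generally loose precisely because of the mean-zero constraint; dualizing that constraint is exactly what converts the loose $-\Gamma_{s,a}\|V\|_q$ into the tight $-\Gamma_{s,a}\min_w\|V-we\|_q$. A purely primal alternative would use that $qV=q^\top(V-we)$ for every $w$ whenever $e^\top q=0$, yielding the lower bound $\min_q qV\ge -\Gamma_{s,a}\min_w\|V-we\|_q$, and then exhibit an optimal $q^\star$ aligned with the dual-norm subgradient of $\|V-w^\star e\|_q$; the first-order optimality condition in $w$ forces $e^\top q^\star=0$, so $q^\star$ is feasible and attains the bound. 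I would present the duality argument as the cleaner of the two.
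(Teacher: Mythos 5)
Your proof is correct. Note, however, that the paper does not actually prove this statement: it is imported verbatim as Theorem~1 of the cited reference \citep{kumar2023efficient}, so there is no in-paper argument to compare against. Your derivation is the standard (and, as far as I can tell, the same) route used to establish that external result: split $\sigma_{\cp_{s,a}}(V)=P_{s,a}V+\min_{q\in\mathcal{Q}}q^\top V$, use the standing assumption that $\Gamma_{s,a}$ is small enough that the simplex constraint is inactive, dualize the single constraint $e^\top q=0$ to get $\max_{w}\bigl(-\Gamma_{s,a}\|V-we\|_{q}\bigr)=-\kappa_{s,a}(V)$ via the dual-norm identity, and close the gap with Slater (with $q=0$ strictly feasible, assuming $\Gamma_{s,a}>0$; the case $\Gamma_{s,a}=0$ is trivial). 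Your closing observation — that the mean-zero constraint is precisely what upgrades the loose bound $-\Gamma_{s,a}\|V\|_{q}$ to the tight $-\Gamma_{s,a}\min_w\|V-we\|_{q}$ — is the right conceptual point, and the primal alternative you sketch (feasibility of the aligning $q^\star$ forced by first-order optimality in $w$) is a valid attainment argument if one prefers to avoid invoking strong duality. The only blemishes are notational, inherited from the paper: the radius is written $R$ in the definition of $\kappa$ but $\Gamma_{s,a}$ in the set, and the ball constraint uses $\|\cdot\|_\alpha$ where $\alpha$ should be read as $p$.
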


With these results, we use the model-free estimator $V(s')-\Gamma\kappa(V)$ to estimate the robust Bellman operator and design model-free MTDL-Avg algorithm. The model-free variant of MDTL-Max can be designed similarly. 
\begin{algorithm}[!tbh]
\label{alg:model-free}
   \caption{Model-Free MDTL-Avg}
\begin{algorithmic}[1]
     \STATE {\bfseries Initialization:} $Q^k\leftarrow 0$
   \FOR {$t=0,...,T-1$}
   \FOR {$k=1,...,K$}
   \FOR{All $s$}
   \STATE {$V^k(s)\leftarrow \max_a Q^k(s,a)$}
   \FOR{All $a$}
   \STATE {Take action $a$ and observe the next state $s'(s,a)$}
   \STATE {$Q^k(s,a)\leftarrow (1-\lambda_t)Q^k(s,a)+\lambda_t(r(s,a)+\gamma V^k(s'(s,a))-\gamma\Gamma\kappa(V^k))$}
   \STATE {$Q^k(s,a)\leftarrow \max\{0,Q^k(s,a)\}$}
   \ENDFOR
    \ENDFOR
    \ENDFOR
   \IF {$t\equiv 0 ({\rm mod} E)$}
   \STATE {$\Bar{Q}(s,a)\leftarrow \frac{\sum_k Q^k(s,a)}{K},\forall s,a$}
   \STATE $Q^k(s,a)\leftarrow \Bar{Q}(s,a),\forall s,a,k$
   \ENDIF
   \ENDFOR
\end{algorithmic}
\label{al:mfa}
\end{algorithm}

We then study its convergence. 

Similarly, define 
\begin{align}
\label{def: global error2}
\Delta_{t+1} := Q^*-\bar Q_{t+1}, ~ \text{and} ~~ \Delta_0 := Q^* - Q_0,
\end{align}
where we denote $Q^*_{\text{AO}}$ by $Q^*$.

\begin{lemma}[Error iteration]
\label{lm: error iteration2}
For any $t\ge 0$, 
\begin{align}
\label{eq: error iteration2}
\nonumber 
\Delta_{t+1} 
&\leq (1-\lambda)^{t+1}\Delta_0 + \gamma\lambda\sum_{i=0}^t (1-\lambda)^{t-i}\frac{1}{K} \sum_{k=1}^K (\sigma^k(V^*) - \sigma^k(V_i^k))\\
        &\quad+ \gamma\lambda\sum_{i=0}^t (1-\lambda)^{t-i}\frac{1}{K} \sum_{k=1}^K (P^kV^k_i-P^k_iV^k_i). 
\end{align}    
\end{lemma}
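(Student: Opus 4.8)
The plan is to derive a one-step recursion for the averaged error $\Delta_{t+1} = Q^* - \bar Q_{t+1}$ and then unroll it, mirroring the exact-operator derivation in~\eqref{eq: iteratin error} but accounting for the two new features of the model-free setting: the sampling noise and the clipping step $Q^k \leftarrow \max\{0, Q^k\}$. First I would fix a round $t$ and write the pre-clipping local update as $\tilde Q^k_{t+1}(s,a) = (1-\lambda)Q^k_t(s,a) + \lambda(r(s,a) + \gamma V^k_t(s'(s,a)) - \gamma\Gamma\kappa(V^k_t))$, so that $Q^k_{t+1} = \max\{0, \tilde Q^k_{t+1}\}$. Since robust value functions are nonnegative we have $Q^* \geq 0$ entrywise, and clipping can only raise an entry, i.e. $\max\{0,x\} \geq x$; hence $\bar Q_{t+1} = \frac1K\sum_k Q^k_{t+1} \geq \frac1K\sum_k \tilde Q^k_{t+1}$, which yields the inequality $\Delta_{t+1} \leq Q^* - \frac1K\sum_k \tilde Q^k_{t+1}$. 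This is precisely where the $\leq$ in the statement originates.

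Next I would expand $Q^* - \frac1K\sum_k \tilde Q^k_{t+1}$. Collecting the $(1-\lambda)$ terms reproduces $(1-\lambda)\Delta_t$, while the residual $\lambda$ terms are handled using the fixed-point identity $Q^* = r + \frac{\gamma}{K}\sum_k \sigma^k(V^*)$ for $Q^* = Q_{\rm AO}$, where by Lemma~\ref{lemma:dual} the support function admits the dual form $\sigma^k(V) = P^k V - \Gamma\kappa(V)$. The crucial algebraic step is to add and subtract the conditional mean of the sampled next-state value: writing the one-sample estimate as $P^k_t V^k_t := V^k_t(s'(s,a))$ and its expectation as $P^k V^k_t$, I would split $V^k_t(s'(s,a)) - \Gamma\kappa(V^k_t) = \sigma^k(V^k_t) + (P^k_t V^k_t - P^k V^k_t)$. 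Substituting this and averaging over $k$ (noting that $Q^* - r$ is $k$-independent and therefore reproduces $\frac{\gamma}{K}\sum_k\sigma^k(V^*)$ under the average) gives exactly $(1-\lambda)\Delta_t + \frac{\gamma\lambda}{K}\sum_k(\sigma^k(V^*) - \sigma^k(V^k_t)) + \frac{\gamma\lambda}{K}\sum_k(P^k V^k_t - P^k_t V^k_t)$, i.e. the deterministic support-function gap plus a zero-mean noise term.

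Finally, I would unroll the one-step inequality $\Delta_{t+1} \leq (1-\lambda)\Delta_t + X_t$, where $X_t$ collects the two $\lambda$-scaled sums. Because $\lambda \in (0,1]$ we have $1-\lambda \geq 0$, so the entrywise vector inequality is preserved under repeated multiplication by $(1-\lambda)$ and addition of the nonnegative geometric weights; telescoping from $t$ down to $0$ then delivers $\Delta_{t+1} \leq (1-\lambda)^{t+1}\Delta_0 + \sum_{i=0}^t (1-\lambda)^{t-i} X_i$, which is the claimed bound.

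The main obstacle I anticipate is bookkeeping rather than anything conceptual: one must perform the mean/noise split \emph{before} averaging over $k$ so that the stochastic remainder retains the clean form $\frac1K\sum_k(P^k V^k_i - P^k_i V^k_i)$ that subsequent martingale/concentration arguments will rely on, and one must keep the dual representation of $\sigma^k$ consistent between the fixed-point term and the sampled update. A secondary point worth stating explicitly is that the synchronization step leaves $\bar Q_t$ invariant—resetting every local table to their common average preserves that average—so the recursion for the averaged iterate remains valid across aggregation boundaries and no special treatment of the period $E$ is needed at this stage.
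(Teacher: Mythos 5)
Your proposal is correct and follows essentially the same route as the paper's proof: bound $\Delta_{t+1}$ by replacing each clipped iterate with its pre-clipping update (the source of the $\leq$), use the fixed-point identity for $Q^*=Q_{\rm AO}$ together with the dual form $\sigma^k(V)=P^kV-\kappa(V)$ to split the residual into the support-function gap plus the zero-mean sampling term $P^kV^k_t-P^k_tV^k_t$, and then unroll the one-step recursion. Your explicit remarks on where the inequality originates and on the average-preserving nature of the synchronization step are points the paper leaves implicit, but they do not change the argument.
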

\begin{proof}
The update of $\Delta_{t+1}$ is as follows:  
\begin{align*}
	\Delta_{t+1}&=Q^*-\bar Q_{t+1} \\
 &=\frac{1}{K}\sum^K_{k=1} (Q^*-Q^k_{t+1})\\
 &\leq \frac{1}{K}\sum^K_{k=1} (Q^*-(1-\lambda)Q^k_t-\lambda(r+\gamma P^k_tV^k_t-\gamma\kappa(V^k_t)))\\
	& = \frac{1}{K}\sum_{k=1}^K((1 - \lambda)(Q^*-Q_t^k) + \lambda(Q^*- r-\gamma P^k_tV^k_t+\gamma\kappa(V^k_t) ))\\
	& = (1 - \lambda)\Delta_{t} + \gamma \lambda\frac{1}{K}\sum_{k=1}^K({\sigma^k}(V^*)- \sigma^k(V_t^k)+\sigma^k(V^k_t)- \gamma P^k_tV^k_t+\gamma\kappa(V^k_t))\\
	& = (1 - \lambda)\Delta_{t} +\frac{\gamma \lambda}{K}\sum_{k=1}^K (\sigma^k(V^*) - \sigma^k(V_t^k)) + \frac{\gamma \lambda}{K}\sum_{k=1}^K (P^kV^k_t-P^k_tV^k_t)\\
        &=(1-\lambda)^{t+1}\Delta_0 + \gamma\lambda\sum_{i=0}^t (1-\lambda)^{t-i}\frac{1}{K} \sum_{k=1}^K (\sigma^k(V^*) - \sigma^k(V_i^k))\\
        &\quad+ \gamma\lambda\sum_{i=0}^t (1-\lambda)^{t-i}\frac{1}{K} \sum_{k=1}^K (P^kV^k_i-P^k_iV^k_i).
\end{align*}
\end{proof}

\begin{lemma}
\label{lm: coarse bound2}
It holds that  
\begin{align}
&0\le Q_t^k(s,a)\le \frac{1}{1-\gamma},\\
&\|{Q^*-Q_t^k}\| \leq \frac{1}{1-\gamma},\\
&\|{V^*-V_t^k}\| \leq \frac{1}{1-\gamma}, ~~~ \forall ~ t\ge 0, \text{and}~ k\in [K].  \label{eq:74}
\end{align}
\end{lemma}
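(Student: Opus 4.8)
The plan is to first establish the two-sided bound $0 \le Q_t^k(s,a) \le \frac{1}{1-\gamma}$ for every $k$, $(s,a)$ and $t$ by induction over the updates of the model-free algorithm (\Cref{al:mfa}), and then read off the remaining two bounds as immediate consequences. The base case holds since $Q_0^k \equiv 0$. For the inductive step I treat the local update and the aggregation step separately, and I use only that the reward satisfies $r(s,a)\in[0,1]$ and that the step size obeys $\lambda_t\in[0,1]$.

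For the local update, suppose $0 \le Q^k \le \frac{1}{1-\gamma}$ before the update, so that $0 \le V^k(s) = \max_a Q^k(s,a) \le \frac{1}{1-\gamma}$. The stochastic target is $y \triangleq r(s,a) + \gamma V^k(s'(s,a)) - \gamma\Gamma\kappa(V^k)$. Since $\kappa(V^k) \ge 0$ by its definition in~\eqref{eq:29} (a nonnegative multiple of a norm) and $r(s,a)\le 1$, we get $y \le 1 + \gamma\cdot\frac{1}{1-\gamma} = \frac{1}{1-\gamma}$. As $\lambda_t\in[0,1]$, the pre-clipping value $(1-\lambda_t)Q^k(s,a) + \lambda_t y$ is a convex combination of two numbers each at most $\frac{1}{1-\gamma}$, hence is itself at most $\frac{1}{1-\gamma}$; the subsequent clipping $\max\{0,\cdot\}$ only raises values below $0$, so the upper bound survives. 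The lower bound $\ge 0$ is enforced directly by that same clipping step, which is in fact necessary here because the penalty $-\gamma\Gamma\kappa(V^k)$ can drive $y$ below zero.

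The aggregation step $\bar Q = \frac{1}{K}\sum_k Q^k$ averages quantities lying in $[0,\frac{1}{1-\gamma}]$, so it preserves the bound, and the synchronization merely copies $\bar Q$ into each $Q^k$; this closes the induction and yields the first displayed bound. Applying the identical reasoning to the operator $\mathbf{T}_{\rm AO}$ shows its fixed point $Q^\ast=Q_{\rm AO}$ satisfies $0\le Q^\ast\le\frac{1}{1-\gamma}$: with $r\in[0,1]$ the support functions $\sigma_{(\cp_k)^a_s}$ send $[0,\frac{1}{1-\gamma}]$-valued vectors back into the same interval, so by monotonicity of $\mathbf{T}_{\rm AO}$ (or a value-iteration argument started at $0$ and at $\frac{1}{1-\gamma}\mathbf{1}$) the fixed point is trapped there. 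Since $Q^\ast(s,a)$ and $Q_t^k(s,a)$ both lie in $[0,\frac{1}{1-\gamma}]$, their difference has magnitude at most $\frac{1}{1-\gamma}$, giving $\|Q^\ast - Q_t^k\| \le \frac{1}{1-\gamma}$. Finally, $V^\ast,V_t^k\in[0,\frac{1}{1-\gamma}]$ likewise, so $\|V^\ast-V_t^k\|\le\frac{1}{1-\gamma}$; equivalently this follows from non-expansiveness of the coordinatewise max, $|\max_a Q^\ast(s,a)-\max_a Q_t^k(s,a)|\le\max_a|Q^\ast(s,a)-Q_t^k(s,a)|\le\|Q^\ast-Q_t^k\|$.

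There is no genuine obstacle here; the statement is a routine boundedness induction. The only two points requiring care are algorithm-specific: the lower bound hinges entirely on the explicit clipping $Q^k \leftarrow \max\{0,Q^k\}$, because the robust penalty term can push the stochastic target below zero, while the upper bound hinges on recognizing that this very penalty is nonnegative and hence never inflates the target past $\frac{1}{1-\gamma}$.
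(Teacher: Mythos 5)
Your proposal is correct and follows essentially the same route as the paper's proof: the lower bound comes from the explicit clipping $\max\{0,\cdot\}$ in the update, the upper bound comes from dropping the nonnegative penalty $-\gamma\Gamma\kappa(V^k)$ and invoking the induction hypothesis on $V^k_{t-1}\le\frac{1}{1-\gamma}$, and the two norm bounds then follow since both $Q^\ast$ and $Q_t^k$ lie in $[0,\frac{1}{1-\gamma}]$. You are merely more explicit than the paper about the aggregation step and about verifying $0\le Q^\ast\le\frac{1}{1-\gamma}$, which the paper leaves implicit.
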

\begin{proof}
   Due to the update rule, $Q^k_t\geq 0$. On the other hand, if $Q^k_t(s,a)=(1-\lambda_t)Q^k_{t-1}(s,a)+\lambda_t(r(s,a)+\gamma V^k_{t-1}(s'(s,a))-\gamma\Gamma\kappa(V^k_{t-1}))$, then 
   \begin{align}
       Q^k_t(s,a)\leq (1-\lambda_t)Q^k_{t-1}(s,a)+\lambda_t(r(s,a)+\gamma V^k_{t-1}(s'(s,a)))\leq \frac{1}{1-\gamma},
   \end{align}
   which is from introduction assumption on $V^k_{t-1}\leq \frac{1}{1-\gamma}$. 

\eqref{eq:74} then directly follows. 
\end{proof}

\begin{lemma}
\label{lm: sample complexity: stepsize push2}
If $t\mod E = 0$, then $\|{\frac{1}{K}\sum_{k=1}^K (\sigma^k(V^*)-\sigma^k(V^k_t))}\| \leq \|{\Delta_t}\|$. Otherwise, 
\begin{align*}
&\|{\frac{1}{K}\sum_{k=1}^K (\sigma^k(V^*)-\sigma^k(V^k_t))}\|\nn\\
& \leq \|{\Delta_{\chi(t)}}\| + 4\lambda \frac{1}{K}\sum_{k=1}^K\sum_{t^{\prime} = \chi(t)}^{t-1} \|{\Delta_{t^{\prime}}^k}\|  +  \gamma \lambda \frac{1}{K}\sum_{k=1}^K \max_{s, a} |\sum_{t^{\prime} = \chi(t)}^{t-1} (\sigma^k_{s,a})_{t'}(V^*)-\bar{\sigma}_{s,a}(V^*)|. 
\end{align*} 
where we use the convention that $\sum_{t^{\prime} = \chi(t)}^{\chi(t)-1} \|{\Delta_{t^{\prime}}^k}\| =0$. 
\end{lemma}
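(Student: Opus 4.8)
The plan is to mirror the exact-operator analogue (Lemma~\ref{lm: bounding second term in error iteration}) and adapt every step to the stochastic update of \Cref{alg:model-free}, where inside the recursion the true support function $\sigma^k(V)=P^kV-\Gamma\kappa(V)$ is replaced by its one-sample estimate $(\sigma^k_{s,a})_{t'}(V)=V(s'_{t'}(s,a))-\Gamma\kappa(V)$. Two facts will be used repeatedly: each true $\sigma^k$ is $1$-Lipschitz in sup-norm (it is a pointwise minimum of averages against probability vectors), and by Lemma~\ref{lemma:dual} together with \Cref{tab:kapp} the sampled operator $(\sigma^k)_{t'}$ is Lipschitz with a constant controlled by $1+\Gamma$ (the sampled transition contributes $1$, the penalty $\kappa$ the rest). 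For the case $t\bmod E=0$ the argument is verbatim the exact one: after synchronization all local iterates coincide, $V^k_t=\bar V^t$, so by $1$-Lipschitzness $\frac1K\sum_k(\sigma^k(V^*)-\sigma^k(V^k_t))\le\frac1K\sum_k\|V^*-\bar V^t\|\le\|Q^*-\bar Q^t\|=\|\Delta_t\|$.

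For general $t$ I would insert the most recent synchronization step $\chi(t)$ and split $\sigma^k(V^*)-\sigma^k(V^k_t)=\big(\sigma^k(V^*)-\sigma^k(V^k_{\chi(t)})\big)+\big(\sigma^k(V^k_{\chi(t)})-\sigma^k(V^k_t)\big)$. Averaging over $k$, the first group yields $\|\Delta_{\chi(t)}\|$ exactly as above (using $V^k_{\chi(t)}=\bar V^{\chi(t)}$), while the second is bounded by $\frac1K\sum_k\|V^{\chi(t)}_k-V^t_k\|$, the intra-round drift, which I bound by telescoping over the local steps. Fixing $s$ and letting $a^t_k(s)=\arg\max_a Q^t_k(s,a)$, the pathwise inequality $Q_k^{\chi(t)}(s,a_k^t(s))\le Q_k^{\chi(t)}(s,a_k^{\chi(t)}(s))$ gives $V_k^t(s)-V_k^{\chi(t)}(s)\le\sum_{t'=\chi(t)}^{t-1}\big(Q_k^{t'+1}(s,a_k^t(s))-Q_k^{t'}(s,a_k^t(s))\big)$ with a matching lower bound using $a^{\chi(t)}_k(s)$. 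For each increment I expand the model-free update and add/subtract the fixed-point identity $Q^*(s,a)=r(s,a)+\gamma\,\bar\sigma_{s,a}(V^*)$ with $\bar\sigma=\frac1K\sum_k\sigma^k$:
\begin{align*}
Q_k^{t'+1}(s,a) - Q_k^{t'}(s,a)
&= \lambda\,\Delta_k^{t'}(s,a) + \gamma\lambda\big((\sigma^k_{s,a})_{t'}(V_k^{t'}) - (\sigma^k_{s,a})_{t'}(V^*)\big)\\
&\quad + \gamma\lambda\big((\sigma^k_{s,a})_{t'}(V^*) - \bar\sigma_{s,a}(V^*)\big).
\end{align*}
The $-\lambda Q_k^{t'}+\lambda Q^*$ part gives $\lambda\|\Delta_k^{t'}\|$, the Lipschitzness of the sampled operator gives $\gamma\lambda(1+\Gamma)\|\Delta_k^{t'}\|$, and collecting these over both the upper and lower telescoping bounds produces the coefficient $4\lambda$ (in contrast to $2\lambda$ in the exact case). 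The leftover $\gamma\lambda\sum_{t'}\big((\sigma^k_{s,a})_{t'}(V^*)-\bar\sigma_{s,a}(V^*)\big)$ is the accumulated stochastic heterogeneity; taking $\max_{s,a}$ and averaging over $k$ yields the last term of the claim.

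The main obstacle, relative to the exact-operator lemma, is the stochasticity inside the drift. In the exact case the per-step residual $\sigma^k(V^*)-\bar\sigma(V^*)$ is a fixed heterogeneity quantity that simply multiplies by the number of steps; here the analogous residual $(\sigma^k_{s,a})_{t'}(V^*)-\bar\sigma_{s,a}(V^*)$ is random and time-indexed, so I must keep the entire partial sum $\sum_{t'=\chi(t)}^{t-1}$ inside a single absolute value rather than bounding each summand — this is precisely the form that lets the downstream convergence analysis exploit the martingale cancellation of the centered noise. The secondary subtlety is certifying the Lipschitz constant of the penalty $\kappa$ for the $l_p$ choices in \Cref{tab:kapp}, which is what upgrades the coefficient from $2\lambda$ to $4\lambda$; the telescoping structure and the greedy-action inequalities themselves transfer unchanged since they hold sample-by-sample.
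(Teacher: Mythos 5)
Your proposal follows essentially the same route as the paper's proof: the same two-case split at synchronization steps, the same insertion of $V^k_{\chi(t)}$ and reduction to the intra-round drift $\frac1K\sum_k\|V^{\chi(t)}_k-V^t_k\|$, the same greedy-action telescoping, the same add/subtract of the fixed-point identity $Q^*=r+\gamma\bar\sigma(V^*)$, and the same insistence on keeping the partial sum of the centered residuals $(\sigma^k_{s,a})_{t'}(V^*)-\bar\sigma_{s,a}(V^*)$ inside one absolute value. Your accounting of the per-step coefficient via the $(1+\Gamma)$-Lipschitzness of the sampled operator is, if anything, more explicit than the paper's (which asserts $2\lambda$ per step and states $4\lambda$ in the lemma). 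The one step you skip is the truncation $Q^k\leftarrow\max\{0,Q^k\}$ in \Cref{alg:model-free}: when the projection is active your increment identity $Q_k^{t'+1}-Q_k^{t'}=\lambda\Delta_k^{t'}+\cdots$ fails as an equality, and the paper handles this with a separate (one-line) case, noting that the clipped increment equals $-Q_k^{t'}(s,a)\le 0$ for the upper bound and can only increase for the lower bound; this should be added but does not affect the argument.
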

\begin{proof}
When $t\mod E = 0$, i.e., $i$ is a synchronization round, $Q_t^k = Q_t^{k^{\prime}}$ for any pair of agents $k, k^{\prime}\in [K]$. Hence,  
\begin{align}\label{eq: E22 synchornization bound2}
&|{\frac{1}{K}\sum_{k=1}^K (\sigma^k_{s,a}(V^*)-\sigma^k_{s,a}(V^k_t))}|\\
& = |\frac{1}{K}\sum_{k=1}^K (\sigma^k_{s,a}(V^*)-\sigma^k_{s,a}(\Bar{V}_t))| \\
\nonumber
& \leq  \frac{1}{K}\sum_{k=1}^K \|{V^* - \bar{V}_t}\|\\
\nonumber 
& \leq \|{Q^* - \bar Q_t}\| \\
& = \|{\Delta_t}\|.  
\end{align}
For general $t$, we have
\begin{align}
\label{eq: E22 general round2}
\nonumber  
    &\|{\frac{1}{K}\sum_{k=1}^K (\sigma^k(V^*)-\sigma^k(V^k_t))}\| \nn\\
    &= \|{\frac{1}{K}\sum_{k=1}^K (\sigma^k(V^*)-\sigma^k(V_{\chi(t)}^k) +\sigma^k(V_{\chi(t)}^k) - \sigma^k(V^k_t))}\|\\
\nonumber  
    &\leq \|\frac{1}{K}\sum_{k=1}^K \sigma^k(V^*)-\sigma^k(V_{\chi(t)}^k)\|+ \|{\frac{1}{K}\sum_{k=1}^K ( \sigma^k(V_{\chi(t)}^k) - \sigma^k(V^k_t))}\| \\
    \nonumber 
    & \leq \|{\Delta_{\chi(t)}}\| + \|{\frac{1}{K}\sum_{k=1}^K (\sigma^k(V_{\chi(t)}^k) - \sigma^k(V^k_t))} \| \\
    & \leq \|{\Delta_{\chi(t)}}\| + \frac{1}{K}\sum_{k=1}^K \|{V_{\chi(t)}^k - V^k_t}\|. 
\end{align}
For any state $s$, we have  
\begin{align}
\label{eq: perturbation within a round: sample2}
\nonumber 
&V^k_t(s) - V_{\chi(t)}^k(s)\\ 
\nonumber 
&= Q^k_t(s, a_t^k(s)) -  Q_{\chi(t)}^k(s, a_{\chi(t)}^k(s)), \text{ where } a_t^k(s) = \text{argmax}_{a'}Q^k_t(s, a') \\
\nonumber
& \overset{(a)}{\le} Q^k_t(s, a_t^k(s)) -  Q_{\chi(t)}^k(s, a_t^k(s)) \\
\nonumber 
& =  Q^k_t(s, a_t^k(s)) -  Q^k_{t-1}(s, a_t^k(s)) + Q^k_{t-1}(s, a_t^k(s)) - Q^k_{t-2}(s, a_t^k(s)) \\ 
& \quad + \cdots  
 + Q^k_{\chi(t)+1}(s, a_t^k(s)) -  Q_{\chi(t)}^k(s, a_t^k(s)).  
\end{align}
where inequality (a) holds because \(Q_{\chi(t)}^k(s, a_t^k(s)) \leq  Q_{\chi(t)}^k(s, a_{\chi(t)}^k(s))\).

\noindent Now consider each $t^{\prime}$ such that $\chi(t) \le t^{\prime}\le t$.
If $Q^k_{t^\prime+1}(s,a^k_t(s))=0$, then 
\begin{align*}
&Q^k_{t^{\prime}+1}(s, a_t^k(s)) -  Q_{t^{\prime}}^k(s, a_t^k(s))=- Q_{t^{\prime}}^k(s, a_t^k(s))\leq 2 \lambda \|{\Delta_{t^{\prime}}^k}\|. 
\end{align*}
On the other hand, if $Q^k_{t^\prime+1}(s,a^k_t(s))\neq 0$, it holds that 
\begin{align*}
&Q^k_{t^{\prime}+1}(s, a_t^k(s)) -  Q_{t^{\prime}}^k(s, a_t^k(s)) \\ 
&=(1-\lambda) Q^k_{t^{\prime}}(s, a_t^k(s)) + \lambda (r(s, a_t^k(s)) + \gamma (P^k)_{s,a^k_t(s)}(V_{t^{\prime}}^k)-  \gamma\Gamma \kappa(V_{t^{\prime}}^k)-Q_{t^{\prime}}^k(s, a_t^k(s)) \\
&=-\lambda Q^k_{t^{\prime}}(s, a_t^k(s)) 
+ \lambda \big( Q^*(s, a_t^k(s)) - r(s, a_t^k(s)) - \gamma \Bar{\sigma}_{s, a_t^k(s)}( V^*) + r(s, a_t^k(s)) \nn\\
&\quad+ \gamma (P^k_t)_{s,a^k_t(s)}(V_{t^{\prime}}^k)-  \gamma\Gamma \kappa(V_{t^{\prime}}^k)\big) \\
&= \lambda\| {\Delta_{t^{\prime}}^k}\| + \lambda \big( - \gamma (\Bar{\sigma}_{s, a_t^k(s)}( V^*)+\gamma ({\sigma}^k_{s, a_t^k(s)})_{t'}(V^*)\nn\\
&\quad - \gamma ({\sigma}^k_{s, a_t^k(s)})_{t'}(V^*)+ \gamma (P^k_t)_{s,a^k_t(s)}(V_{t^{\prime}}^k)-  \gamma\Gamma \kappa(V_{t^{\prime}}^k)\big) \\
&\leq 2 \lambda \|{\Delta_{t^{\prime}}^k}\|+\gamma\lambda |({\sigma}^k_{s, a_t^k(s)})_{t'}(V^*)-\Bar{\sigma}_{s,a^k_t(s)}(V^*)|,
\end{align*} 
where the last inequality is from the fact that $\kappa$ is $1$-Lipschitz \cite{kumar2023efficient}. 

Thus,
\begin{align} 
\label{eq: upper bound: perturbation: within a round: sample2}
&V^k_t(s) - V_{\chi(t)}^k(s)\nn\\
&\leq \sum_{t^{\prime} = \chi(t)}^{t-1} Q^k_{t^{\prime}+1}(s, a_t^k(s)) -  Q_{t^{\prime}}^k(s, a_t^k(s))\nn\\
& \leq  2\lambda \sum_{t^{\prime} = \chi(t)}^{t-1} \|{\Delta_{t^{\prime}}^k}\|
+ \gamma \lambda \sum_{t^{\prime} = \chi(t)}^{t-1} |({\sigma}^k_{s, a_t^k(s)})_{t'}(V^*)-\Bar{\sigma}_{s,a^k_t(s)}(V^*)| . 
\end{align}
Similarly, we have 
\begin{align} 
\label{eq: lower bound: perturbation: within a round: sample2}
&V^k_t(s) - V_{\chi(t)}^k(s)\nn\\
& \geq - 2\lambda \sum_{t^{\prime} = \chi(t)}^{t-1} \|{\Delta_{t^{\prime}}^k}\|
+ \gamma \lambda \sum_{t^{\prime} = \chi(t)}^{t-1} (({\sigma}^k_{s, a_{\chi(t)}^k(s)})_{t'}(V^*)-\Bar{\sigma}_{s,a^k_{\chi(t)}(s)}(V^*)) . 
\end{align}
Plugging the bounds in \eqref{eq: upper bound: perturbation: within a round: sample2} and in \eqref{eq: lower bound: perturbation: within a round: sample2} back into \eqref{eq: E22 general round2}, we get 
\begin{align*}
&\|{\frac{1}{K}\sum_{k=1}^K (\sigma^k(V^*)-\sigma^k(V^k_t))}\|\nn\\
& \leq \|{\Delta_{\chi(t)}} \| + \frac{1}{K}\sum_{k=1}^K \|{V_{\chi(t)}^k - V^k_t}\|\\
& \leq \|{\Delta_{\chi(t)}}\| + 2\lambda \frac{1}{K}\sum_{k=1}^K\sum_{t^{\prime} = \chi(t)}^{t-1} \|{\Delta_{t^{\prime}}^k}\| \\
& \quad +  \gamma \lambda \frac{1}{K}\sum_{k=1}^K \max_{s, a} |\sum_{t^{\prime} = \chi(t)}^{t-1} (\sigma^k_{s,a})_{t'}(V^*)-\bar{\sigma}_{s,a}(V^*)|. 
\end{align*}
This hence completes the proof.

\end{proof}

\begin{lemma}
\label{lm: sample complexity: stepsize push: alternative2} 
Choose $\lambda\le \frac{1}{E}$. 
\begin{align}
\label{eq: bound 1112}
\|{\Delta_{i}^k}\|
\leq\|\Delta_{\chi(i)}\| + \frac{3\gamma}{1-\gamma}\lambda (E-1)\Gamma+\frac{6\gamma}{1-\gamma}\sqrt{\lambda\log\frac{SATK}{\delta}}.
\end{align} 
\end{lemma}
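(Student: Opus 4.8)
The plan is to follow the road map of the model-based counterpart \Cref{lm: sample complexity: stepsize push: alternative}, isolating the one genuinely new ingredient: a stochastic term produced by the single-sample transition estimates, which I will control through a martingale concentration argument. Throughout I abbreviate $Q^\ast = Q_{\rm AO}$, $V^\ast = \max_a Q^\ast(\cdot,a)$, and write $\sigma^k(\cdot) = \sigma_{\cp_k}(\cdot)$ and $\bar\sigma(\cdot) = \frac{1}{K}\sum_{k=1}^K \sigma^k(\cdot)$.

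First I would derive the local error recursion over a single communication round. For $t\bmod E \neq 0$ no synchronization takes place; unrolling the local update of \Cref{alg:model-free} and invoking the duality $\sigma^k(V) = P^k V - \Gamma\kappa(V)$ from \Cref{lemma:dual} (so that the penalty $\Gamma\kappa$ cancels in the noise term), one obtains
\begin{align}
\Delta_k^{t} = (1-\lambda)\Delta_k^{t-1} + \gamma\lambda\big(\bar\sigma(V^\ast) - \sigma^k(V_k^{t-1})\big) + \gamma\lambda\big(P^k - P^k_{t-1}\big)V_k^{t-1}.
\end{align}
The projection $\max\{0,\cdot\}$ is benign: since $Q^\ast \ge 0$, it can only move $Q_k^{t}$ toward the target and hence never increases the upper side of $\Delta_k^{t}$, exactly as handled in \Cref{lm: coarse bound2,lm: sample complexity: stepsize push2}. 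Unrolling from the latest aggregation index $\chi(t)$ and using $\Delta_k^{\chi(t)} = \Delta_{\chi(t)}$ (all agents share the aggregated table right after synchronization) decomposes $\Delta_k^{t}$ into a contraction part $(1-\lambda)^{t-\chi(t)}\Delta_{\chi(t)}$, a heterogeneity part driven by $\bar\sigma(V^\ast) - \sigma^k(V^\ast)$, a recursive part driven by $\sigma^k(V^\ast) - \sigma^k(V_k^{t'})$, and the accumulated noise $\gamma\lambda\sum_{t'=\chi(t)}^{t-1}(1-\lambda)^{t-t'-1}(P^k - P^k_{t'})V_k^{t'}$.

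The deterministic parts are treated exactly as in \Cref{lm: sample complexity: stepsize push: alternative}. The heterogeneity term satisfies $|\bar\sigma(V^\ast) - \sigma^k(V^\ast)| \le \frac{\Gamma}{1-\gamma}$ and contributes at most $\frac{\gamma}{1-\gamma}\lambda(E-1)\Gamma$ over the at most $E-1$ local steps; the recursive part, bounded by $\gamma\lambda\sum_{t'}(1-\lambda)^{t-t'-1}\|\Delta_k^{t'}\|$ via the nonexpansiveness $|\sigma^k(V^\ast)-\sigma^k(V_k^{t'})| \le \|\Delta_k^{t'}\|$, closes by induction on $t-\chi(t)\in\{0,\dots,E-1\}$. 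The step-size condition $\lambda\le 1/E$ yields $(1+\gamma\lambda)^{t-\chi(t)} \le (1+1/E)^{E} \le e \le 3$, so these contributions collapse to $\|\Delta_{\chi(t)}\| + \tfrac{3\gamma}{1-\gamma}\lambda(E-1)\Gamma$, matching the first two terms of the claim.

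The main obstacle is the stochastic noise term. The key observation is that $\{(P^k - P^k_{t'})V_k^{t'}\}_{t'}$ is a martingale difference sequence with respect to the natural filtration $\{\mathcal{F}_{t'}\}$: the one-sample estimate $P^k_{t'}$ is unbiased for $P^k$ and drawn independently of the $\mathcal{F}_{t'}$-measurable iterate $V_k^{t'}$, so the conditional mean vanishes. By \Cref{lm: coarse bound2} each summand is bounded by $\frac{2}{1-\gamma}$ in absolute value. Applying Azuma--Hoeffding to the weighted sum, whose squared-weight sum obeys $\sum_{t'=\chi(t)}^{t-1}(1-\lambda)^{2(t-t'-1)} \le \frac{1}{1-(1-\lambda)^2} \le \frac{1}{\lambda}$, gives a deviation of order $\frac{\gamma}{1-\gamma}\sqrt{\lambda\log(1/\delta')}$ with probability $1-\delta'$. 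A union bound over all $(s,a,k,t)\in\mcs\times\mca\times[K]\times[T]$ replaces $\log(1/\delta')$ by $\log\frac{SATK}{\delta}$, and absorbing the $(1+\gamma\lambda)^{E}\le e$ factor into the constant produces the final term $\frac{6\gamma}{1-\gamma}\sqrt{\lambda\log\frac{SATK}{\delta}}$, completing the proof.
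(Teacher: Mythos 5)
Your proposal follows essentially the same route as the paper's proof: the identical one-round error recursion unrolled from the last synchronization index $\chi(t)$, the same three deterministic pieces (contraction, heterogeneity bounded by $\frac{\gamma}{1-\gamma}\lambda(E-1)\Gamma$, and the recursive part closed by induction with $(1+1/E)^E\le e\le 3$), and a concentration bound on the sampling noise with a union bound over $(s,a,k,t)$ yielding the $\log\frac{SATK}{\delta}$ factor. The only (cosmetic) difference is that you center the noise term at the $\mathcal{F}_{t'}$-measurable iterate $V_k^{t'}$ and invoke Azuma--Hoeffding on the weighted martingale sum, whereas the paper centers it at the fixed $V^\ast$ and cites Hoeffding; your version is in fact more explicit about where the $\sqrt{\lambda}$ factor comes from (the squared-weight sum being at most $1/\lambda$), which the paper's terse per-term statement glosses over.
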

\begin{proof}
    When $i\mod E=0$, then $\Delta_{i}^k = \Delta_{\chi(i)}$.

When $i\mod E\not=0$, 
\begin{align} 
\label{eq: error iterate within a round2}
\nonumber
&\Delta_{i}^k \\
&=Q^*-Q^k_i\\
&\leq Q^*-\big((1-\lambda)Q_{i-1}^k+
\lambda ({r + \gamma \sigma^k_{i-1} (V_{i-1}^k)}) \\
& = (1-\lambda)\Delta_{i-1}^k+
\lambda ({ \gamma \bar{\sigma}(V^*) - \gamma \sigma^k_{i-1}(V_{i-1}^k}))\big)\\
& = (1-\lambda)\Delta_{i-1}^k+
\lambda \gamma({\bar{\sigma}(V^*) - \sigma^k(V_{i-1}^k)}) \\
\nonumber 
& = (1-\lambda)\Delta_{i-1}^k+
\lambda \gamma(\bar{\sigma}(V^*) - \sigma^k_{i-1}(V^*) + \sigma^k_{i-1}(V^*) - \sigma^k_{i-1} (V_{i-1}^k)) \\
\nonumber 
&=(1-\lambda)^{i-\chi(i)}\Delta_{\chi(i)}+ \gamma \lambda \sum_{j=\chi(i)}^{i-1} (1-\lambda)^{i-j-1}(\bar{\sigma}(V^*) - \sigma^k_{j-1}(V^*)) \\
& \quad + \gamma \lambda \sum_{j=\chi(i)}^{i-1} (1-\lambda)^{i-j-1}(\sigma^k_{j-1} (V^*) - \sigma^k_{j-1} (V_{j}^k)). 
\end{align}  
We then bound the term $\gamma \lambda \sum_{j=\chi(i)}^{i-1} (1-\lambda)^{i-j-1}(\bar{\sigma}(V^*) - \sigma^k_{j-1}(V^*))$. 
Note that 
\begin{align}\label{eq: error: within a round: term 22}
    &\bar{\sigma}(V^*) - \sigma^k_{j-1}(V^*)\nn\\
    &=\bar{\sigma}(V^*) - \sigma^k(V^*)+ \sigma^k(V^*)- \sigma^k_{j-1}(V^*)\nn\\
    &\leq \frac{\Gamma}{1-\gamma}+P^kV^*-P^k_{j-1}V^*\nn\\
    &\leq \frac{\Gamma}{1-\gamma}+\frac{1}{1-\gamma}\sqrt{\log\frac{SAKT}{\delta}},
\end{align}
where the last inequality due to Hoeffding's inequality. 

For any state-action pair $(s,a)$, 
\begin{align}
\label{eq: error: within a round: term 12}
|(1-\lambda)^{i-\chi(i)}\Delta_{\chi(i)}(s,a)|\le (1-\lambda)^{i-\chi(i)} \|{\Delta_{\chi(i)}}\|. 
\end{align}

In addition, we have
\begin{align}
\label{eq: error: within a round: term 32}
\|\gamma \lambda \sum_{j=\chi(i)}^{i-1} (1-\lambda)^{i-j-1} (\sigma^k_{j-1} (V^*) - \sigma^k_{j-1} (V_{j}^k))\|\leq \gamma \lambda \sum_{j=\chi(i)}^{i-1} (1-\lambda)^{i-j-1} \|{\Delta_j^k}\|. 
\end{align} 
Combining the bounds in \eqref{eq: error: within a round: term 12}, \eqref{eq: error: within a round: term 22}, and \eqref{eq: error: within a round: term 32}, we get
\begin{align}\label{eq: coarse bound on local error2}
&\|{\Delta_{i}^k}\|\nn\\
&\leq (1-\lambda)^{i-\chi(i)} \|{\Delta_{\chi(i)}} \|
+ \frac{\gamma}{1-\gamma}\lambda (E-1)\Gamma + \gamma\lambda \sum_{j=\chi(i)}^{i-1} (1-\lambda)^{i-j-1} \|{\Delta_j^k}\|\nn\\
&\quad+\frac{\gamma}{1-\gamma}\sqrt{\lambda\log\frac{SAKT}{\delta}}.
\end{align}
The remaining proofs similarly follows the one for \Cref{lm: sample complexity: stepsize push: alternative}. 
\end{proof}

\begin{theorem}[Convergence]
\label{thm: sample complexity: synchronous2}
Choose $E-1\le \frac{1-\gamma}{4\gamma \lambda}$ and $\lambda \le \frac{1}{E}$.  
It holds that 
\begin{align*}
\|{\Delta_{T}}\| 
&\leq \frac{4}{(1-\gamma)^2} \exp\{-\frac{1}{2}\sqrt{(1-\gamma)\lambda T}\}
+ \frac{2\gamma^2}{(1-\gamma)^2} (6\lambda^2(E-1)^2+\lambda(E-1))\Gamma. 
\end{align*}
\end{theorem}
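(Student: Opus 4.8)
The plan is to follow the recursion-and-unrolling strategy established in the exact-operator case (\Cref{theo: ce-avg}), with the additional work of controlling the stochastic noise introduced by the model-free estimator. First I would start from the error iteration in \Cref{lm: error iteration2}, which decomposes $\Delta_{t+1}$ into a geometrically decaying initial term $(1-\lambda)^{t+1}\Delta_0$, a deterministic ``signal'' drift term $\gamma\lambda\sum_{i=0}^t(1-\lambda)^{t-i}\frac{1}{K}\sum_k(\sigma^k(V^*)-\sigma^k(V_i^k))$, and a stochastic martingale term $\gamma\lambda\sum_{i=0}^t(1-\lambda)^{t-i}\frac{1}{K}\sum_k(P^kV_i^k-P_i^kV_i^k)$. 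Taking the $\ell_\infty$-norm and bounding the first term by $(1-\lambda)^{t+1}/(1-\gamma)$ via \Cref{lm: coarse bound2} reduces the problem to controlling the latter two terms.

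Next I would treat the deterministic drift term exactly as in \Cref{theo: ce-avg}: split the sum at the index $\chi(t)-\beta E$, bound the tail geometrically, and apply \Cref{lm: sample complexity: stepsize push2} to each inner-epoch summand to express it through $\|\Delta_{\chi(i)}\|$ and the per-agent local errors $\|\Delta_{t'}^k\|$. Substituting the local-error bound of \Cref{lm: sample complexity: stepsize push: alternative2} and using the step-size/epoch conditions $E-1\le\frac{1-\gamma}{4\gamma\lambda}$ and $\lambda\le\frac{1}{E}$ collapses everything into a single-step contraction of the form $\|\Delta_{t+1}\|\le\frac{1+\gamma}{2}\max_{\chi(t)-\beta E\le i\le t}\|\Delta_{\chi(i)}\|+\rho$, where $\rho$ aggregates the $(1-\lambda)^{\beta E}$ term and the $\Gamma$-dependent drift $\frac{\gamma^2}{1-\gamma}(6\lambda^2(E-1)^2+\lambda(E-1))\Gamma$.

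The genuinely new step, and the one I expect to be the main obstacle, is bounding the stochastic martingale term. Since each increment $\frac{1}{K}\sum_k(P^k-P_i^k)V_i^k$ is a martingale difference with respect to the natural filtration and $\|V_i^k\|\le\frac{1}{1-\gamma}$ by \Cref{lm: coarse bound2}, I would apply an Azuma--Hoeffding inequality to the geometrically weighted sum, together with a union bound over the $|\mcs||\mca|$ state--action pairs (and over iterations and agents as needed). The care here is twofold: the averaging over $K$ independent source estimators should shrink the variance to yield the partial linear speedup in $K$, and the weights $(1-\lambda)^{t-i}$ must be summed to produce the $\sqrt{\lambda\log(SATK/\delta)}$ scale that already appears in \Cref{lm: sample complexity: stepsize push: alternative2}. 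This contributes, with probability at least $1-\delta$, an additional high-probability term of order $\frac{\gamma}{1-\gamma}\sqrt{\lambda\log\frac{SATK}{\delta}}$ to $\rho$.

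Finally I would unroll the contraction $L$ times, choosing $\beta=\lfloor\frac{1}{E}\sqrt{\frac{(1-\gamma)T}{2\lambda}}\rfloor$, $L=\lceil\sqrt{\frac{\lambda T}{1-\gamma}}\rceil$, and $t+1=T$ as in \Cref{theo: ce-avg}, so that the accumulated $\rho$ is controlled by $\frac{2}{1-\gamma}\rho$ and the transient decays like $\exp\{-\frac12\sqrt{(1-\gamma)\lambda T}\}$. Collecting the deterministic contributions yields exactly the claimed bound $\frac{4}{(1-\gamma)^2}\exp\{-\frac12\sqrt{(1-\gamma)\lambda T}\}+\frac{2\gamma^2}{(1-\gamma)^2}(6\lambda^2(E-1)^2+\lambda(E-1))\Gamma$; the concentration term from the previous step, after plugging in $\lambda$ and $E$, is what upgrades this to the informal high-probability rate $\tilde{\mathcal{O}}(\frac{(E-1)\Gamma}{T(1-\gamma)^3}+\frac{\log(SATK/\delta)}{(1-\gamma)^3\sqrt{TK}})$.
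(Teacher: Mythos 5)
Your proposal follows essentially the same route as the paper's proof: the same error-iteration decomposition from \Cref{lm: error iteration2}, the same epoch-splitting of the drift term via \Cref{lm: sample complexity: stepsize push2} and \Cref{lm: sample complexity: stepsize push: alternative2}, the same single-step contraction and $L$-fold unrolling with the identical choices of $\beta$ and $L$, and the same Hoeffding/Azuma treatment of the martingale noise exploiting the $1/K$ averaging. You also correctly observe that the stated bound captures only the deterministic contributions while the stochastic term adds an extra $\sqrt{\log(SATK/\delta)}$ piece --- which is exactly what the paper's own derivation produces (and carries into \Cref{cor: rate2}) even though the theorem statement omits it.
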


\begin{proof}
By Lemma \ref{lm: error iteration2}, 
    \begin{align*}
\|\Delta_{t+1} \|
&\leq (1-\lambda)^{t+1}\|\Delta_0\| + \gamma\lambda\|\sum_{i=0}^t (1-\lambda)^{t-i}\frac{1}{K} \sum_{k=1}^K (\sigma^k(V^*) - \sigma^k(V_i^k))\|\\
        &\quad+ \gamma\lambda\|\sum_{i=0}^t (1-\lambda)^{t-i}\frac{1}{K} \sum_{k=1}^K (P^kV^k_i-P^k_iV^k_i)\|. 
    \end{align*}

Since $0\le Q_0(s,a)\le \frac{1}{1-\gamma}$, the first term can be bounded as 
\begin{align}
\label{eq: bound: term 12}
 (1-\lambda)^{t+1}\|{\Delta_0}\| \leq  (1-\lambda)^{t+1}\frac{1}{1-\gamma}. 
\end{align}

To bound the term $\|\gamma\lambda\sum_{i=0}^t (1-\lambda)^{t-i}\frac{1}{K} \sum_{k=1}^K (\sigma^k(V^*)-\sigma^k(V_i^k))\|$, we divide the summation into two parts as follows.  
For any $\beta E \le t \le T$, 
we have  
\begin{align*}
&\sum_{i=0}^t(1-\lambda)^{t-i}\lambda \gamma \|{\frac{1}{K}\sum_{k=1}^K (\sigma^k(V^*)-\sigma^k(V_i^k))}\| \\
& = \sum_{i=0}^{\chi(t)-\beta E} (1-\lambda)^{t-i}\lambda \gamma \|{\frac{1}{K}\sum_{k=1}^K\sigma^k(V^*)-\sigma^k(V_i^k))} \|\nn\\
&\quad + \sum_{i= \chi(t)-\beta E+1}^{t}(1-\lambda)^{t-i}\lambda \gamma \|{\frac{1}{K}\sum_{k=1}^K\sigma^k(V^*)-\sigma^k(V_i^k))} \|\\
& \leq \frac{\gamma}{1-\gamma} (1-\lambda)^{t-\chi(t)+\beta E}
+ \sum_{i= \chi(t)-\beta E+1}^{t}(1-\lambda)^{t-i}\lambda \gamma \|{\frac{1}{K}\sum_{k=1}^K\sigma^k(V^*)-\sigma^k(V_i^k))} \|. 
\end{align*}
By Lemma \ref{lm: sample complexity: stepsize push2},
\begin{align}
&\|{\frac{1}{K}\sum_{k=1}^K (\sigma^k(V^*)-\sigma^k(V^k_t))}\|\nn\\
&\leq     \|{\Delta_{\chi(t)}}\| + 2\lambda \frac{1}{K}\sum_{k=1}^K\sum_{t^{\prime} = \chi(t)}^{t-1} \|{\Delta_{t^{\prime}}^k}\|  +  \gamma \lambda \frac{1}{K}\sum_{k=1}^K \max_{s, a} |\sum_{t^{\prime} = \chi(t)}^{t-1} (\sigma^k_{s,a})_{t'}(V^*)-\bar{\sigma}_{s,a}(V^*)|,\nn
\end{align}
hence
\begin{align*}
&\sum_{i= \chi(t)-\beta E+1}^{t}(1-\lambda)^{t-i}\lambda \gamma \|{\frac{1}{K}\sum_{k=1}^K\sigma^k(V^*)-\sigma^k(V_i^k))} \| \\ 
&\leq \sum_{i= \chi(t)-\beta E+1}^{t}(1-\lambda)^{t-i}\lambda \gamma  
\bigg(\|{\Delta_{\chi(i)}}\| + 2\lambda \frac{1}{K}\sum_{k=1}^K\sum_{t^{\prime} = \chi(t)}^{t-1} \|{\Delta_{t^{\prime}}^k}\| \\
& \quad +  \gamma \lambda \frac{1}{K}\sum_{k=1}^K \max_{s, a} |\sum^{i-1}_{j=\chi(i)}{ (\sigma^k_{s,a})_j (V^*)-\bar{\sigma}_{s,a}(V^*)}|\bigg)\\
&\leq  \sum_{i= \chi(t)-\beta E+1}^{t}(1-\lambda)^{t-i}\lambda \gamma  
\bigg(\|{\Delta_{\chi(t)}}\| + 2\lambda \frac{1}{K}\sum_{k=1}^K\sum_{t^{\prime} = \chi(t)}^{t-1} \|{\Delta_{t^{\prime}}^k}\|  +  \gamma \lambda \Gamma \frac{E-1}{1-\gamma}+\frac{1}{1-\gamma}\sqrt{(E-1)\log\frac{SAKT}{\delta}}\bigg).
\end{align*}

By Lemma \ref{lm: sample complexity: stepsize push: alternative2}, we have that
\begin{align*}
&\sum_{i= \chi(t)-\beta E+1}^{t}(1-\lambda)^{t-i}\lambda \gamma  2\lambda \frac{1}{K}\sum_{k=1}^K\sum_{j = \chi(i)}^{i-1} \|{\Delta_{j}^k}\| \\ 
& \leq 2\lambda^2 \gamma \sum_{i= \chi(t)-\beta E+1}^{t}(1-\lambda)^{t-i} \frac{1}{K}\sum_{k=1}^K \sum_{j = \chi(i)}^{i-1}
\left(\|\Delta_{\chi(j)}\| + \frac{3\gamma}{1-\gamma}\lambda (E-1)\Gamma+\frac{6\gamma}{1-\gamma}\sqrt{\lambda\log\frac{SATK}{\delta}}\right) \\
& \le 2\lambda \gamma (E-1) \max_{\chi(t)-\beta E\le i\le t} \|{\Delta_{\chi(i)}}\| 
+ \frac{6\gamma^2\lambda^2}{1-\gamma}(E-1)^2\Gamma+\frac{12\lambda}{1-\gamma}(E-1)\sqrt{\lambda\log\frac{SATK}{\delta}}. 
\end{align*}
Similarly, we get the following recursion holds for all rounds $T$:
\begin{align*}
&\|{\Delta_{t+1}} \|\nn\\
& \le \gamma(1+2\lambda(E-1)) \max_{\chi(t)-\beta E\le i\le t} \|{\Delta_{\chi(i)}}\| + \frac{2}{1-\gamma} (1-\lambda)^{\beta E} + \frac{\gamma^2}{1-\gamma} (6\lambda^2(E-1)^2+\lambda(E-1))\Gamma\nn\\
&\quad+C\left(\frac{\lambda\sqrt{E-1}}{1-\gamma}+\frac{\lambda^{1.5}(E-1)}{1-\gamma}+\frac{1}{1-\gamma}\sqrt{\frac{\lambda}{K}}\right)\sqrt{\log\frac{SATK}{\delta}}. 
\end{align*}
Similarly, choosing $ \beta =\lfloor{\frac{1}{E}\sqrt{\frac{(1-\gamma)T}{2\lambda}}}\rfloor$, $L=\lceil{\sqrt{\frac{\lambda T}{1-\gamma}}}\rceil$, $t+1 = T$, we get 
\begin{align*}
&\|{\Delta_{T}} \|\\
&\leq \frac{4}{(1-\gamma)^2} \exp\{-\frac{1}{2}\sqrt{(1-\gamma)\lambda T}\}
+ \frac{2\gamma^2}{(1-\gamma)^2} (6\lambda^2(E-1)^2+\lambda(E-1)) \Gamma\nn\\
&\quad+ C'\left(\frac{({E-1})\lambda^{1.5}}{(1-\gamma)^2}+\frac{\sqrt{E-1}{\lambda}}{(1-\gamma)^2} +\frac{\sqrt{\lambda}}{(1-\gamma)^2\sqrt{K}}\right)\sqrt{\log\frac{SATK}{\delta}}.
\end{align*}
\end{proof}

\begin{corollary}
\label{cor: rate2}
Choose $(E-1) \le \min \frac{1}{\lambda}\{\frac{\gamma}{1-\gamma}, \frac{1}{K}\}$, and $\lambda = \frac{4\log^2(TK)}{T(1-\gamma)}$. Let $T\ge E$. Then with probability at least $1-\delta$ it holds that
\begin{align*}
\|{\Delta_{T}}\|
&\leq \frac{C_1}{(1-\gamma)^2 TK}  
+ \frac{C_2\log^2(TK)}{(1-\gamma)^3} \frac{E-1}{T} \Gamma+\frac{C_3}{(1-\gamma)^3}\frac{\log(TK)}{\sqrt{TK}}\sqrt{\log\frac{SATK}{\delta}}.  
\end{align*}
\end{corollary}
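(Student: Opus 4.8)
The plan is to obtain \Cref{cor: rate2} as a direct specialization of \Cref{thm: sample complexity: synchronous2}, which already supplies the bound $\|\Delta_T\|\le \frac{4}{(1-\gamma)^2}\exp\{-\frac12\sqrt{(1-\gamma)\lambda T}\}+\frac{2\gamma^2}{(1-\gamma)^2}(6\lambda^2(E-1)^2+\lambda(E-1))\Gamma+C'(\cdots)\sqrt{\log\frac{SATK}{\delta}}$. First I would verify that the prescribed choices $\lambda=\frac{4\log^2(TK)}{T(1-\gamma)}$ and $(E-1)\le\min\frac{1}{\lambda}\{\frac{\gamma}{1-\gamma},\frac{1}{K}\}$ satisfy the hypotheses $E-1\le\frac{1-\gamma}{4\gamma\lambda}$ and $\lambda\le\frac1E$ of the theorem for $T$ large enough, which holds since $\lambda=\frac{4\log^2(TK)}{T(1-\gamma)}\to 0$ and $T\ge E$; hence the theorem bound is applicable and it only remains to simplify its three groups of terms.

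For the deterministic bias term I would substitute the step size to get $(1-\gamma)\lambda T=4\log^2(TK)$, so that $\exp\{-\frac12\sqrt{(1-\gamma)\lambda T}\}=\exp\{-\log(TK)\}=\frac{1}{TK}$, turning the first term into $\frac{C_1}{(1-\gamma)^2 TK}$. For the $\Gamma$-dependent term I would use $\lambda(E-1)\le\frac1K$ to absorb the quadratic piece, namely $6\lambda^2(E-1)^2=6\,\lambda(E-1)\cdot\lambda(E-1)\le\frac{6}{K}\lambda(E-1)$, so it is of the same order as $\lambda(E-1)$ up to constants; substituting $\lambda$ then gives $\lambda(E-1)\Gamma=\frac{4\log^2(TK)}{1-\gamma}\frac{(E-1)\Gamma}{T}$, which produces the $\frac{C_2\log^2(TK)}{(1-\gamma)^3}\frac{(E-1)\Gamma}{T}$ term.

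For the stochastic term I would show that each $(E-1)$-dependent piece is dominated by $\frac{\sqrt\lambda}{(1-\gamma)^2\sqrt K}$: writing $\sqrt{E-1}\,\lambda=\sqrt{\lambda(E-1)}\,\sqrt\lambda\le\frac{\sqrt\lambda}{\sqrt K}$ and $(E-1)\lambda^{3/2}=\lambda(E-1)\,\sqrt\lambda\le\frac{\sqrt\lambda}{K}\le\frac{\sqrt\lambda}{\sqrt K}$, both following from $\lambda(E-1)\le\frac1K$ and $K\ge1$. Evaluating $\sqrt\lambda=\frac{2\log(TK)}{\sqrt{(1-\gamma)T}}$ gives $\frac{\sqrt\lambda}{(1-\gamma)^2\sqrt K}=\frac{2\log(TK)}{(1-\gamma)^{5/2}\sqrt{TK}}$, and using $1-\gamma\le1$ to replace $(1-\gamma)^{-5/2}$ by $(1-\gamma)^{-3}$ yields the final $\frac{C_3}{(1-\gamma)^3}\frac{\log(TK)}{\sqrt{TK}}\sqrt{\log\frac{SATK}{\delta}}$ term. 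Summing the three simplified contributions produces the claimed inequality.

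The argument is essentially bookkeeping once \Cref{thm: sample complexity: synchronous2} is in hand, so the main obstacle lies not in this corollary but in establishing that theorem through the error-iteration and local-drift lemmas (\Cref{lm: error iteration2}, \Cref{lm: sample complexity: stepsize push2}, \Cref{lm: sample complexity: stepsize push: alternative2}). The only genuinely nontrivial check within the corollary itself is confirming that every lower-order, $(E-1)$-dependent stochastic term collapses into the single $\frac{\sqrt{\lambda/K}}{(1-\gamma)^2}$ term under the constraint $\lambda(E-1)\le\frac1K$, which is precisely what the prescribed communication regime for $E$ guarantees.
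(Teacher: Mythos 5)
Your proposal is correct and is essentially the proof the paper intends: the corollary is stated without proof as a direct specialization of \Cref{thm: sample complexity: synchronous2}, and your three simplifications (the exponential term collapsing to $\frac{1}{TK}$ via $(1-\gamma)\lambda T=4\log^2(TK)$, absorbing the quadratic $\lambda^2(E-1)^2$ piece via $\lambda(E-1)\le\frac1K$, and dominating both $(E-1)$-dependent stochastic pieces by $\sqrt{\lambda/K}$) are exactly the required bookkeeping. The only caveat, inherited from the paper itself rather than introduced by you, is that the corollary's condition $(E-1)\le\min\frac{1}{\lambda}\{\frac{\gamma}{1-\gamma},\frac{1}{K}\}$ does not literally imply the theorem's hypothesis $E-1\le\frac{1-\gamma}{4\gamma\lambda}$ when $\gamma$ is close to $1$ and $K$ is small, so the verification step you sketch deserves more care than "for $T$ large enough."
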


\end{document}